\DeclareMathOperator*{\argmin}{arg\,min}
\DeclareMathOperator*{\mari}{MARI}
\title{On latent position inference from doubly stochastic messaging activities}
\author{
Nam H.\ Lee\footnotemark[2]\ \footnotemark[3] \and 
Jordan Yoder\footnotemark[2] \footnotemark[6] \and
Minh Tang\footnotemark[2]\ \footnotemark[4] \and 
Carey E. Priebe\footnotemark[2]\ \footnotemark[5]
}
\begin{document}

\maketitle
\renewcommand{\thefootnote}{\fnsymbol{footnote}}
\footnotetext[2]{Department of Applied Mathematics and Statistics,  
Johns Hopkins University, Baltimore, MD 21218, USA.}

\footnotetext[3]{nhlee@jhu.edu}
\footnotetext[6]{jyoder6@jhu.edu}
\footnotetext[4]{mtang10@jhu.edu}
\footnotetext[5]{cep@jhu.edu}

\renewcommand{\thefootnote}{\arabic{footnote}}

\begin{abstract}
We model messaging activities as a hierarchical doubly stochastic point
process with three main levels, and develop an iterative algorithm for
inferring actors' relative latent positions from a stream of messaging
activity data.  Each of the message-exchanging actors is modeled as a process in a
latent space.  The actors' latent positions are assumed to be influenced by
the distribution of a much larger population over the latent space.  Each
actor's movement in the latent space is modeled as being governed by two
parameters that we call confidence and visibility, in addition to dependence on
the population distribution.  The messaging frequency between a pair of actors
is assumed to be inversely proportional to the distance between their latent
positions.  Our inference algorithm is based on a projection approach to an
online filtering problem.  The algorithm associates each actor with a
probability density-valued process, and each probability density is assumed
to be a mixture of basis functions.  For efficient numerical experiments, we
further develop our algorithm for the case where the basis functions are
obtained by translating and scaling a standard Gaussian density.
\end{abstract}

\begin{keywords} 
Social network;  Multiple doubly stochastic processes; Classification; Clustering
\end{keywords}

\begin{AMS}
62M0, 60G35, 60G55
\end{AMS}

\pagestyle{myheadings}
\thispagestyle{plain}
\markboth{N.\ H.\ Lee, J.\ Yoder, M.\ Tang and C.\ E.\ Priebe}{On latent
position inference from doubly stochastic messaging activities}


\section{Introduction}
Communication networks are presenting ever-increasing challenges in a wide
range of applications, and there is great interest in inferential methods for
exploiting the information they contain.  A common source of such data is a
corpus of time-stamped messages such as e-mails or SMS (short message service).
Such messaging data is often useful for inferring a social structure of the
community that generates the data.  In particular, messaging data is an asset
to anyone who would like to cluster actors according to their
\emph{similarity}.  A practitioner is often privy to messaging data in a
\emph{streaming} fashion, where the word \emph{streaming} describes a
practical limitation, as the practitioner might be privy only to the incoming
data in a fixed summarized form without any possibility to retrieve past
information.  It is in the practitioner's interest to transform the
summarized data so that the transformed data is appropriate for detecting
\emph{emerging} social trends in the source community.

We mathematically model such streaming data as a collection of tuples of the
form $\mathcal{D} = \{(t_\ell, {i_\ell,j_\ell})\}$ of time and actors, where
$i_\ell$ and $j_\ell \in \{1,\ldots, n\}$ represent actors exchanging the
$\ell$-th message and $t_\ell \in \mathbb{R}_{+}$ represents the occurrence
time of the $\ell$-th message.  There are many models suitable for dealing
with such data.  The most notable are the Cox hazard model, the doubly
stochastic process (also known as the Cox process), and the self-exciting
process (although self-exciting processes are sometimes considered as special
cases of the Cox hazard model).  For references on these topics, see
\cite{Andersen1995}, \cite{DonaldSnyder75} and \cite{Bremaud1981}.  
All three models are related to each other; however, the distinctions are
crucial to statistical inference as they stem from different assumptions on
information available for (online) inference.  To transform $\mathcal D$ data
to a data representation more suitable for clustering actors, we model
$\mathcal D$ as a (multivariate) doubly stochastic process, and develop a
method for embedding $\mathcal D$ as a stochastic process taking values in
$\mathbb R^d$ for some suitably chosen $d \in \mathbb N$. 

\section{Related works}
For statistical inference when there is information available beyond
$\mathcal D$, the Cox-proportional hazard model is a natural choice.  In
\cite{Heard2010} and \cite{PerryWolfe2010}, for instance, instantaneous
intensity of messaging activities between each pair of actors is assumed to be
a function of, in the language of generalized linear model theory, known
covariates with unknown regression parameters.  More specifically, in
\cite{Heard2010}, the authors consider a model where $\lambda_{ij}(t) =
A_{ij}(t) (B_{ij}(t) + 1)$ with $A_{ij}(t)$ and $B_{ij}(t)$ representing
independent counting processes, e.g., $A_{ij}(t)$ are Bernoulli random
variables and $B_{ij}(t)$ are random variables from the exponential family. On
the other hand, in \cite{PerryWolfe2010}, a Cox multiplicative model was
considered where $\lambda_{ij}(t) = \xi_{i}(t) \exp\{\beta_{0}^{T}
x_{ij}(t)\}$. The model in \cite{PerryWolfe2010} posits that actor $i$
interacts with actor $j$ at a baseline rate $\xi_{i}$ modulated by the pair's
covariate $x_{ij}$ whose value at time $t$ is known and $\beta_0$ is a common
parameter for all pairs.  In \cite{PerryWolfe2010}, it is shown under some
mild conditions that one can estimate the global parameter $\beta_0$
consistently.  In \cite{StomakhinShortBertozzi}, the intensity is modeled for
\emph{adversarial} interaction between \emph{macro} level groups, and a
problem of nominating unknown participants in an event as a missing data
problem is entertained using a self-exciting point process model.  In
particular, while no explicit intensity between a pair of actors (gang
members) is modeled, the event intensity between a pair of groups (gangs) is
modeled, and the spatio-temporal model's chosen intensity process is
self-exciting in the sense that each event can affect the intensity
process.

When data $\mathcal D$ is the only information at hand, a common approach is
to construct a time series of (multi-)graphs to model association among
actors.  For such an approach, a simple method to obtain a time series of
graphs from $\mathcal{D}$ is to ``pairwise threshold'' along a sequence of
non-overlapping time intervals.  That is, given an interval, for each pair of
actors $i$ and $j$, an edge between vertex $i$ and vertex $j$ is formed if the
number of messaging events between them during the interval exceeds a certain
threshold.  This is the approach taken in
\cite{cortes03:_comput,eckmann04:_entrop,adamic05:_how}, \cite{JDLeeMaggioni}
and \cite{RanolaLange2010}, to mention just  a few examples.  The resulting
graph representation is often thought to capture the structure of  some
underlying social dynamics.  However, recent empirical research, e.g.,
\cite{DeChoudhury2010}, has begun to challenge this approach by noting that
changing the thresholding parameter can produce dramatically different graphs.  

Another useful approach when $\mathcal D$ is the only information available is
to use a doubly stochastic process model in which count processes are driven
by latent factor processes.  This is the approach taken explicitly in
\citet{LP-SISP2011} and \citet{TPLP2011}, and this is also done implicitly in
\cite{ChiKolda2012}.  In \citet{LP-SISP2011} and \citet{TPLP2011} interactions
between actors are specified by proximity in their latent positions;
the closer two actors are to each other in their latent configuration, the
more likely they exchange messages.  Using our model, we consider a problem of
clustering actors ``online'' by studying their messaging activities.  This
allows us a more geometric approach afforded by embedding $O(n^2)$ data to an
$O(n\times d)$ representation for some fixed dimension $d$.

In this paper, we propose a useful mathematical formulation of the problem as a
filtering problem based on both a multivariate point process observation
and a population latent position distribution.

\section{Notation}
As a convention, we assume that a vector is a column vector if its dimension
needs to be disambiguated. We denote by $\mathcal F_t$ the filtration up to
time $t$ that models the information generated by undirected
communication activities between actors in the community, where ``undirected''
here means we do not know which actor is the sender and which is the receiver.
We denote by $\mathcal M_1(\mathbb R^d)$ the space of probability measures on
$\mathbb R^d$.  For a probability density function defined on $\mathbb R^d$, $\phi(x;c, s)$
denotes the probability density function that is proportional to
$\phi(s^{-1}(x - c))$ where the normalizing constant does not depend on $x$.   
The set of all $r\times c$ matrices over the
reals is denoted by $\mathbb M_{r,c}$.  For each $k_1\times k_2$ matrix $M \in
\mathbb M_{k_1,k_2}$, we write $\|M\|_F = (\sum_{r=1}^{k_1} \sum_{c=1}^{k_2}
M_{rc}^2)^{1/2}$. Given a vector $v \in \mathbb R^d$, we write $\|v\|$ for 
its Euclidean norm.  Let $\mathbb R_+= (0,\infty)$ and $\mathbb M_{k_1,k_2}^+ :=
\{ M \in \mathbb M_{k_1,k_2}: M_{r,c} \ge 0\}$.  For each $M_1$ and $M_2 \in
\mathbb M_{k_1,k_2}$, we write $M_1 * M_2$ for  the Hadamard product of $M_1$
and $M_2$, i.e., $*$ denotes component-wise multiplication.  Given vectors
$v_1,\ldots,v_n$ in $\mathbb R^d$, the Gram matrix of the ordered collection
$v=(v_1,\ldots,v_n)$ is the $d\times d$ matrix $G$ such that its $(r,c)$-
entry $G_{r,c}$ is the inner product $\langle v_r, v_c \rangle = v_r^\top v_c$
of $v_r$ and $v_c$.  Given a matrix $M \in \mathbb M_{d,d}$, $\diag(M)$ is the
column vector whose $k$-th entry is the $k$-th diagonal element of $M$.  With
a slight abuse of notation, given a vector $v \in \mathbb R^d$, we will also
denote by $\diag(v)$ the $d\times d$ diagonal matrix such that its $k$-th
diagonal entry is $v_k$.  We always use $n$ for the number of actors under
observation and $d$ for the dimension of the latent space.  We denote by
$\mathbb X$ the $n$-fold product $\mathbb R^d \times \cdots \times \mathbb
R^d$ of $\mathbb R^d$. An element of $\mathbb X$ will be written in bold face
letters, e.g.~$\bm x \in \mathbb X$.  Similarly, bold faced letters will
typically be used to denote objects associated with the $n$ actors
collectively.  An exception to this convention is the identity matrix which is
denoted by $\bm I_d=\bm I$, where the dimension is specified only if needed
for clarification. Also, we write $\bm 1$ as the column matrix of ones.  With
a bit of abuse of notation, we also write $\bm 1$ for an indicator function,
and when confusion is possible, we will make our meaning clear. 

\section{Hierarchical Modeling}
Our actors under observation are assumed to be a subpopulation of a bigger
population. That is, we observe  actors $\{1,\ldots,n\}$ that are sampled from
a population for a longitudinal study.  We are not privy to the
actors' latent features that determine the frequency of pairwise messaging
activities, but we do observe messaging activities $\mathcal{D}_t =\{
(t_\ell,i_\ell,j_\ell): t_\ell \le t\}$.  A notional illustration of our
approach thus far is summarized in Figure \ref{fig:hierarchalmodeldiagram},
Figure \ref{fig:subpop_historgrams}, and Figure \ref{fig:Kullback-Leibler
divergence - Simulation}.  In both Figure \ref{fig:subpop_historgrams} and
Figure \ref{fig:Kullback-Leibler divergence - Simulation}, $\tau_1$ represents
the (same) initial time when there was no cluster structure, and $\tau_2$ and
$\tau_3$ represent the emerging and fully developed latent position clusters
which represent the object of our inference task.  

\begin{figure}[!ht]
        \centering
        \includegraphics[height=0.4\textheight]{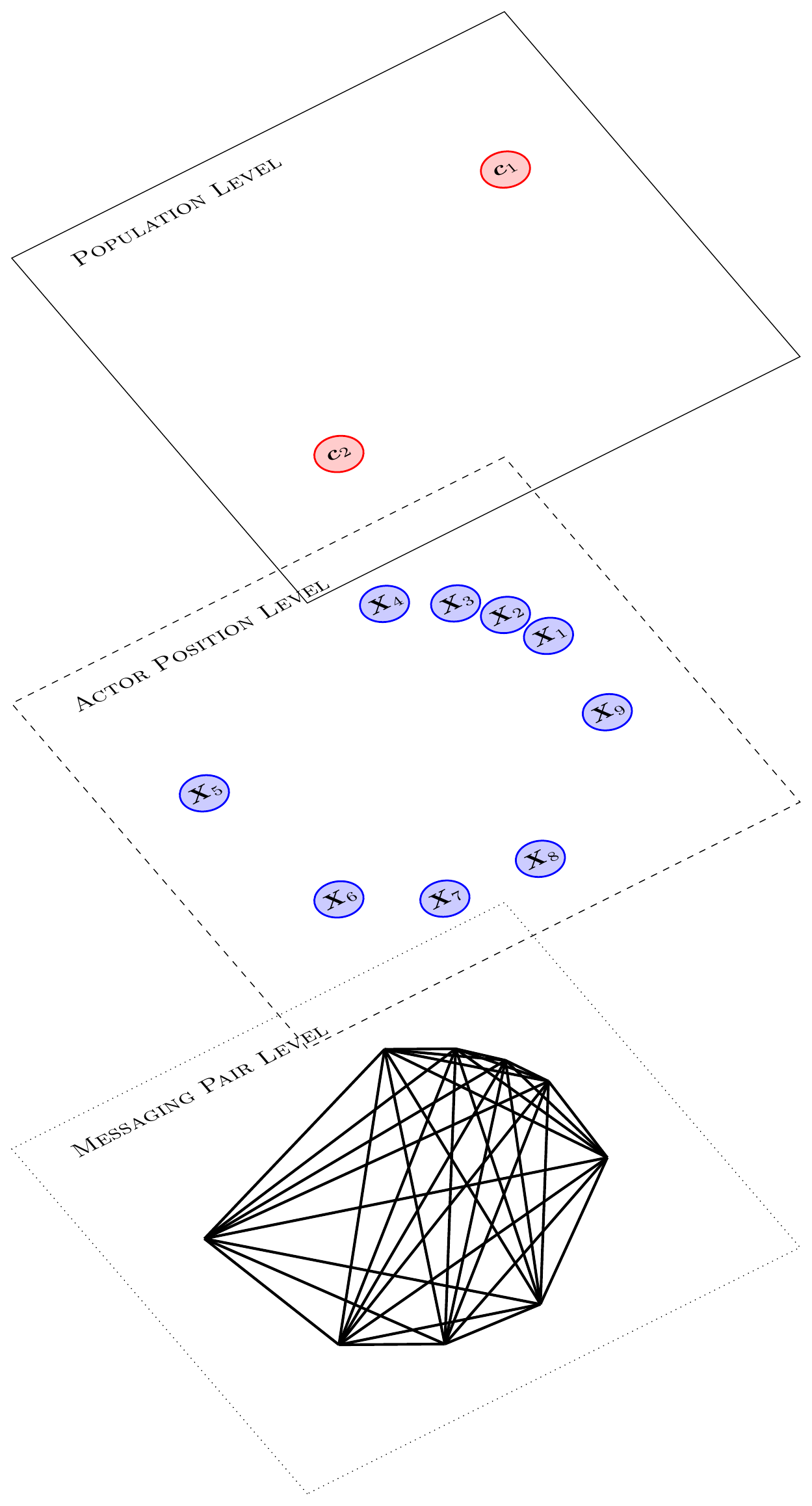}
    \caption{Hierarchical structure of the model.  The top, middle and bottom
    layers correspond to the population level, the actor level
    and the messaging level, respectively.  The top two levels, i.e., the population and the
    actor levels, are hidden and the bottom level, i.e., the messaging level, is 
    observed. See also Figure \ref{fig:hierarchalmodeldiagramwithparam} for
a more detailed diagram. }
    \label{fig:hierarchalmodeldiagram}
\end{figure}

\paragraph{Population density process level}
The message-generating actors are assumed to be members of a community, which
we call the population.  The aspect of the population that we model in this
paper is its members' distribution over a latent space in which the proximity
between a pair of actors determines the likelihood of the pair exchanging
messages.  The population distribution is to be time-varying and a mixture of
component distributions.

The latent space is assumed to be $\mathbb R^d$ for some $d \in \mathbb N$,
and the population distribution at time $t$ is assumed to have 
a continuous density $\mu_t$.  To be more precise, 
we assume that the (sample) path $t\rightarrow\mu_t$ is such 
that for each $y \in \mathbb R^d$,
\begin{align*}
    \mu_t(y) = \sum_{\ell} q_{t,\ell} \phi\left(  y;  c_{t,\ell}, \alpha_{t,\ell}\right),
\end{align*}
where
\begin{itemize}
    \item $q_{t,\ell}$ is a smooth \emph{sample} path of a stationary
        (potentially degenerate) diffusion process taking values in $(0,1)$,
    \item $\phi$ is a probability density function on $\mathbb R^d$ with
        convex support with its mean vector being the zero vector and its
        covariance matrix being a positive definite (symmetric)
        matrix, 
    \item $c_{t,\ell}$ is a smooth \emph{sample} path of an
        $\mathbb R^d$-valued (potentially non-stationary or degenerate)
        diffusion process,
    \item $\alpha_{t,\ell}$ is a smooth \emph{sample} path of a stationary
        (potentially degenerate) diffusion process taking values in
        $(0,\infty)$.
\end{itemize}
Note that it is implicitly assumed that $\sum_\ell q_{t,\ell} = 1$, and
additionally, we also assume that for each $k =1,\ldots, d$ and $m \in
\mathbb N$, the $m$-th moment $\langle \chi_k^m, \mu_t\rangle$ of the $k$-th 
coordinate of $\mu_t$ is finite, i.e., 
$\langle \chi_k^m, \mu_t \rangle := \int x_k^m \mu_t(x) dx < \infty$.

In this paper, we take $q_{t,\ell}, c_{t,\ell}$ and $\alpha_{t,\ell}$ as
exogenous modeling elements. However, for an example of a model with yet
further hierarchical structure, one \emph{could} take a cue from a continuous
time version of the classic ``co-integration'' theory, e.g., see
\cite{Comte1999}.  The idea is that the location $c_\ell$ of the $\ell$-th
center is non-stationary, but the inter-point distance between a combination
of the centers is stationary.  More specifically, one \emph{could} further
assume that there exist $d \times (d-d_0)$ matrix $\alpha_{\bot}$, $d \times
d_0$ matrix $\alpha$ and $d_0 \times d$ matrix $\beta$ such that 
\begin{itemize}
    \item $(\alpha_{\bot})^\top \alpha$ is the $(d-d_0)\times d_0$ dimensional zero matrix, 
    \item $(\alpha_{\bot})^\top c_{t,m}$ is a $(d-d_0)$ dimensional Brownian motion,
    \item $\beta^{\top} c_{t,m}$ is a stationary diffusion process.
\end{itemize}
Thus, the position of centers are unpredictable, but the relative distance
between each pair of centers are as predictable as that of a stationary
process. 
\begin{algorithm}
    \caption{Simulating a sample path of a population density process}
    \label{algo:PopulationProcess}
\begin{algorithmic}[1]
    \Require $((\bm q(t), \bm c(t), \bm \alpha(t)): t \in [0,T])$ and $\Delta t$
    \vskip0.1in
    \Procedure {PopulationProcess}{}
    \State $t \leftarrow 0$
    \While{$t < T$}
    \State $\mu_t(x) \leftarrow \sum_m {q_{t,m}} \phi(x ; c_{t,m}, \alpha_{t,m})$
        \State $t \leftarrow t + \Delta t$
    \EndWhile
    \EndProcedure
\end{algorithmic}
\end{algorithm}

\begin{figure}[!ht]
    \centering
    \begin{subfigure}[b]{0.3\textwidth}
        \centering
        \includegraphics[width=\textwidth]{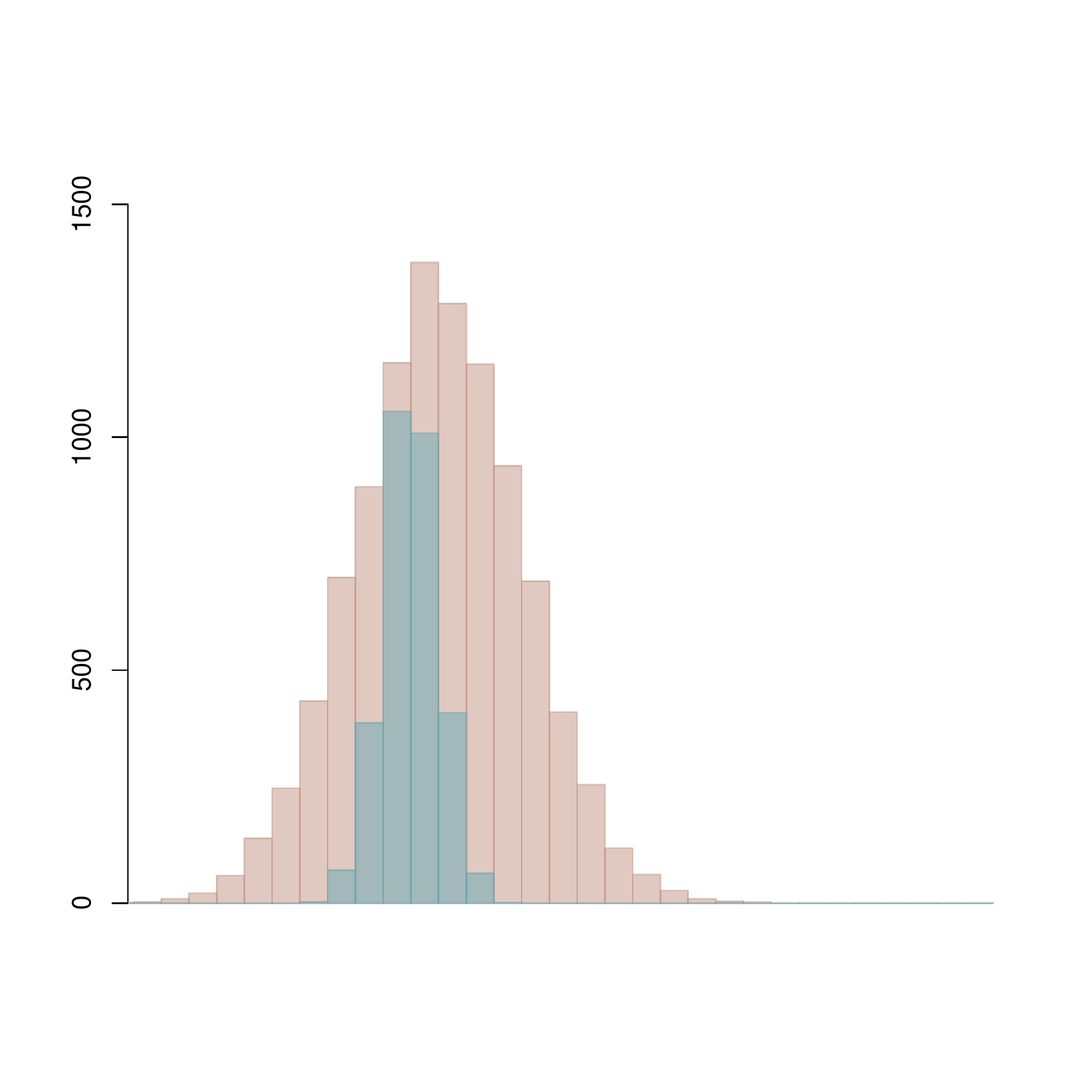}
        \caption{time $\tau_1$}
    \end{subfigure}
    \begin{subfigure}[b]{0.3\textwidth}
        \centering
        \includegraphics[width=\textwidth]{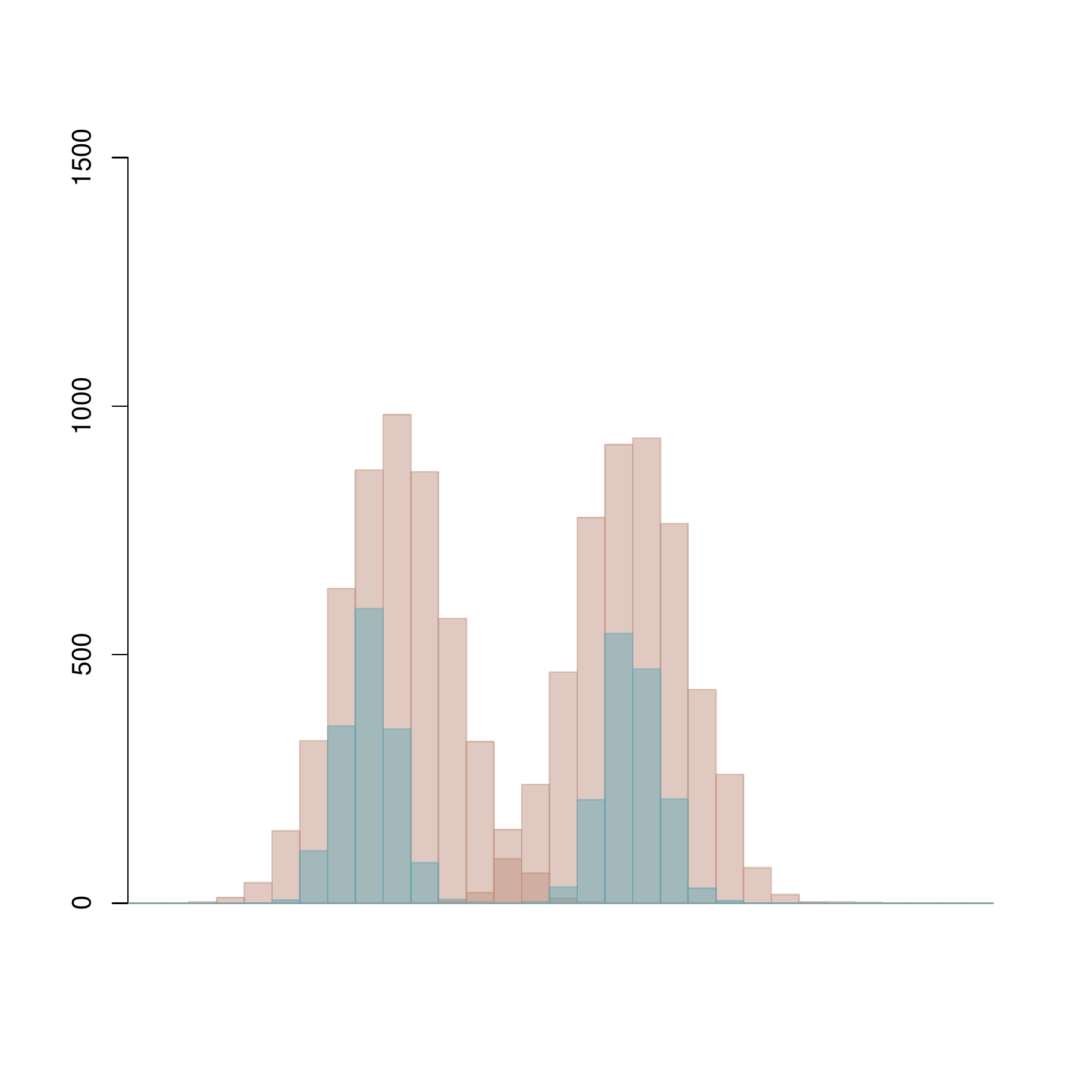}
        \caption{time $\tau_2$}
    \end{subfigure}
    \begin{subfigure}[b]{0.3\textwidth}
        \centering
        \includegraphics[width=\textwidth]{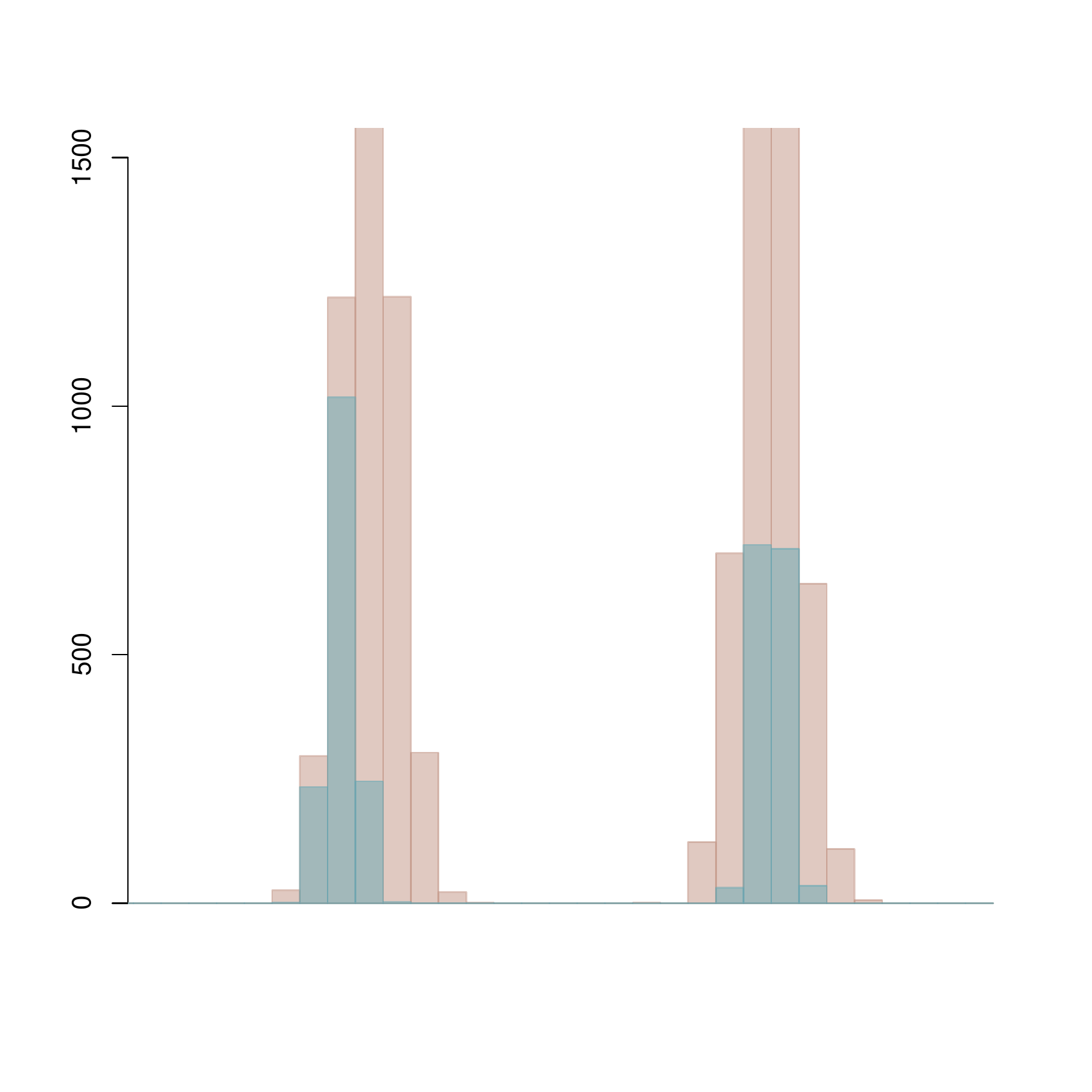}
        \caption{time $\tau_3$}
    \end{subfigure}
    \caption{A notional depiction of the evolution of the full population and
    subpopulation latent position distributions.  At each time $\tau_\ell$,
    the lightly-colored outer histogram represents the latent position
    distribution $\mu_t$ for the full population, and the darkly-colored inner
    histogram represents the distribution of the latent positions of actors
    under \emph{consideration}. The illustrated temporal order is $\tau_1 <
    \tau_2 < \tau_3$.
    }\label{fig:subpop_historgrams}
\end{figure}

\paragraph{Actor position process level}
Figure \ref{fig:subpop_historgrams} sketches the connection between actors and
populations.  We first define a process for a single actor.  To begin, for
each $t$, given $\mu_t$ and  $(\omega_t,\sigma_t) \in (0,1)\times (0,\infty)$,
we write
\begin{align*}
    \mathcal A_t f( x)
    = 
    \sum_{k=1}^d b_t^{k}( x) \frac{\partial }{\partial x_k} f( x) 
    +
    \sum_{k_1,k_2=1}^d a_t^{k_1,k_2}( x) \frac{\partial^2}{\partial x_{k_1}
    \partial x_{k_2}} f( x),
\end{align*} 
where
\begin{align*}
    \psi( z) &= \phi( z)/\phi( 0),\\
    b_t^{k}( x) &= 2(1-\omega_t)\int \psi\left(\dfrac{ y- x}{\sigma_t}\right) (y_k - x_k) \mu_t( y) d y,\\
    a_t^{k\ell}( x) &= (1-\omega_t)^2 \int \psi\left(\dfrac{ y-
    x}{\sigma_t}\right) (y_k- x_k)(y_\ell- x_\ell) \mu_t( y) d y.
\end{align*}
The formulation here for the $b_t^k$ and $a_t^{k\ell}$ is based on a quadratic
Taylor-series approximation of a so-called ``bounded confidence'' model
studied in \cite{GomezGrahamLeBoudec}.  Here, the value of $\omega_{t,i}$
represents the confidence level of actor $i$ on its current position and
$\sigma_{t,i}$ represents the visibility of other actors' position by actor
$i$.  Roughly speaking, an actor with a \emph{small} value of $\omega_{t,i}$ and
a \emph{large} value of $\sigma_{t,i}$ will be influenced \emph{greatly} by actors
that are positioned \emph{both near and far} in the latent space whereas an
actor with a \emph{large} value of $\omega_{t,i}$ and a \emph{small} value of
$\sigma_{t,i}$ will be influenced only a \emph{small} amount by actors that are
\emph{nearby} in the latent space. For further discussion on our motivation
for the form of $\mathcal A_t$, see Appendix \ref{appendix:motivationforA}.

For each actor $i$, the deterministic path $t \rightarrow
(\omega_{t,i},\sigma_{t,i})$ is assumed to be continuous, taking values in a
compact subset of $(0,1)\times (0,\infty)$.  It is assumed that given $t
\rightarrow \mu_t$, each actor's latent position process $X_i = (X_{t,i} :
t\in [0,T])$ is a diffusion process whose generator is $\mathcal A_t$, and
moreover we assume that $X_1,\ldots, X_n$ are mutually independent.  For each
$t$, let 
\begin{align*} 
    \bm X_t \equiv (X_{t,1},\ldots,X_{t,n})^\top,
\end{align*} 
where each $X_{t,i}$ is assumed to be a column vector, i.e., a
$d\times 1$ matrix. In other words, the $i$-th row of $\bm X_t$ is the
transpose of $X_{t,i}$.  

\begin{algorithm}[t!]
    \caption{Simulating a single actor's latent location process}
    \label{algo:LatentProcess}
\begin{algorithmic}[1]
    \Require $\Delta t$, $( (\omega_t,\sigma_t): t \in [0,T])$, and $(\mu_t: t \in [0,T])$
    \vskip0.1in
    \Procedure {LatentProcess}{}
    \State Compute $ b_t( x)$ and $ a_t( x)$
    \State Compute the non-negative definite symmetric square root $\sqrt{a_t( x)}$ 
    of $ a_t(x)$
    \State $t \leftarrow 0$
    \While{$t < T$}
        \State $\Delta W(t) \leftarrow$ {\sc StandardNormalVector}
        \State $X(t+\Delta t) = X(t) +  { b}_t(X(t)) \Delta t 
        + \sqrt{{ a}_t(X(t))} \Delta W(t) \sqrt{\Delta t}$
        \State $t \leftarrow t + \Delta t$
    \EndWhile
    \EndProcedure
\end{algorithmic}
\end{algorithm}

\paragraph{Messaging process level}
Denote by $N_{t,i \rightarrow j}$ the number of messages sent \emph{from}
actor $i$ \emph{to} actor $j$. Also, denote by $N_{t,ij}$ the number of
messages exchanged \emph{between} actor $i$ \emph{and} actor $j$.  
Note that $N_{t,ij} = N_{t,i\rightarrow j} + N_{j\rightarrow i}(t)$.  For each
actor $i$, we assume that the path $t \rightarrow \lambda_{t,i}$ is
deterministic, continuous and takes values in $(0,\infty)$.  For each $t$, we
assume that 
\begin{align*}
    \mathbb P[N_{t+dt,i\rightarrow j} = N_{t,i\rightarrow j} + 1 | \mathcal
    F_t, \bm X_s, s\le t]
    &= 
    (\lambda_{t,i} \lambda_{t,j}/2) p_{t,i\rightarrow j}(\bm X_{t})dt + o(dt).
\end{align*}
For our algorithm development and Experiment $1$ in Section
\ref{sec:NumericalExperiments}, we take 
\begin{align}
p_{t, i\rightarrow j}(\bm x)
:= p_{t, i\rightarrow j}(x_i)
:= \mathbb P[X_{t,j} = x_i| \mathcal F_t],
\end{align}
but for Experiment $2$ in Section \ref{sec:NumericalExperiments},
we take $p_{t,i\rightarrow j}(\bm x) =
\exp(-\|x_i-x_j\|^2)$.  
Next, by way of assumption, for each pair, say, actor
$i$ and actor $j$, we eliminate the possibility that both actor $i$ and actor
$j$ send messages concurrently to each other.  More specifically, we assume
that
\begin{align}
    &\mathbb P[N_{i j}(t+dt) = N_{i j}(t) + 1 | \mathcal F_t, X_{s,i}, X_{s,j}, s\le t ] \\
    &= 
    (\lambda_i(t) \lambda_j(t)/2)
    (p_{t,i\rightarrow j}(X_{t,i}) + p_{t,j \rightarrow i}(X_{t,j} )) dt 
    + o(dt).
\end{align}
For future reference, we let
\begin{align}
    &\lambda_{t,i\rightarrow j}(x) := (\lambda_{t,i} \lambda_{t,j}/2)
    p_{t,i\rightarrow j}(x),\\
    &\lambda_{t,ij} = \lambda_{t,i}\lambda_{t,j} \langle p_{t,i}, p_{t,j}
    \rangle.
\end{align}

\begin{algorithm}[!ht]
    \caption{Simulating messaging activities during a
    \emph{near-infinitesimally-small} time interval}
    \label{algo:MessagingActivities}
    \begin{algorithmic}[1]
        \Require $t \in \mathbb R_+$, $\Delta t \in \mathbb R_+$  and
        $\{ 
            (T_{ij}(t),\lambda_{ij}(t)) \in \mathbb R_+^2 : 1 \le i < j \le n
        \}$
        \vskip0.1in 
        \Procedure {MessagingActivities}{}
        \State $\ell \leftarrow 1$
            \For {$i \leftarrow 1,\ldots, n-1$}
                \For {$j \leftarrow (i+1),\ldots, n$}
                \State $T_{ij}(t+\Delta t) \leftarrow T_{ij}(t) - \lambda_{t, ij}\Delta t$
                \If {$T_{ij}(t+\Delta t) \le 0$} 
                    \State $\textsc{Messages}[\ell] \leftarrow (t,i,j)$
                    \State $T_{ij}(t+\Delta t) \leftarrow \textsc{UnitExponentialVariable}$
                    \State $\ell \leftarrow \ell + 1$
                    \EndIf
                \EndFor
            \EndFor 
            \State $t \leftarrow t + \Delta t$
        \EndProcedure 
    \end{algorithmic}
\end{algorithm}

\begin{figure}[!ht]
    \centering
    \begin{subfigure}[b]{0.3\textwidth}
        \centering
        {\includegraphics[width=\textwidth]{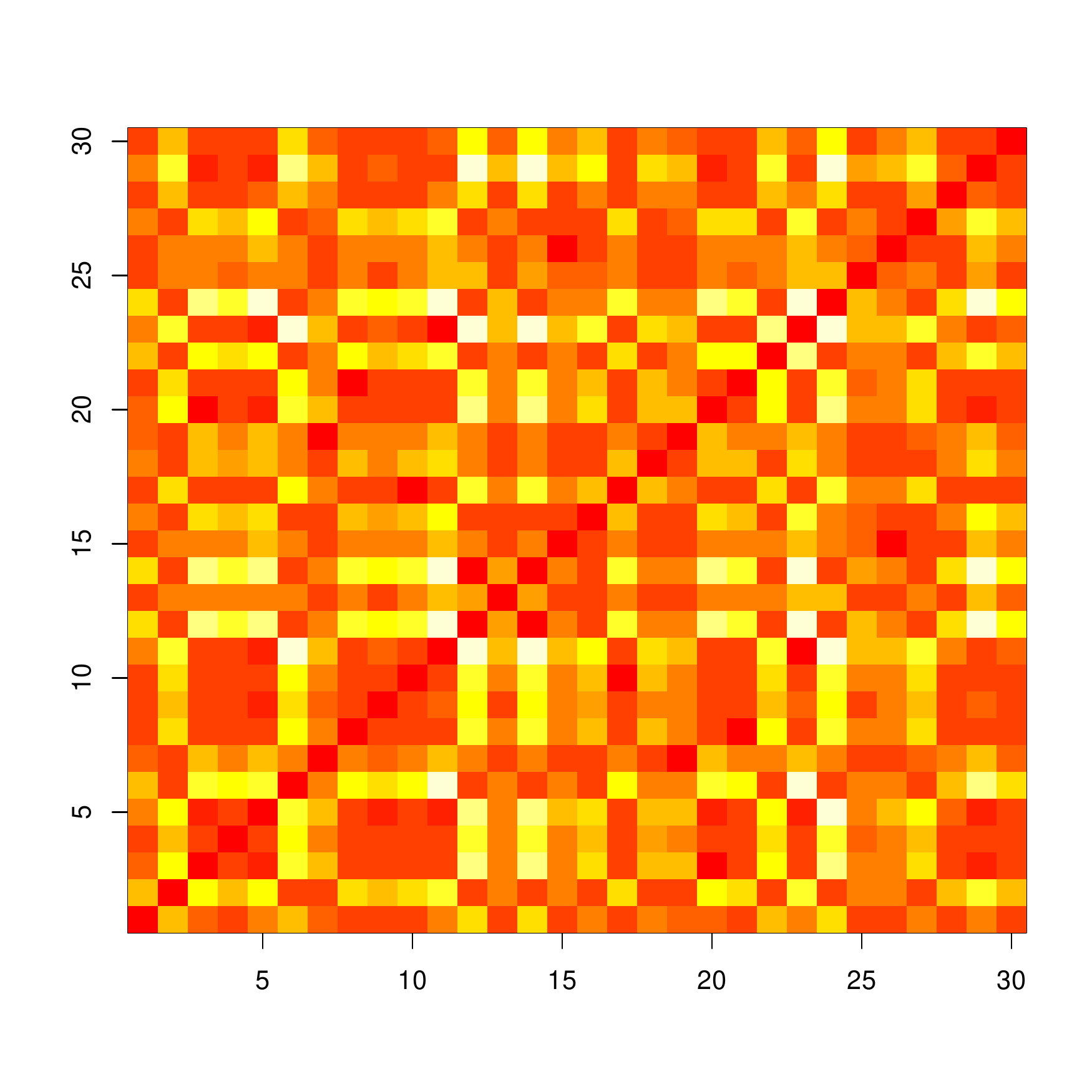}}
        \caption{time $\tau_1$}
    \end{subfigure}
    \begin{subfigure}[b]{0.3\textwidth}
        \centering
        {\includegraphics[width=\textwidth]{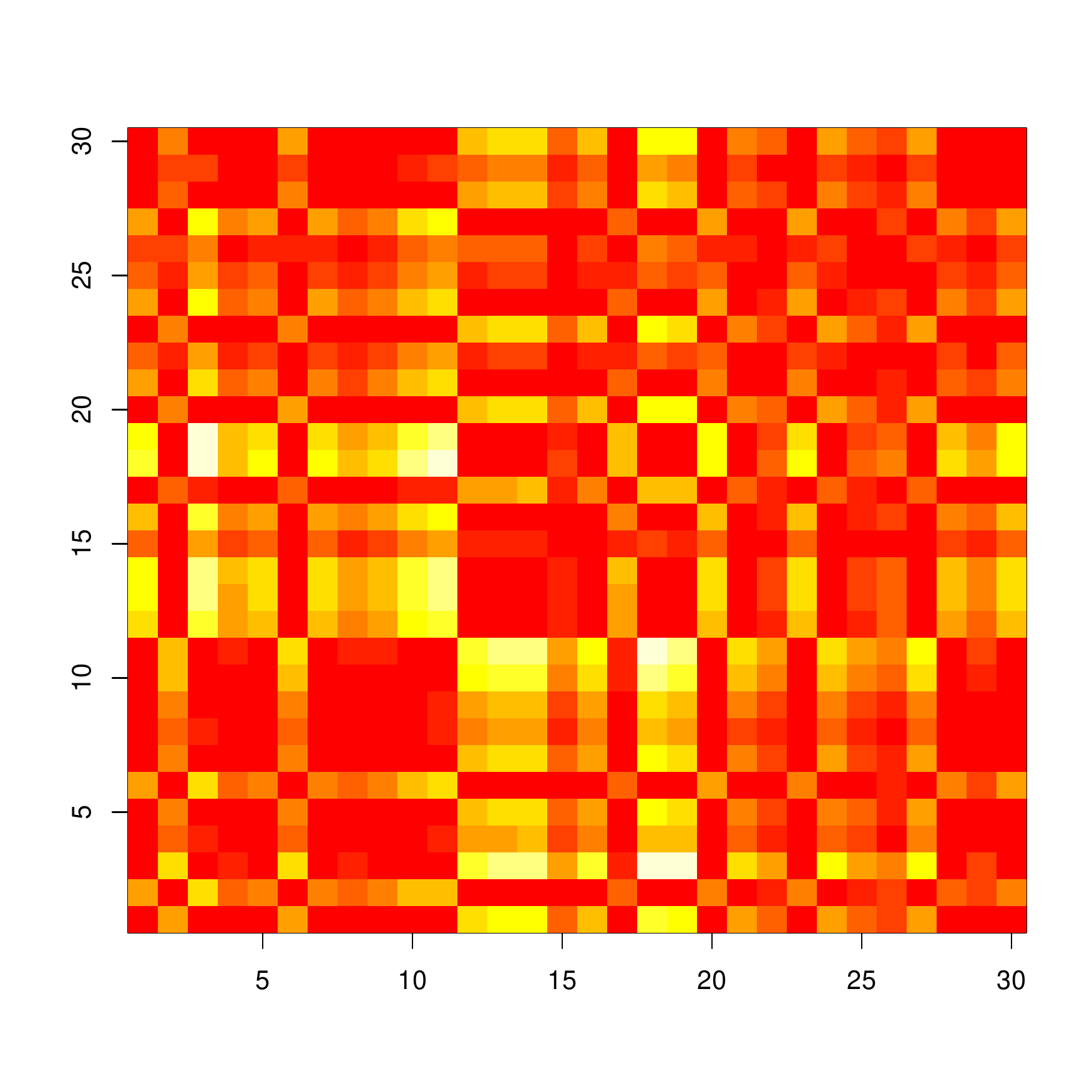}}
        \caption{time $\tau_2$}
    \end{subfigure}
    \begin{subfigure}[b]{0.3\textwidth}
        \centering
        {\includegraphics[width=\textwidth]{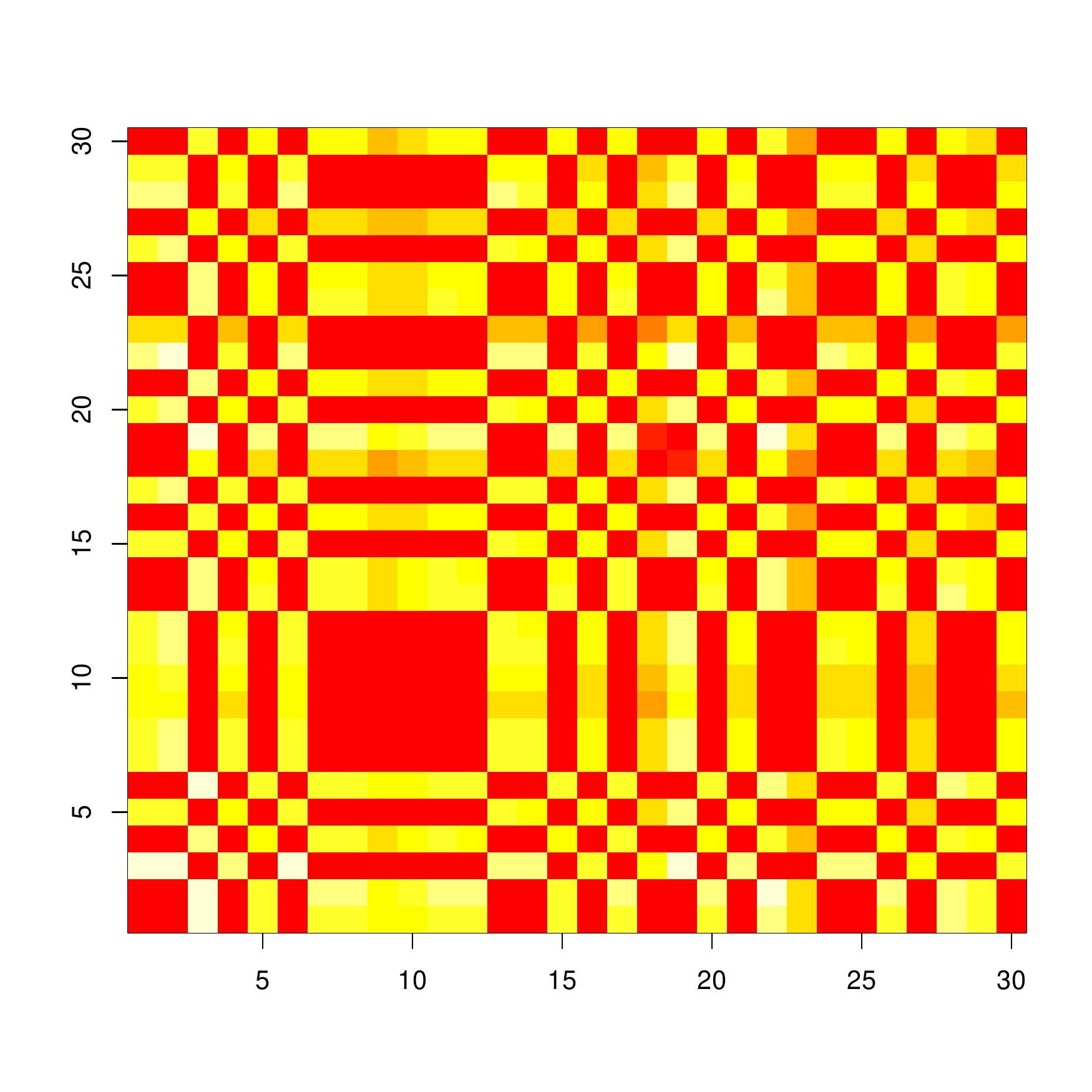}}
        \caption{time $\tau_3$}
    \end{subfigure}
    \caption{One simulation's Kullback-Leibler divergence of posteriors at
    times $\tau_1<\tau_2<\tau_3$.  The horizontal ($x$) and vertical ($y$)
    values, ranging in $1,2,\ldots,30$, label actors.  The more red the cell
    at $(x,y)$ is, the more similar vertex $y$ is to vertex $x$.  The
    dissimilarity measure (KL) clearly indicates the emergence of vertex
    clustering.} \label{fig:Kullback-Leibler divergence - Simulation}
\end{figure}

\section{Algorithm for computing posterior processes} 
\label{sec:SimplifiedApproaches}

We denote by ${\rho}_t$ the conditional distribution of ${\bm X}(t)$
given $\mathcal F_t$, i.e., for each $B \in \mathcal{B}(\mathbb X)$, 
\begin{align}
    {\rho}_t(B) = \mathbb P[{\mathbf X}_t \in B \left|\mathcal F_t\right.].
\end{align}
For the rest of this paper, we shall assume that the (random) measure 
$\rho_{t}(d\bm x)$ is absolutely continuous with respect to Lebesgue measure
with its density denoted by $p_t(\bm x)$.  That is, 
$\rho_t(B) = \int \bm 1_{B}(\bm x) p_t(\bm x) d\bm x$ for $B \in \mathcal
B(\mathbb X)$.
Denote by $\rho_{t,i}$ the $i$-th marginal posterior
distribution of ${\rho}_t$, i.e., for each $B \in \mathcal B(\mathbb R^d)$,
$\rho_{t,i}(B) = \mathbb P[X_{t,i} \in B \left|\mathcal F_t\right.]$,
and let $p_{t,i}(x)$ denote its density. 

\subsection{Theoretical background} \label{sub:Conditionaldistributionprocess}
In Theorem \ref{thm:exactposterior}, 
the \emph{exact} formula for updating the posterior is presented, 
and in Theorem \ref{thm:simplifyingposteriorupdaterule}, our \emph{working}
formula used in our numerical experiments is given. We develop our 
theory for the case where $\omega_t$ and $\sigma_t$ are the same for all
actors for simplicity, as generalization to the case of each actor having
different values for $\omega_{t,i}$ and $\sigma_{t,i}$ is straightforward but
requires some additional notational complexity.

\begin{theorem}\label{thm:exactposterior}
    For each $f \in {C}_b(\mathbb X)$ and $t\in (0,\infty)$, 
\begin{align*} 
    d\rho_t(f) &=
    \rho_t\left(\mathcal{A}_tf\right) dt +
    \bm 1^{\top}
    \left(
        \int_{\mathbb X } 
        \rho_t(d\bm x)
        f(\bm x) \left(\widetilde{\bm \lambda}_t(\bm x) - {\bm 1} {\bm 1}^\top \right)
        * d\bm{M}_t
        \right)
        \bm 1,
\end{align*}
where $\widetilde{\bm \lambda}_t(\bm x) = (\widetilde{\lambda}_{t,ij}(\bm
x))_{i,j=1}^n$ is an $n\times n$ matrix such that for each $i \neq j$, 
$\widetilde{\lambda}_{t,ij}(\bm x) = p_{t, j}(x_i)/
\langle p_{t,i},p_{t,j}\rangle$ and for each $i=j$, $\widetilde{\lambda}_{t,ij}(\bm x) = 1$,
and $d{\bm M}_t=(dM_{t,ij})_{i,j=1}^n$ is an $n\times n$ matrix such that for
each pair $i\neq j$, $dM_{t,ij} = d{N}_{t,ij} - {\lambda}_{t, ij} dt$
and for each pair $i=j$, $dM_{t,ij} = 0$.
\end{theorem}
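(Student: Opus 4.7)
The plan is to view this as a classical nonlinear filtering problem for a partially-observed Markov process with point-process observations, and to invoke the Kushner--Stratonovich (Fujisaki--Kallianpur--Kunita) filtering equation. The hidden state is the diffusion $\bm X_t$ on $\mathbb X$; by the conditional independence of $X_1,\ldots,X_n$ given the population sample path, its generator is the direct sum $\bm{\mathcal A}_t = \sum_{i=1}^n \mathcal A_t^{(i)}$ acting coordinatewise. The observed counting processes $N_{t,ij}$ have state-conditional intensity $\Lambda_{t,ij}(\bm x) = (\lambda_{t,i}\lambda_{t,j}/2)(p_{t,j}(x_i)+p_{t,i}(x_j))$, and a marginal computation using the (posterior) independence of $X_{t,i},X_{t,j}$ shows that its $\mathcal F_t$-projection is exactly the stated $\lambda_{t,ij} = \lambda_{t,i}\lambda_{t,j}\langle p_{t,i},p_{t,j}\rangle$; consequently $dM_{t,ij} = dN_{t,ij}-\lambda_{t,ij}\,dt$ are $\mathcal F_t$-martingales (the innovations).

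The proof then proceeds in three steps. First, apply It\^o's formula to $f(\bm X_t)$ for $f\in C_b(\mathbb X)$ (smooth test functions, extended by approximation) to obtain the Doob decomposition $df(\bm X_t) = \bm{\mathcal A}_t f(\bm X_t)\,dt + dL_t^f$ with $L^f$ a local martingale on the full filtration. Second, take $\mathcal F_t$-conditional expectations: the drift term produces $\rho_t(\bm{\mathcal A}_t f)\,dt$ directly, while the martingale part is re-expressed via Bremaud's martingale representation theorem for point-process filtrations (\cite{Bremaud1981}, Ch.~VI) as a stochastic integral against the innovations $dM_{t,ij}$, with integrand computed by the Kallianpur--Striebel (Bayes) formula:
\begin{align*}
    \text{integrand against } dM_{t,ij} \;=\; \frac{\rho_t(f\,\Lambda_{t,ij})}{\lambda_{t,ij}} - \rho_t(f).
\end{align*}
Third, substituting the explicit form of $\Lambda_{t,ij}$ identifies the density ratio $p_{t,j}(x_i)/\langle p_{t,i},p_{t,j}\rangle$ as $\widetilde{\lambda}_{t,ij}(\bm x)$; the matrix repackaging $\bm 1^\top\bigl((\widetilde{\bm\lambda}_t(\bm x)-\bm 1\bm 1^\top)*d\bm M_t\bigr)\bm 1 = \sum_{i,j}(\widetilde\lambda_{t,ij}(\bm x)-1)\,dM_{t,ij}$ combined with the conventions $\widetilde\lambda_{t,ii}=1$ and $dM_{t,ii}=0$ (which kill the diagonal) collapses everything to the claimed expression.

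The main obstacle is the self-referential character of the filter: the state-conditional intensity $\Lambda_{t,ij}(\bm x)$ involves the posterior marginals $p_{t,i},p_{t,j}$, which are themselves the quantities being propagated. For Bremaud's framework to apply one must verify that $\bm x\mapsto \Lambda_{t,ij}(\bm x)$ is a predictable functional with $\mathcal F_t$-adapted coefficients, which reduces to showing $t\mapsto p_t$ is left-continuous and $\mathcal F_t$-adapted; this follows from the piecewise-constant jump structure of the $N_{ij}$ together with the continuity assumptions on $(q_{t,\ell},c_{t,\ell},\alpha_{t,\ell})$ made at the hierarchical-model level. The remaining items---integrability of $f\cdot p_{t,j}(\cdot)$ against $\rho_t$ and finiteness of the moments of $\mu_t$---are routine consequences of the boundedness of $f$ and the standing moment assumptions.
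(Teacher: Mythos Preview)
Your proposal is correct in spirit and arrives at the right formula, but it follows a genuinely different route from the paper's own proof. The paper works entirely through the conditional characteristic function $\varphi_t(\bm v) = \mathbb E[e^{\imath\langle \bm v,\bm X_t\rangle}\mid \mathcal F_t]$: it first records (Proposition~\ref{prop:DonaldSnyder}, adapted from \citet{DonaldSnyder75}) the evolution equation for $\varphi_t$, and then recovers the statement for general $f\in C_b(\mathbb X)$ by Fourier inversion, carried out explicitly in three brute-force lemmas (Lemmas~\ref{lemmaAA}--\ref{lemmaAC}) that identify the drift term with $\rho_t(\mathcal A_t f)$ and the jump term with the Hadamard-product expression. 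Your argument bypasses the Fourier layer entirely: you apply the Kushner--Stratonovich/FKK machinery directly to $f(\bm X_t)$ and read off the innovation integrand from the Kallianpur--Striebel formula and Br\'emaud's representation theorem. This is cleaner and more in line with modern expositions (e.g.\ \citet{BainCrisan}); the paper's characteristic-function detour buys nothing extra here except fidelity to the Snyder reference, at the cost of the tempered-distribution manipulations in Lemmas~\ref{lemmaAB}--\ref{lemmaAC}.

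One small correction: in your second paragraph you invoke ``(posterior) independence of $X_{t,i},X_{t,j}$'' to compute $\lambda_{t,ij}=\lambda_{t,i}\lambda_{t,j}\langle p_{t,i},p_{t,j}\rangle$. That independence assumption (equation~\eqref{productjointassumption}) is introduced only \emph{after} Theorem~\ref{thm:exactposterior}, for the simplified Theorem~\ref{thm:simplifyingposteriorupdaterule}; it is not available here. Fortunately it is not needed: since $p_{t,j}$ is already $\mathcal F_t$-measurable, $\mathbb E[p_{t,j}(X_{t,i})\mid\mathcal F_t]=\int p_{t,j}(x)\,\rho_{t,i}(dx)=\langle p_{t,i},p_{t,j}\rangle$ uses only the marginal $\rho_{t,i}$, so the identity holds without any factorisation of the joint posterior.
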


Hereafter, for developing algorithms further for efficient computations, 
we make the assumption that 
for each $t$, 
\begin{align}
    p_{t,ijk} = p_{t,i} p_{t,j} p_{t,k}, \label{productjointassumption}
\end{align}
where $p_{t,ijk}$ denotes the joint density for actors $i$, $j$ and $k$.

\begin{theorem}\label{thm:simplifyingposteriorupdaterule}
For each function $f \in C_0(\mathbb R^d)$, we have
\begin{align*}
    dp_{t,i}(f) 
    = 
    p_{t,i}(\mathcal A_t f) dt 
    + 
    \sum_{\substack{j\neq i}}
    \left(
    \frac{\langle f, p_{t,i} p_{t,j}\rangle }{\langle p_{t,i},p_{t,j}\rangle} - p_{t,i}(f)
    \right)
    \left( dN_{t,ij} - \lambda_{t,ij} dt \right).
\end{align*}
\end{theorem}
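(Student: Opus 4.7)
The plan is to apply Theorem \ref{thm:exactposterior} to the cylinder test function $\tilde f(\bm x) := f(x_i)$ depending only on the $i$-th coordinate, and then reduce the right-hand side using the factorisation hypothesis \eqref{productjointassumption}. Since $\rho_t(\tilde f) = p_{t,i}(f)$, the left-hand side becomes $dp_{t,i}(f)$. For the drift, the conditional independence of $X_1,\ldots,X_n$ given $t\mapsto \mu_t$ makes the generator of $\bm X_t$ act additively on coordinate-separated functions, so $\mathcal A_t \tilde f(\bm x) = (\mathcal A_t f)(x_i)$, and hence $\rho_t(\mathcal A_t\tilde f) = p_{t,i}(\mathcal A_t f)$, which produces the stated drift term.

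For the innovation, expanding the Hadamard product gives $\sum_{j\neq k}(\widetilde\lambda_{t,jk}(\bm x) - 1)\, dM_{t,jk}$, so I would compute, for each ordered pair $j\neq k$,
\[
I_{jk} := \int_{\mathbb X}\rho_t(d\bm x)\, f(x_i)\bigl(\widetilde\lambda_{t,jk}(\bm x) - 1\bigr),
\]
and split into three cases according to how $\{j,k\}$ meets $\{i\}$. When $i\notin\{j,k\}$ the integrand depends on the three distinct coordinates $(x_i,x_j,x_k)$, so the triple factorisation \eqref{productjointassumption} applies directly and a product-of-integrals calculation gives $\int\rho_t(d\bm x) f(x_i)\widetilde\lambda_{t,jk}(\bm x) = p_{t,i}(f)$, which cancels the $-1$ contribution. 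When $k = i$ and $j\neq i$, the pair marginal $p_{t,ij} = p_{t,i}p_{t,j}$ (obtained by marginalising out $x_k$ in \eqref{productjointassumption}) together with $\int p_{t,i}(x)p_{t,j}(x)\,dx = \langle p_{t,i},p_{t,j}\rangle$ again produces exact cancellation. The remaining case $j = i$, $k\neq i$ is the only contributor and yields
\[
\int p_{t,i}(x_i) p_{t,k}(x_k)\, f(x_i)\,\frac{p_{t,k}(x_i)}{\langle p_{t,i},p_{t,k}\rangle}\, dx_i\, dx_k \;=\; \frac{\langle f, p_{t,i}p_{t,k}\rangle}{\langle p_{t,i},p_{t,k}\rangle},
\]
so after subtracting the $-1$ piece the coefficient on $dM_{t,ik}$ is $\langle f,p_{t,i}p_{t,k}\rangle/\langle p_{t,i},p_{t,k}\rangle - p_{t,i}(f)$. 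Summing over $k\neq i$ and renaming $k$ as $j$ matches the single-index innovation sum in the statement, since $dM_{t,ij} = dN_{t,ij} - \lambda_{t,ij}\,dt$ by definition.

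The main obstacle is bookkeeping rather than analysis: one must verify that each ordered pair $(j,k)$ with $j\neq k$ from the original $n\times n$ Hadamard sum is placed into the correct case-class and that the cancellations leave only the $(i,k)$ entries surviving, so that the final sum indexes once over $j\neq i$ with the claimed coefficients. A secondary care point is that \eqref{productjointassumption} gives the triple factorisation only, while the cases $k=i$ or $j=i$ need the pair factorisation; this is immediate by marginalising out the third coordinate, but it deserves an explicit remark.
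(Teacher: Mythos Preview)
Your proposal is correct and follows essentially the same route as the paper's own proof: specialize Theorem~\ref{thm:exactposterior} to a cylinder test function depending only on the $i$-th coordinate, then case-split the innovation sum according to how the pair indices meet $i$ and invoke the product assumption~\eqref{productjointassumption} to kill the non-contributing terms. Your ordered-pair bookkeeping (three cases $i\notin\{j,k\}$, $k=i$, $j=i$) is in fact more explicit than the paper's sketch, which groups by unordered pairs and leaves the reader to untangle which ordering survives.
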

Replacing $f$ with a Dirac delta generalized function, Theorem
\ref{thm:simplifyingposteriorupdaterule} states that for each $x\in \mathbb
R^d$, 
\begin{align*}
    dp_{t,i}(x) 
    = 
    \mathcal A_t^* p_{t,i}(x) dt 
    + 
    p_{t,i}(x)
    \sum_{\substack{j\neq i}}
    \left(
        \frac{p_{t,j}(x)}{\langle p_{t,i},p_{t,j}\rangle} -1
    \right)
    \left( dN_{t,ij} - \lambda_{t,ij} dt \right),
\end{align*}
where $\mathcal A^*$ denotes the formal adjoint operator of $\mathcal A$.  For
use only within Algorithm \ref{algo:EstimateActorPosterior},
\begin{align}
&\textsc{PdeTerm}_{t,i} = \mathcal A_t^* p_{t,i}(x) dt, \text{ and } \label{eqn:PdeTerm}\\
&\textsc{JumpTerm}_{t,i}
= p_{t,i}(x)
\sum_{\substack{j\neq i}}
    \left(
    \frac{p_{t,j}(x)}{\langle p_{t,i},p_{t,j}\rangle} -1
    \right)
    \left( dN_{t,ij} - \lambda_{t,ij} dt \right).\label{eqn:JumpTerm}
\end{align}

\begin{algorithm}
    \caption{Updating the posterior distribution of actors' latent position
    over a \emph{near-infinitesimally-small} time interval}
    \label{algo:EstimateActorPosterior}
    \begin{algorithmic}[1]
 \Require $t \in \mathbb R_+$, $\Delta t \in \mathbb R_+$, $\mu_t$ and
 $\{(t_\ell, u_\ell, v_\ell): t_\ell \in (t,t+\Delta t], 1 \le u_\ell < v_\ell
 \le n\}$
        \vskip0.1in 
        \Procedure {EstimateActorPosterior}{}
            \For {$i \leftarrow 1,\ldots, n$}
            \State Compute $p_{t+\Delta t,i}$ from $p_{t,i}$ using
            $\textsc{PdeTerm}_{t,i}$
            \EndFor 
            \State $\ell \leftarrow \argmin_{m} t_m$
            \State $t_{\ell-1} \leftarrow t$
            \While{$t_\ell \in (t,t+\Delta t]$}
                \State $dt \leftarrow t_{\ell} - t_{\ell-1}$
                \For {$i \leftarrow 1,\ldots, n-1$}
                    \For{$j \leftarrow (i+1),\ldots, n$} 
                \If{$\{i,j\} = \{u_\ell,v_\ell\}$}
                    \State $dN_{t,ij} = 1$
                \Else 
                    \State $dN_{t,ij} = 0$
                \EndIf
                    \State Update $p_{t+\Delta t,i}$ using $(p_{t,m})_{m=1}^n$ and
                    $\textsc{PdeTerm}_{t,i}$ in \eqref{eqn:PdeTerm}
                    \State Update $p_{t+\Delta t,j}$ using $(p_{t,m})_{m=1}^n$ and
                    $\textsc{JumpTerm}_{t,j}$ in \eqref{eqn:JumpTerm}
                    \State $\ell \leftarrow \ell + 1$
                    \EndFor
                \EndFor
            \EndWhile
        \EndProcedure 
    \end{algorithmic}
\end{algorithm}

\subsection{A mixture projection approach}
The projection filter is an algorithm which provides an approximation to the
conditional distribution of the latent process in a systematic way, the method
being based on the differential geometric approach to statistics,
cf.~\cite{BainCrisan}.  When the space on which we project is a mixture
family, as in \cite{BrigoD2011}, the projection filter is equivalent to an
approximate filtering via the Galerkin method,
cf.~\cite{GuntherBeardWilsonOliphantStirling1997}.  Following this idea,
starting from Theorem \ref{thm:simplifyingposteriorupdaterule}, we obtain
below in Theorem \ref{thm:ProjFilterZakai} the basic formula for our
approximate filtering algorithm.

To be more specific, consider a set of probability density functions $\mathcal
S \equiv \{\phi_\ell\}$.  Then, let $\overline{\mathcal S} \le \mathcal
M_1(\mathbb R^d)$ be the space of all probability density functions that can
be written as a probability-weighted sum of $\phi_1,\phi_2, \cdots$. That is,
$f \in \overline{\mathcal S}$ if and only if $f(x) = \sum_{\ell} w_\ell
\phi_\ell(x)$ for some probability vector $(w_1,w_2,\cdots)^\top$ on indices
$1,2,\cdots$.  In particular, for deriving our algorithms, we will assume that
for some systematic choice of $\mathcal S$, each probability density under
consideration is a member of $\overline{\mathcal S}$.  

Among many possible choices for $\{\phi_k\}$ in Theorem
\ref{thm:ProjFilterZakai} are a multivariate Haar wavelet basis and a
multivariate Daubechies basis.  On the other hand, a Gaussian mixture model is
pervasively used throughout statistical inference tasks such as clustering and
classification in algorithms such as $k$-means clustering. As such, we develop
our algorithms with an eye towards use with other Gaussian mixture
model-based algorithms.  In Appendix \ref{appendix:mixtureprojectionformula},
we further develop our algorithm under the assumption that
\begin{align*} 
    p_{t,i}(x) = \sum_{\ell} W_{t,i,\ell} \phi(x;\theta_\ell,s), 
\end{align*} 
where $\phi$ is the standard Gaussian density function defined on $\mathbb
R^d$, $s \in \mathbb R_+$, and the finite sequence $\{\theta_\ell\}\subset
\mathbb R^d$ is to be chosen judiciously prior to implementing the algorithm.  

Preparing for our next result in Theorem \ref{thm:ProjFilterZakai}, we let $P$ be the
symmetric matrix such that $P_{k_1,k_2} = \langle \phi_{k_1}, \phi_{k_2}
\rangle$, and for each $k$, let $S_k$ be the symmetric matrix such that its
$(r,c)$-entry $S_{k,rc}$ is $\langle \phi_k, \phi_{r}\phi_{c} \rangle$. Collectively, we
denote by $S$ the three-way tensor whose $(k,r,c)$ entry is $S_{k,rc}$.  Let
$R_{t,i,\ell}$ be the matrix such that its $(r,c)$-entry $R_{t,i,\ell,rc}$ is
$\langle \mathcal A_{t,i,\ell} \phi_r, \phi_c \rangle$, where
$\mathcal A_{t,i,\ell}$ is the differential operator such that
\begin{align}
    \mathcal A_{t,i,\ell} f(x) 
    = 
    \sum_{k=1}^d b_{t,i,\ell}^{k}( x) \frac{\partial }{\partial x_k} f( x) 
    +
    \sum_{k_1,k_2=1}^d a_{t,i,\ell}^{k_1,k_2}( x) \frac{\partial^2}{\partial
    x_{k_1} \partial x_{k_2}} f( x),
    \label{eqn:AtiellVSRtiellrc}
\end{align} 
with
\begin{align}
    &b_{t,i,\ell}^{k}( x) 
    = 2(1-\omega_{t,i})\int \psi\left(\sigma_{t,i}^{-1} ( y- x)\right) (y - x)_k
    \phi(y;c_\ell, \alpha_{t,\ell}) d y, \label{eqn:btiellk2} \\
    &a_{t,i,\ell}^{k_1,k_2}( x) 
    = (1-\omega_{t,i})^2 \int \psi\left(\sigma_{t,i}^{-1} ( y- x) \right) (y-
    x)_{k_1} (y- x)_{k_2} \phi(y;c_\ell,\alpha_{t,\ell}) d y.
    \label{atiellk1k2}
\end{align}

\begin{theorem}\label{thm:ProjFilterZakai}
Suppose that for each $t$, $i$ and $x$, 
$p_{t,i}(x) = \sum_{k=1}^K W_{t,i,k} \phi_k(x)$.
Let $W_{t,i}$ denote the column vector whose $k$-th 
entry is $W_{t,i,k}$. Then, 
\begin{align}
    P d{W}_{t,i}
    &= 
    \sum_{\ell} q_{t,\ell} R_{t,i,\ell} W_{t,i} dt \nonumber \\
    &\quad +
    \sum_{j\neq i } 
    \left(
    \dfrac{(W_{t,i}^\top S_r W_{t,j})_{r=1}^K}{W_{t,i}^\top P W_{t,j}}
    - 
    P W_{t,i}
    \right)
    \left(
    dN_{t,ij} 
    - 
    \lambda_{t,i}\lambda_{t,j} W_{t,i}^\top P W_{t,j} dt 
    \right).\label{eqn:ProjFilterZakai}
\end{align}
\end{theorem}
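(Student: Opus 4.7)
The plan is to obtain this equation directly from Theorem \ref{thm:simplifyingposteriorupdaterule} by substituting the mixture ansatz $p_{t,i}(x)=\sum_{k=1}^K W_{t,i,k}\phi_k(x)$ and then testing (in the sense of a Galerkin projection) against each basis element $\phi_r$. Under the ansatz, $dp_{t,i}=\sum_k \phi_k\, dW_{t,i,k}$, so pairing with $\phi_r$ yields $dp_{t,i}(\phi_r)=\sum_k\langle\phi_r,\phi_k\rangle\, dW_{t,i,k}=(P\,dW_{t,i})_r$, which accounts for the left-hand side of \eqref{eqn:ProjFilterZakai} once we assemble the $K$ scalar equations into a single vector equation.

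Next I would decompose the generator term. The coefficients $b_t^k$ and $a_t^{k\ell}$ defined on the ``actor position process'' level are linear functionals of $\mu_t$, so the ansatz $\mu_t(y)=\sum_\ell q_{t,\ell}\phi(y;c_{t,\ell},\alpha_{t,\ell})$ implies that $\mathcal{A}_t=\sum_\ell q_{t,\ell}\,\mathcal{A}_{t,i,\ell}$, where $\mathcal{A}_{t,i,\ell}$ is the operator in \eqref{eqn:AtiellVSRtiellrc}--\eqref{atiellk1k2} built from the single mixture component $\phi(\cdot;c_{t,\ell},\alpha_{t,\ell})$. Then, using the mixture ansatz on $p_{t,i}$, one has
\begin{align*}
p_{t,i}(\mathcal{A}_t\phi_r)
=\sum_{\ell}q_{t,\ell}\sum_c W_{t,i,c}\,\langle\phi_c,\mathcal{A}_{t,i,\ell}\phi_r\rangle
=\sum_{\ell}q_{t,\ell}\,(R_{t,i,\ell}W_{t,i})_r,
\end{align*}
recovering the drift contribution on the right-hand side of \eqref{eqn:ProjFilterZakai}.

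For the jump term, again taking $f=\phi_r$ and expanding each density in the mixture basis, direct substitution gives $\langle\phi_r,p_{t,i}p_{t,j}\rangle=\sum_{a,b}W_{t,i,a}W_{t,j,b}\langle\phi_r,\phi_a\phi_b\rangle=W_{t,i}^\top S_r W_{t,j}$, $\langle p_{t,i},p_{t,j}\rangle=W_{t,i}^\top P W_{t,j}$, and $p_{t,i}(\phi_r)=(P W_{t,i})_r$. The definition $\lambda_{t,ij}=\lambda_{t,i}\lambda_{t,j}\langle p_{t,i},p_{t,j}\rangle$ then converts the compensator to the form $\lambda_{t,i}\lambda_{t,j}W_{t,i}^\top P W_{t,j}\,dt$. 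Collecting these scalar identities for $r=1,\ldots,K$ into a vector identity yields the claimed equation.

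The only real obstacles are bookkeeping rather than analytic. First, one must carefully justify the generator decomposition $\mathcal{A}_t=\sum_\ell q_{t,\ell}\mathcal{A}_{t,i,\ell}$, in particular keeping the factor $(1-\omega_{t,i})^2$ attached to $\mathcal{A}_{t,i,\ell}$ itself rather than to the mixture weights. Second, the theorem implicitly requires that the prior density $p_{t,i}$ stay in $\overline{\mathcal{S}}$ as $t$ evolves; the proof as written above is really a Galerkin projection identity, valid as an equation for the coefficients once one assumes the expansion is exact and uses the linear independence of $\{\phi_k\}$ (so that $P$ is invertible) to recover the $K$ scalar equations from the $K$ test-function equations.
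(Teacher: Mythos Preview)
Your proposal is correct and follows essentially the same route as the paper: start from Theorem~\ref{thm:simplifyingposteriorupdaterule}, test against each basis function $\phi_r$ (Galerkin projection), decompose $\mathcal{A}_t$ linearly over the mixture components of $\mu_t$ to produce the $R_{t,i,\ell}$ term, and expand the jump term using $\langle\phi_r,p_{t,i}p_{t,j}\rangle=W_{t,i}^\top S_r W_{t,j}$, $\langle p_{t,i},p_{t,j}\rangle=W_{t,i}^\top P W_{t,j}$, and $\lambda_{t,ij}=\lambda_{t,i}\lambda_{t,j}\langle p_{t,i},p_{t,j}\rangle$. The paper's version differs only cosmetically in that it writes the jump term first via the more explicit $\lambda_{t,i\to j}$ and $\lambda_{t,j\to i}$ expressions before reducing to the same bilinear forms.
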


\subsection{Algorithm for continuous embeddings}\label{sub:Continuousembedding}

\subsubsection{Classical multidimensional scaling} 
In our application, our final analysis is completed by clustering the
posterior distributions.  Instead of working directly with posteriors, an
infinite-dimensional object, we propose to work with objects in an Euclidean
space each of which represents a particular actor.  However, given $p_{t,i}$
and $p_{t,j}$, using their mean vectors or their KL distance for clustering
can be uninformative.  For example, if $p_{t,i} = p_{t,j}$, then their mean
vectors would be the same and their KL distance  would be zero. However, if
$p_{t,i} = p_{t,j}$ is the density of, say, a normal random vector such that
its mean is zero and its covariance matrix is $v I$ for a large value of $v$,
then concluding that actor $i$ and actor $j$ are similar could be misleading.

To alleviate such situations in a clustering step of our numerical
experiments,  we propose using a multivariate statistical technique called 
classical multidimensional scaling (CMDS) to obtain a lower dimensional
representation of $\bm p_{t} = (p_{t,i})_{i=1}^n$.  More specifically, we
achieve this by representing each actor as a point in $\mathbb R^d$, where the
configuration is obtained by solving the optimization problem
\begin{align}
    \min_{x_1,\ldots, x_n \in \mathbb R^d} \sum_{i < j} 
    \big| 
    \|x_i - x_j\| - g(\langle p_{t,i},p_{t,j}\rangle))
    \big|^2,\label{eqn:optsoln}
\end{align}
where $g$ denotes a strictly decreasing function defined on $[0,\infty)$
taking values in $\mathbb R_+$. 
For example, one can take $g(u) =
\cos^{-1}(\omega u)$ where $\omega \in \mathbb R_+$ is chosen so that 
$\omega u \in [0,\pi/2]$ for all possible values $u$ of $\langle p_{t,i}, p_{t,j} \rangle$. 
Another possibility among many others is to choose $g(u) = -\log(\omega u)$
if $\omega$ is chosen so that $\omega u \in (0,1]$ for all possible values
$u$ of $\langle p_{t,i},p_{t,j}\rangle$. 

We denote by $\xi(\bm p)$ the set of solutions to the optimization
problem \eqref{eqn:optsoln}.  Given a vector $\bm p$ of $n$ probability
densities on $\mathbb R^d$, it can be shown that the solution set $\xi(\bm p)$ 
is not empty and is closed under orthogonal transformations.

In the classical embedding literature, ensuring continuous embeddings is
neglected as it is not relevant to their applications.  However, for our work,
this is crucial as we study their evolution through time, i.e., ideally, we
would like to see that a small change in time corresponds to a small change in
latent location.  In this section, we propose an extension to the CMDS
algorithm to remedy the aforementioned non-uniqueness issue, and show that the
resulting algorithm ensures continuity of embeddings.

In our numerical experiments, for each $\bm p$, we choose a particular element
$\xi^*(\bm p)$ of the solution set $\xi(\bm p)$ so that $\xi^*(\bm p)$ depends
on $\bm p$ in a consistent manner.

\begin{algorithm}
	\caption{Compute a unique CMDS embedding of $M$ by minimizing the distance from 
    a reference configuration $Z$ with full column rank}
    \label{algo:CMDScc}
\begin{algorithmic}[1]
	\Require a matrix $Z \in \mathbb{M}_{n \times d}$ with full column rank
    and a symmetric matrix $M \in \mathbb{M}_{n \times n}^+$ such that $\diag(M) =0$
    \State $B$ $\leftarrow$ any $n \times d$ classical MDS solution of $M$
    \State Compute a singular value decomposition $U D V^\top$ of $Z^\top B$  
    \State Return $BVU^\top$
\end{algorithmic}
\end{algorithm}

\subsubsection{Continuous selection}

By a dissimilarity matrix, we shall mean a real symmetric non-negative matrix
whose diagonal entries are all zeros.  First, fix $d$ such that $1 \le d \le
n$.  Then, for each $n\times n$ dissimilarity matrix $M$, define
\begin{align*}
&\varrho(M) = -\frac{1}{2}({\bf I} - {\bf 1} {\bf 1}^\top/n) M^{(2)} ({\bf I}
- {\bf 1} {\bf 1}^\top/n),\\
&\xi_d^\dagger(M) = \argmin_{X\in\mathbb R^{n\times d}} \| \varrho(M) - X
X^\top\|_F^2,
\end{align*}
where $M^{(2)} = (M_{ij}^{2})$.  The elements of $\xi_d^\dagger(M)$ are known
as \emph{classical multidimensional scalings}, and as discussed in
\citet{BorgGroenen2005}, it is well known that $\xi_d^\dagger(M)$ is not
empty provided that the rank of $\varrho(M)$ is at least $d$.  Our discussion
in this section concerns making a selection $\xi_d^*(M)$ from
$\xi_d^\dagger(M)$ so that the map $M\rightarrow \xi_d^*(M)$ is continuous
over the set of dissimilarity matrices such that $\varrho(M)$ is of rank at
least $d$. 

Let $M$ be a dissimilarity matrix such that the rank of $\varrho(M)$ is at
least $d$.  We begin by choosing an element of $\xi_d^\dagger(M)$, say
$\xi_d(M)$, through classical dimensional scaling.  Let $U\Sigma U^\top$ be
the eigenvalue decomposition of $\xi_d(M)$, where $UU^\top= I$ and $\Sigma$ is
the diagonal matrix whose entries are the eigenvalues in non-increasing order.
By the rank condition, we have $\Sigma_{11} \ge \ldots \ge \Sigma_{dd} > 0$.
First, we formally write 
\begin{align}
    X_+ = U_+\sqrt{\Sigma_+}, 
\end{align}
where 
\begin{description}
    \item[(i)] $U_+$ is the $n\times d$ matrix with its $ij$ entry  $U_{ij}$, 
    \item[(ii)] $\Sigma_+$ is the $d\times d$ diagonal matrix whose $i$-th
        diagonal entry is $\Sigma_{ii}$.
\end{description}
Dependence of $X_+$ on $M$ will be suppressed in our notation unless needed
for clarity.  Now, if the diagonals of $\Sigma_+$ are distinct, then $X_+$
is well defined.  However, in general, due to potential geometric multiplicity
of an eigenvalue, our definition of $X_+$ can be ambiguous.  This is the main
challenge in making a continuous selection and we resolve this issue in our
following discussion. 

For our remaining discussion, without loss of generality, we
may assume that for each dissimilarity matrix $M$, $X_+$ is well-defined
by making an arbitrary choice if there is more than one CMDS solution.  Note
that the mapping $M\rightarrow X_+(M)$ may not be a continuous selection.  We
now remedy this.  First, fix an $n\times d$ matrix $Z$ and let
\begin{align*}
    \xi_d(M) = \left\{ X_+Q: QQ^\top = I \right\} \subset \xi_d^\dagger(M),
\end{align*}
where $Q$ runs over all $d\times d$ real orthogonal matrices. 
Then, define
\begin{align}
\xi_d^*(M) \equiv \argmin_{X \in \xi_d(M)} \|X -  Z\|_F^2.
\end{align}

Algorithm \ref{algo:CMDScc} yields the solution $\xi_d^*(M)$ and the proof of the following
theorem, Theorem \ref{thmstat:continuousembedding}, can be found in Appendix
\ref{thmstat:contembedding:proof}.

\begin{theorem}\label{thmstat:continuousembedding}
Suppose that the $n\times d$ matrix $Z$ is of full column rank.  Then, the
mapping $M \rightarrow \xi_d^*(M)$ yields a well-defined continuous function
on the space of dissimilarity matrices such that $\varrho(M)$ is of rank at
least $d$. 
\end{theorem}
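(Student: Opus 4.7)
The plan is to exhibit $\xi_d^*(M)$ explicitly as the Procrustes-optimal rotation of any CMDS embedding $B$ toward the reference configuration $Z$, verify that Algorithm~\ref{algo:CMDScc} implements this, and then combine continuity of the rank-$d$ eigenprojection of $\varrho(M)$ with continuity of the orthogonal polar factor.

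First I would fix any $B \in \xi_d(M)$, so that $BB^\top$ equals the best rank-$d$ positive semidefinite approximation of $\varrho(M)$, and $B$ has full column rank $d$ by the rank-at-least-$d$ hypothesis. For $X = BQ$ with $Q \in O(d)$, the identity
\begin{equation*}
\|BQ - Z\|_F^2 = \|B\|_F^2 + \|Z\|_F^2 - 2\,\tr(Q^\top B^\top Z)
\end{equation*}
reduces the minimization to the orthogonal Procrustes problem of maximizing $\tr(Q^\top B^\top Z)$. Writing $Z^\top B = U D V^\top$ for a singular value decomposition, the maximum is attained uniquely at $Q_\star = V U^\top$ whenever $Z^\top B$ is invertible, so $\xi_d^*(M) = B V U^\top$, matching Algorithm~\ref{algo:CMDScc}. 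Well-definedness (independence of the $O(d)$-freedom in $B$, and of the SVD choice when singular values repeat) follows because any two CMDS representatives differ on the right by an orthogonal matrix that is absorbed into $Q_\star$, and the orthogonal polar factor of an invertible matrix is unique.

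For continuity I would factor $M \mapsto \xi_d^*(M)$ as the composition of the smooth map $M \mapsto \varrho(M)$, the rank-$d$ spectral truncation $A \mapsto B(A)$ producing a CMDS solution, and the Procrustes selection $B \mapsto B V(Z^\top B) U(Z^\top B)^\top$. The last map is continuous at every $B$ with $Z^\top B$ invertible, since the orthogonal polar factor depends continuously on its argument on the open set of invertible matrices. The spectral truncation step is continuous at every $A$ with strict spectral gap $\sigma_d(A) > \sigma_{d+1}(A)$, by standard eigen-perturbation theory: a continuous local choice of an orthonormal basis spanning the top-$d$ eigenspace exists, and a local continuous representative $B(M)$ can be constructed from it. Composing these, continuity of $M \mapsto \xi_d^*(M)$ holds at every $M$ for which $\varrho(M)$ has a strict gap at index $d$ and $Z^\top B(M)$ is invertible.

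The main obstacle will be extending continuity to matrices $M_0$ where $\sigma_d(\varrho(M_0)) = \sigma_{d+1}(\varrho(M_0))$; there the top-$d$ eigenspace, and hence $B(M_0)$, is genuinely ambiguous and the representative $B(M)$ need not converge as $M \to M_0$. My strategy is to exploit that the Procrustes postprocessing collapses this ambiguity: $\xi_d^\dagger(M_0)$ is a compact $O(d)$-orbit, and the continuous function $X \mapsto \|X - Z\|_F^2$ has a unique minimizer on it whenever $Z^\top B$ is invertible, which I would derive from the full-column-rank hypothesis on $Z$ together with the rank condition on $\varrho(M_0)$. A standard semicontinuity argument for the argmin correspondence $M \rightrightarrows \xi_d^\dagger(M)$, combined with this uniqueness of the Procrustes minimizer, then forces every accumulation point of $\xi_d^*(M_n)$ for $M_n \to M_0$ to coincide with $\xi_d^*(M_0)$, upgrading pointwise continuity on the spectral-gap stratum to continuity on the full domain declared in the theorem.
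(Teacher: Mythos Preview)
Your plan is correct on the regime the paper actually treats (a gap $\sigma_d(\varrho(M)) > \sigma_{d+1}(\varrho(M))$, allowing ties among the top $d$ eigenvalues), and the well-definedness step via uniqueness of the orthogonal polar factor of $Z^\top B$ is exactly what the paper uses. The routes to continuity differ in execution: you factor $M \mapsto \xi_d^*(M)$ as $\varrho$ composed with a rank-$d$ eigenprojection composed with the Procrustes rotation and cite continuity of each piece; the paper instead runs a bare-hands subsequence argument, showing that any subsequential limit $X_*$ of $X_{\varepsilon,+}$ lies in the orbit $X_{0,+}\,O(d)$ (limiting columns of $U_{\varepsilon,+}$ are still eigenvectors of $\varrho(M_0)$, and a block-diagonal $Q_*$ can be chosen to commute with $\Sigma_{0,+}^{1/2}$), then closes with the sandwich
\[
\bigl\|\lim_\varepsilon \xi_d^*(M_\varepsilon) - Z\bigr\|_F^2 \;\ge\; \|\xi_d^*(M_0)-Z\|_F^2 \;\ge\; \lim_\varepsilon\|\xi_d^*(M_\varepsilon)-Z\|_F^2 \;=\; \bigl\|\lim_\varepsilon\xi_d^*(M_\varepsilon)-Z\bigr\|_F^2,
\]
forcing equality and hence convergence by uniqueness of the Procrustes minimizer. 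Your modular version is cleaner if one is willing to quote eigenspace-perturbation and polar-factor continuity as black boxes; the paper's is more elementary and self-contained.

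One genuine error to flag: in your last paragraph you assert that $\xi_d^\dagger(M_0)$ is a single compact $O(d)$-orbit when $\sigma_d(\varrho(M_0)) = \sigma_{d+1}(\varrho(M_0))$. It is not---once the gap closes there are infinitely many rank-$d$ best approximants to $\varrho(M_0)$, so $\xi_d^\dagger(M_0)$ is a union of distinct $O(d)$-orbits, and neither the uniqueness of the minimizer of $\|X-Z\|_F^2$ on that set nor its identification with $\xi_d^*(M_0)$ follows. The paper does not cover this boundary case either (its ``repeated diagonal elements'' clause refers to ties \emph{within} $\Sigma_{0,+}$, not a tie across index $d$), and indeed $\xi_d^*(M_0)$ is not even well-defined there, since the orbit $\xi_d(M_0)=\{X_+Q:Q\in O(d)\}$ depends on the arbitrary choice of $X_+$. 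Drop that final paragraph or explicitly restrict the domain to $\sigma_d > \sigma_{d+1}$.
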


\begin{figure}[!ht]
        \centering
        \includegraphics[height=0.4\textheight]{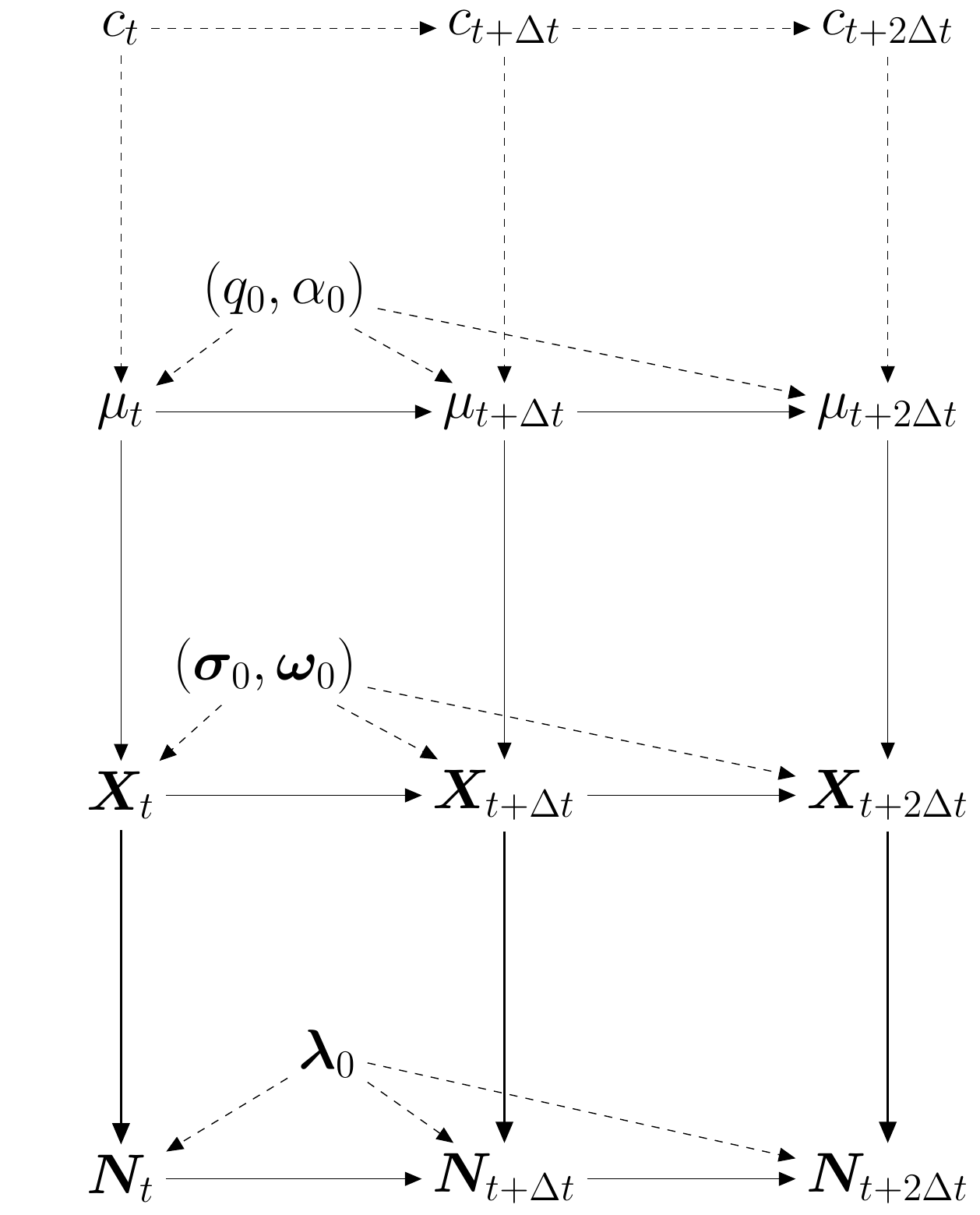}
    \caption{A graphical representation of the dependence structure in the
        simulation experiment.  The nodes that originate the dashed lines 
        correspond to either one of constant model parameters ($(\bm q_0, \bm \alpha_0)$,
        $(\bm \sigma_0, \bm \omega_0)$, and $\bm \lambda_0$) or exogenous modeling element
        $(c_t)$. The arrows are associated with \emph{influence} relationships, e.g, $\mu_t
        \rightarrow \mu_{t+\Delta t}$ reads $\mu_t$ influences $\mu_{t+\Delta
        t}$. 
    }
    \label{fig:hierarchalmodeldiagramwithparam}
\end{figure}

\subsection{Technical observations} 
Here we discuss some insightful facts related to our model given 
the assumption stated in the last section. First, we have the following:
\begin{theorem}\label{thm:elliptic}
Fix $t\ge 0$ and  suppose that $\mu_t(x) > 0$ for a.e.~$x \in \mathbb R^d$. 
The operator $\mathcal A_t$ is elliptic, i.e., for each $x \in
\mathbb R^d$, the matrix $a_t(x)=(a_t^{k_1,k_2}(x))$ is positive definite.
\end{theorem}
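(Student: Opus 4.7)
The strategy is to show directly that for any nonzero vector $v \in \mathbb R^d$, the quadratic form $v^\top a_t(x) v$ is strictly positive. This is really just the standard proof that a positive kernel against a positive density produces a positive definite second moment matrix, adapted to our integrand.

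First I would substitute the definition of $a_t^{k_1,k_2}(x)$ from the paper, exchange the finite sum with the integral (justified because the sum is finite), and collect the quadratic form inside the integrand. Using the identity $\sum_{k_1,k_2} v_{k_1} v_{k_2}(y_{k_1}-x_{k_1})(y_{k_2}-x_{k_2}) = \langle v, y-x\rangle^2$, this yields
\begin{align*}
v^\top a_t(x) v = (1-\omega_t)^2 \int \psi\!\left(\frac{y-x}{\sigma_t}\right) \langle v, y-x\rangle^2\, \mu_t(y)\, dy,
\end{align*}
whose integrand is pointwise nonnegative. Since $\omega_t \in (0,1)$, the prefactor is strictly positive.

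Next I would verify that the integrand is strictly positive on a set of positive Lebesgue measure, which forces the integral to be strictly positive. Three ingredients combine. First, $\langle v, y-x\rangle^2 > 0$ outside the hyperplane $H_v = \{y : \langle v, y-x\rangle = 0\}$, a set of Lebesgue measure zero whenever $v \neq 0$. Second, by hypothesis, $\mu_t(y) > 0$ for a.e. $y \in \mathbb R^d$. Third, because $\phi$ has convex support and positive definite covariance, the support has nonempty interior and $0$ lies in that interior (and $\phi(0)>0$, as implicitly required for $\psi$ to be well defined); hence $\psi$ is positive on some open neighborhood $U$ of the origin, so $\psi((y-x)/\sigma_t) > 0$ on the open set $x + \sigma_t U$. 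Intersecting $x + \sigma_t U$ with the full-measure set $\{\mu_t > 0\}$ and removing the measure-zero hyperplane $H_v$ still leaves a set of positive Lebesgue measure. On this set the integrand is strictly positive, and since it is everywhere nonnegative, the integral is strictly positive. Therefore $v^\top a_t(x) v > 0$ for every $v \neq 0$.

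The only delicate point, and the step I would pause on, is justifying that $\psi$ is strictly positive in some open neighborhood of the origin; this relies on $0$ lying in the topological interior of the convex support of $\phi$, which is a standing consequence of the paper's assumptions that $\phi$ has mean zero, positive definite covariance, and a well-defined ratio $\phi(z)/\phi(0)$. Once that is in hand, the measure-theoretic argument above is routine and requires no additional regularity on $\mu_t$ beyond the hypothesis $\mu_t > 0$ a.e.
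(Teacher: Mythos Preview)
Your proof is correct and follows essentially the same approach as the paper: both compute the quadratic form $v^\top a_t(x)v$ as an integral with a nonnegative integrand and argue it is strictly positive on a set of positive Lebesgue measure. If anything, you are more careful than the paper on the positivity of $\psi$: the paper simply asserts $\psi(\sigma_t^{-1}(y-x))>0$ for all $y$, whereas you correctly localize to an open neighborhood $x+\sigma_t U$ on which $\psi$ is guaranteed positive, which is all that is needed given the general assumption that $\phi$ has convex support rather than full support.
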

\begin{proof}
    Note that for each $z \in \mathbb R^d$, 
    \begin{align*}
        z^\top a_t(x) z 
        &= 
        (1-\omega_t)^2 
        \int 
            \psi(\sigma_t^{-1}(y-x)) 
            \left(
            z^\top
            (y-x)
            (y-x)^\top 
            z
            \right)
            \mu_t(y)
            dy\\
        &= 
        (1-\omega_t)^2 
        \int 
            \psi(\sigma_t^{-1}(y-x)) 
            \left|(y-x)^\top z\right|^2
            \mu_t(y)
            dy.
    \end{align*}
    Note that $\psi(\sigma_t^{-1}(y-x))>0 $ and $\mu_t(y) > 0$ for each $y
    \in \mathbb R^d$, and that $|(y-x)^\top z|^2 > 0$ away from a subspace of
    $\mathbb R^d$ whose Lebesgue measure is zero. It follows that 
    $z^\top a_t(x) z > 0$ for each $x\in \mathbb R^d$, whence  $a_t(x)$ is
    positive definite.  
\end{proof}

Now, we further examine Algorithm \ref{algo:PopulationProcess}, Algorithm
\ref{algo:LatentProcess}, Algorithm \ref{algo:MessagingActivities}, Algorithm
\ref{algo:EstimateActorPosterior}, Algorithm \ref{algo:CMDScc}, and discuss
some technical points behinds these algorithms.

In Algorithm \ref{algo:LatentProcess}, existence and uniqueness of
$\sqrt{a_t(x)}$ follows from Theorem \ref{thm:elliptic}.  The continuity of
$x\rightarrow \sqrt{a_t(x)}$ follows from Theorem \ref{thm:elliptic} and
\cite{StrookPDE2008}.  In Algorithm \ref{algo:MessagingActivities}, for
simulating a sample path of $t \rightarrow N_{t, ij}$, we use the so-called
time-change property; that is, we use the fact that the process given by $t
\rightarrow N_{t,ij}^*$ is a unit-rate simple Poisson process, where
$\Lambda_{t,ij} = \int_0^t \lambda_{u,ij}du$ and $\Lambda_{t,ij}^{-1} :=
\inf\{ u \ge 0: \Lambda_{u,ij} \ge t \}$ and $N_{t,ij}^* := \left.
N_{u,ij}\right|_{u=\Lambda_{t,ij}^{-1}}$.  For simulation, we use its dual
result, i.e., $t\rightarrow N_{t,ij} := \left.
N_{u,ij}^*\right|_{u=\Lambda_{t,ij}}$ is a point process whose intensity 
process is $\lambda_{t,ij}$, where $u\rightarrow N_{u,ij}^*$ is a
path of a unit-rate simple Poisson process.  Also, we note that for
computation of $\lambda_{t,ij}$, online inference is a necessary part of our
simulation in Algorithm \ref{algo:MessagingActivities}; that is, we need to
compute $p_{t,i\rightarrow j}(x) = \mathbb P[X_{t,j}=x\left|\mathcal
F_t\right.].$ In Algorithm \ref{algo:MessagingActivities} and Algorithm
\ref{algo:EstimateActorPosterior}, \emph{near-infinitesimally-small} means
$\Delta t$ so small that the likelihood of having more than one event during a
time interval of length $\Delta t$ is practically negligible.  Also, by
\textsc{StandardNormalVector} in Algorithm \ref{algo:LatentProcess} and
\textsc{UnitExponentialVariable} in Algorithm \ref{algo:MessagingActivities},
we mean generating, respectively,  a single normal random vector with its mean
vector being zero and its covariance matrix being the identity matrix, and a
single exponential random variable whose mean is one.  

\section{Simulation experiments} \label{sec:NumericalExperiments}

In our experiments, we hope to detect clusters with accuracy and speed similar
to that possible if the latent positions $\bm X(t)$ were actually observed
even though we use only information in $\bm p_t = (p_{t,i})_{i=1}^n$ estimated
from information contained in $\mathcal F_t$.  We denote the end-time for our
simulation as $T$.  There are two simulation experiments presented in
this section, and the computing environment used in each experiment is
reported at the end of this section.

\paragraph{Experiment 1} 
We take $d=1$ and we assume that for each $t \in [0,T]$ and actor $i=1,\ldots,
8$, $\lambda_{t,i}=5, \sigma_{t,i}^2={1/3}$ and $\omega_{t,i}=0.1$.
For the population process we take for each $t \in [0,T]$
\begin{align*}
    \mu_{t,\texttt{I}}(x) =  \phi(x;c_t,\alpha_t),
\end{align*}
where
\begin{align*}
    &\alpha_{t} = 1/3, \text{ and } \\
    &c_{t,\texttt{I}} \equiv
    \begin{cases} 
        1 & \mbox{if } t \in [0,100\Delta t), \\
        0.5 & \mbox{if } t \in [100\Delta t, 250\Delta t),\\ 
        0 & \mbox{if } t \in [250 \Delta t, 500\Delta t].
    \end{cases} 
\end{align*}
Then, we also consider $\mu_{t,\texttt{II}}$, where
\begin{align*}
    &\alpha_{t} = 1/3, \text{ and } \\
    &c_{t,\texttt{II}} \equiv
    \begin{cases} 
        1 & \mbox{if } t \in [0,100\Delta t), \\
        0 & \mbox{if } t \in [100\Delta t, 250\Delta t),\\ 
        1 & \mbox{if } t \in [250 \Delta t, 500\Delta t].
    \end{cases} 
\end{align*}

There is only one population density; in other words, $q_{t,i} = 1$.  Note
that even with \emph{one population center}, we can have \emph{more than one
empirical mode} for the subpopulation.  One of these modes is near zero, and 
another mode is near one.  The reason for this is that because of the
value of $\alpha_t^2=1/3$ and $\sigma_t^2=1/3$, when an actor is too far away
from the mode $c_t$ of the population process, the population process affects
the actors on its \emph{tail} only by negligibly small amount.  In Figure
\ref{fig:experimentone-sixactorpath} and Figure
\ref{fig:experimentone-perfect-filtering-case}, a sample path of the true
latent position of each of eight actors is illustrated in black lines. It is
apparent that in the $c_{t,\texttt{I}}$, all eight actors are equally
\emph{informed} of the population mode shift, but in the $c_{t,\texttt{II}}$
case, only the last three were able to adapt to the change, and the first five
actors are surprised by the abrupt change at time $100\Delta t$. 

Our simulation is discretized. Our unit time is $\Delta t = 0.05$, and in
Figure \ref{fig:experimentone-sixactorpath}, each tick in the horizontal axis
corresponds to an integral multiple of $\Delta t$.  The jump term in our 
update formula is quite sensitive to the number of actors being considered. 
As such, for updating the jump term, we further discretized $\Delta t$ into 
$n^2$ subintervals for numerical stability of our update iterations.  For 
$n=8$, each unit interval is associated with $64$ sub-iterations, and the total
number of the (main) iteration is $400$, and we use $(N_{t+\Delta t,ij} -
N_{t,ij})/n^2$ instead of $dN_{t+\Delta t\ell/n^2,ij} - dN_{t+\Delta
t(\ell-1)/n^2,ij}$ in each $\ell$-th subiteration of each main iteration
staring at time $t$. 

To implement our mixture projection algorithm, we take $s^2=1/2^{12} =
1/4096$.  The initial position of the $n=8$ actors are sampled from the
initial population distribution $\mu_0$.  We take $p_{0,i}(x) =
\phi(x;X_{0,i},s)$.  The discretized version $R_t$ of $\mathcal A_t$ is
illustrated in Figure \ref{fig:levelplot(At)example}. For inference during our
experiment, we have dropped the second order term and used only the first
order term to keep the cost of running our experiment low.
On the other hand, for simulating the actors' latent positions, we
have used both the first and second order term of $\mathcal A_t$.  The value
of $P^{-1} R_t W_{t,i} dt$ gives the first part of the change in $dW_{t,i}$.
Note that in both Figure \ref{fig:levelplot(At)one} and Figure
\ref{fig:levelplot(At)zero}, the entries that are \emph{sufficiently far off}
from the diagonals are near zero.

For $c_t = c_{t,\texttt{I}}$, the time plot of the number of messages produced
during interval $[\Delta t \ell, \Delta t (\ell+1)]$ is given in Figure
\ref{fig:experimentone-num-msg}, and shows transient behaviors of varying
degrees of messaging intensity over the interval.  Our set up for $c_t =
c_{t,\texttt{II}}$ produced a simulation sample output of observing $2.5$
messages amongst the $n=8$ actors in unit time once the population center
changed \emph{abruptly} from $c_t = 1$ to $c_t = 0$ at the start of the
$100$-th unit time interval, i.e., $t=5$.  In other words, after $t \ge 5$, a
single unit time is roughly associated with the amount of time during which
the whole subpopulation of eight actors exchanges around $45$ messages, or
equivalently, during which each pair of actors exchange around $3$ messages.
On the other hand, in both $c_{t,\texttt{I}}$ and $c_{t,\texttt{II}}$ for the
interval $[0,\Delta t 100]=[0,5]$, the subpopulation messaging rate is
relatively constant at the rate of $100$ messages over each unit interval, and
this is expected as all eight actors are tightly situated around $1$. 
\begin{figure}[t]
    \centering
    \begin{subfigure}[b]{0.45\textwidth}
        \centering
        \includegraphics[width=\textwidth]{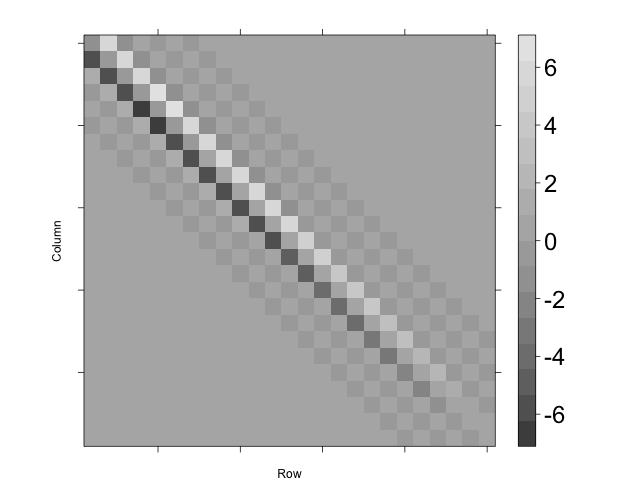}
        \caption{The population mean is at $1$}
        \label{fig:levelplot(At)one}
    \end{subfigure}
    \begin{subfigure}[b]{0.45\textwidth}
        \centering
        \includegraphics[width=\textwidth]{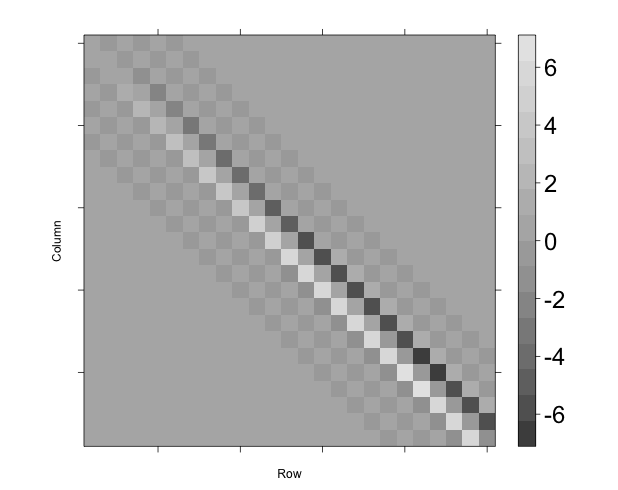}
        \caption{The population mean is at $0$}
        \label{fig:levelplot(At)zero}
    \end{subfigure}
    \caption{The level plots of $P^{-1} R_t$ for the discretized version $R_t$
    of $\mathcal A_t$, used in the simulation experiment for two particular cases, 
    where the horizontal axis is associated with the rows of $P^{-1} R_t$ and the vertical
    axis is associated with the columns of $P^{-1} R_t$.}
    \label{fig:levelplot(At)example}
\end{figure}

\begin{figure}[t]
    \begin{center}
        \includegraphics[width=\textwidth]{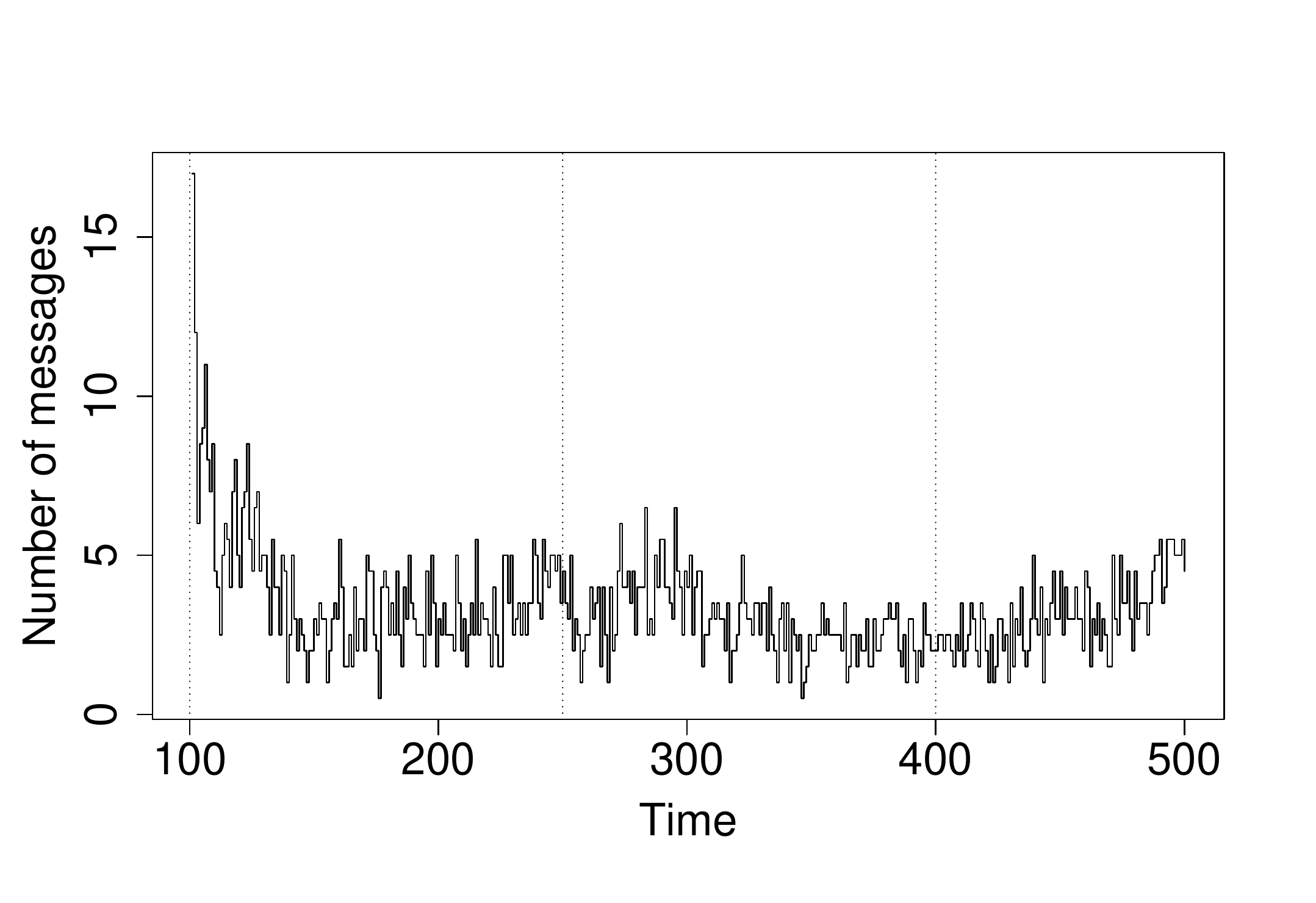}
    \end{center}
    \caption{ The $c_t = c_{t,\texttt{I}}$ case.
    The number of messages per $\Delta t$ across the time interval $[100
    \Delta t, 400 \Delta t]$ for the subpopulation of actors $1,\ldots,8$.}
    \label{fig:experimentone-num-msg}
\end{figure}

\begin{figure}[t]
    \begin{center}
        \includegraphics[width=\textwidth]{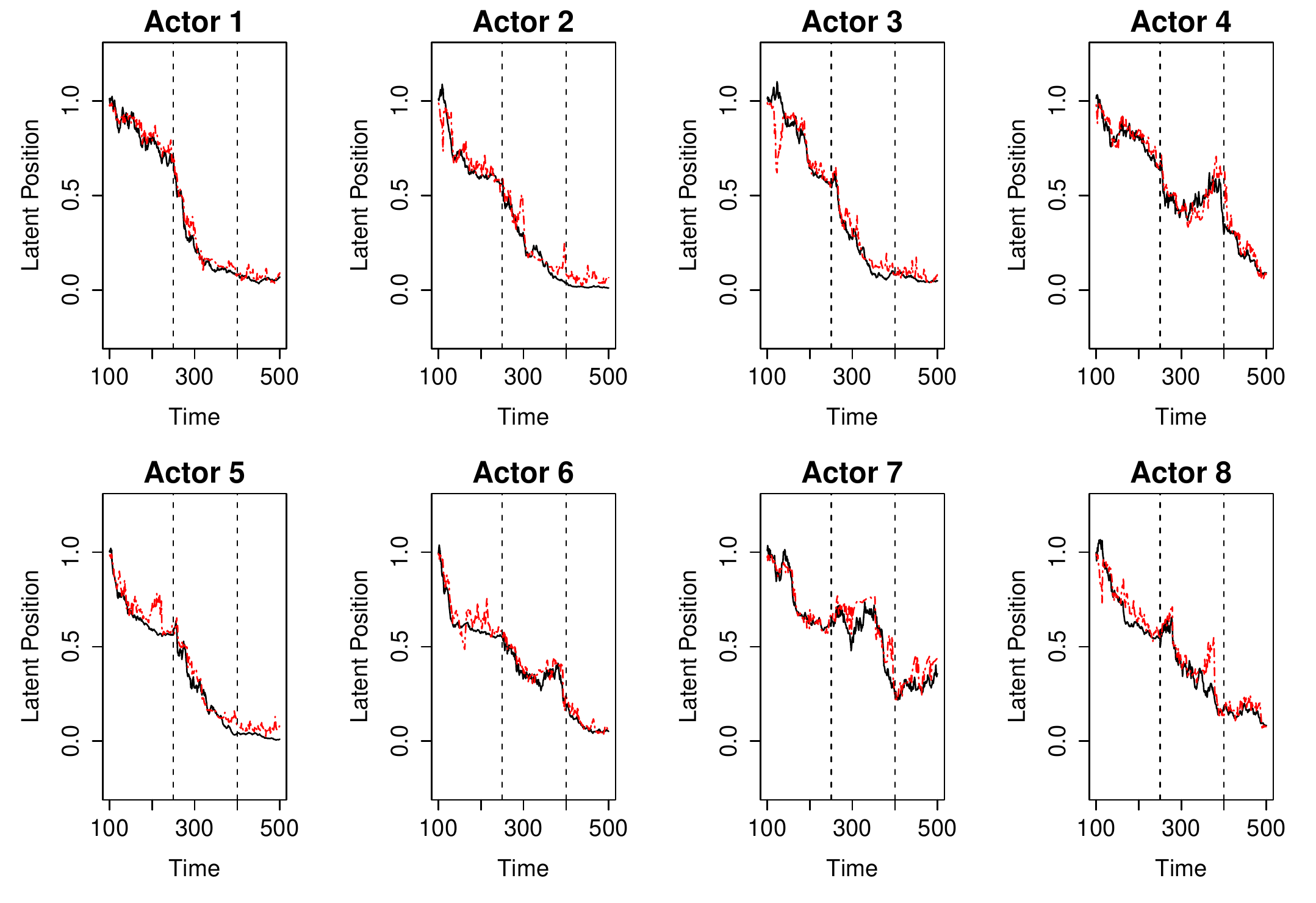}
    \end{center}
    \caption{ The $c_t = c_{t,\texttt{I}}$ case.
    The \emph{sample} path of the true and estimated latent position of
    each of eight actors used for Experiment 1 in a black solid
    line and in a dashed red line, respectively.}
    \label{fig:experimentone-perfect-filtering-case}
\end{figure}

\begin{figure}[t]
    \begin{center}
        \includegraphics[width=\textwidth]{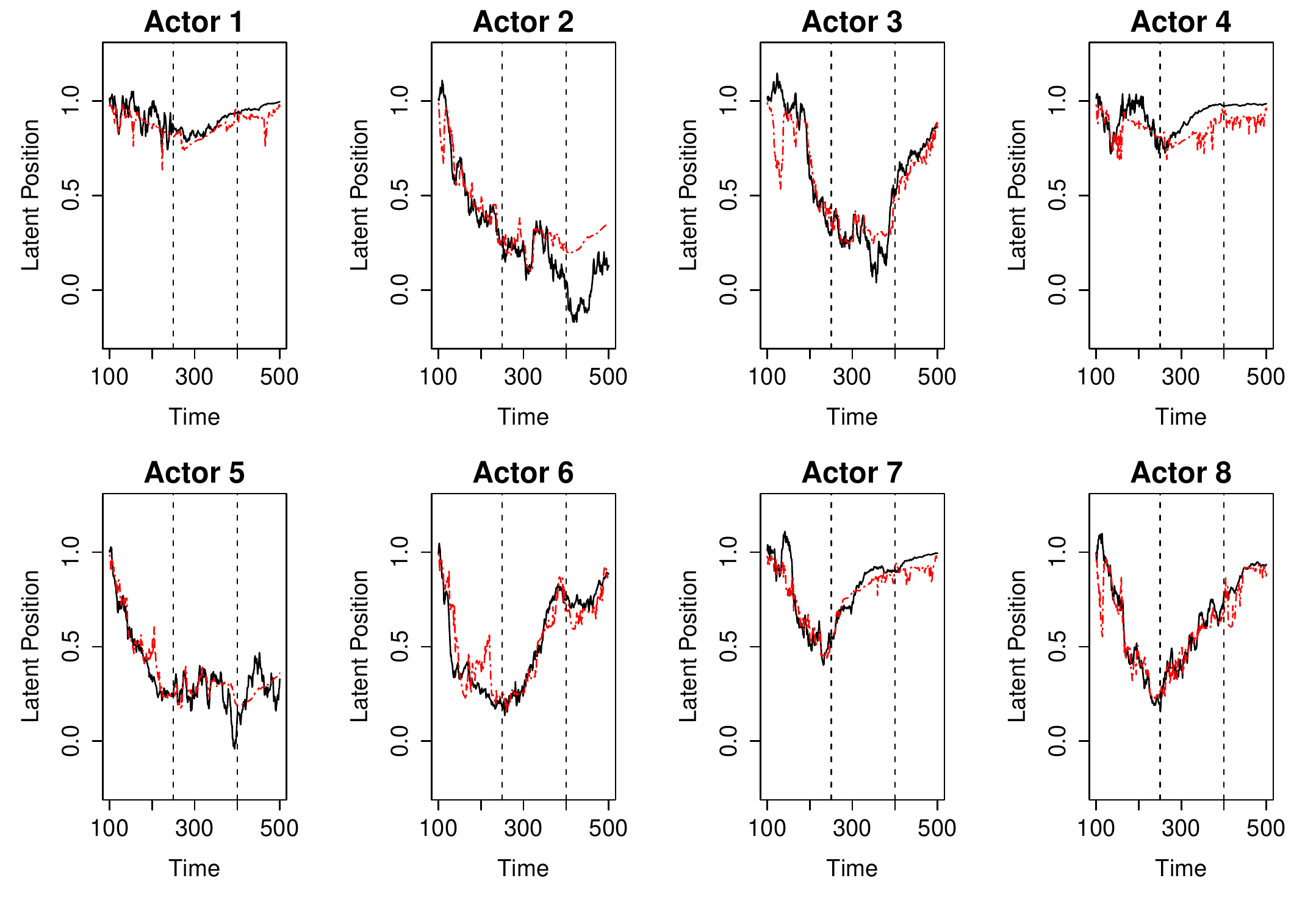}
    \end{center}
    \caption{The $c_t = c_{t,\texttt{II}}$ case.
    The \emph{sample} path of the true and estimated latent position of
    each of eight actors used for Experiment 1 respectively in a black solid
    line and in a dashed red line.}
    \label{fig:experimentone-sixactorpath}
\end{figure}

Our experiments for $c_{t,\texttt{I}}$ and $c_{t,\texttt{II}}$ 
both show that the filtered positions for
all eight actors are close to the exact positions.  \\

\paragraph{Experiment 2}
In this experiment, for each $t$, we have used the empirical distribution of
\begin{align*}
X_{t,n+1},\ldots, X_{t,n+L}
\end{align*}
to obtain an estimate $\widehat{\mu}_t$ of $\mu_t$ by partitioning the latent
space into sufficiently small intervals, where we place a uniform kernel of
height equal to the proportion of $\{X_{n+i} : i= 1,\ldots, L\}$ that lies in
that interval.  Our inference is on $X_{t,1},\ldots, X_{t,n}$.  Recall that
$n$ denotes the size of the subpopulation.  The number $n+L$ is the size of
the \emph{full} population.  This set-up is closer to the motivation for our
work, the bounded confidence model, \citet{GomezGrahamLeBoudec}, and the
connection with our model in this paper is made in Appendix
\ref{appendix:motivationforA}.  In theory, the general setup in Experiment 1
is comparable to the setup in Experiment 2 when $L$ in Experiment 2 is taken
to be $\infty$. 

We set $L=70$, $n=30$, $\Delta=.25$, and $\omega=.2$.  
We take the clustering based on $X_T$ as the \emph{ground truth}.  
Note that $\Delta$ here
is comparable to $\sigma_{t,i}$ in Experiment 1, or more generally, in our
model.  We set up the simulation to
observe roughly 3000 messages amongst the $n$ actors in unit time.
This translates to $10$ per actor per unit time. Note that this is a 
rough estimate as the messaging intensity is time-dependent and stochastic. 
In Figure \ref{fig:MDSvsExact}, we have snapshots of $\bm X_t = (
X_{t,1},\ldots, X_{t,n})$
and those of $\widehat{\bm X}_t = (\widehat{X}_{t,1},\ldots, \widehat{X}_{t,n})$
for a single simulation run.  
Denote as the latency 
\begin{align*}
    \Delta\zeta \equiv \widehat{\zeta} - \zeta,
\end{align*}
where the dependency on our choice for a clustering algorithm is suppressed in
our notation and for some $\varepsilon \in (0,1)$, 
\begin{eqnarray*}
    &\zeta \equiv \inf\{ t \in [0,T] :  \mari(\kappa({\bm X}_s), \kappa({\bm X}_T)) \ge 
    1-\varepsilon, \text{ for a.e.~} s \in [t,T]\},\\
    &\widehat{\zeta} \equiv \inf\{ t \in [0,T] : \mari(\kappa(\psi^*(\mathbf p_s)),
    \kappa({\bm X}_T)) \ge 1-\varepsilon, \text{ for a.e.~} s \in [t,T]\},
\end{eqnarray*}
where $\mari$ denotes a moving average of the Adjusted Rand Index
(c.f.~\cite{WMRand1971} and \cite{HubertArabie1985}) and 
we fix $\kappa$ to be a $k$-means clustering algorithm for concreteness. 
We use the latency as a performance measure for a clustering algorithm $\kappa$
under our framework.  For our projection, we use a Haar basis, i.e., a set of
simple step functions, where the width of the intervals used in the experiment
is $\frac{1}{42}$.  Also, unlike in Experiment 1, we take 
\begin{align*}
    p_{t,i\rightarrow j}(X_{t,i}) 
    = 
    \exp(-\|X_{t,i}-X_{t,j}\|^2). 
\end{align*}
These changes require us to modify our algorithm slightly. However, the
necessary modifications are straightforward, and we leave the details to the
reader. 

It is important to note that we do not assume knowledge of the latent position
of any individual, $X_i(t)$; instead, we use only our knowledge of the overall
population.
As the number $L$ gets larger, as shown in \cite{GomezGrahamLeBoudec},
the dependence among
\begin{align*}
    X_{t,1},\ldots, X_{t,n+L}
\end{align*}
diminishes, agreeing more closely with the model we specified in our
framework.  We investigate the behavior of our algorithm for small, medium and
large values of $L$, showing robustness of our framework in the face of
limited information.  Recall that Figure \ref{fig:MDSvsExact} shows results
for $L=70$.  Figure \ref{fig:Latency} compares the latency for $L=30$ and
$L=70$.  The clarity and accuracy of the clustering suffers with significant
reductions in information used to estimate the priors $\mu_t$.

In Figure \ref{fig:MDSvsExact}, we present snapshots of $\bm
X_t$ and $\widehat{\bm X}_t = \psi^*(\bm p_t)$ for a single simulation run.
Note that $\widehat{\bm X}_t$ is a CMDS embedding of a dissimilarity matrix
based on the posteriors $\bm p_t$.  The colors denote the final cluster
membership as determined from $k$-means clustering with $\bm X_T$.  It is
clear that the emerging cluster structure of the $\widehat{\bm X}_t$ lags
slightly behind that of $\bm X_t$ in both accuracy and clarity; comparing the
middle two figures, we can see that there are a few data points misclassified
at time $\tau_2$.  Indeed, Figure \ref{fig:ARI_MA} shows that the clustering
based on the embedded positions mirrors that possible with the true but
unobserved latent positions with a small latency.\\

\paragraph{Computing Environment}

For Experiment 1, we used R 2.14.1 (64 bit) under Ubuntu 12.0.4 on an Intel
Core i7 CPU 870 @ 2.93 GHz $\times$ 8 machine with 16 GB RAM.  For a single
run for $8$, $16$ and $32$ actors, our experiment took $190$, $788$ and $5384$
seconds respectively.  For Experiment 2, we used a Red Hat Linux cluster with
24 nodes with 24 $\times$ 2.5 MHz CPUs and 132 GB memory each.  Each Monte
Carlo replicate took a single slot.  A single replicate took approximately 3000
seconds. 

\begin{figure}[t]
    \begin{center}
        \includegraphics[width=0.7\textwidth]{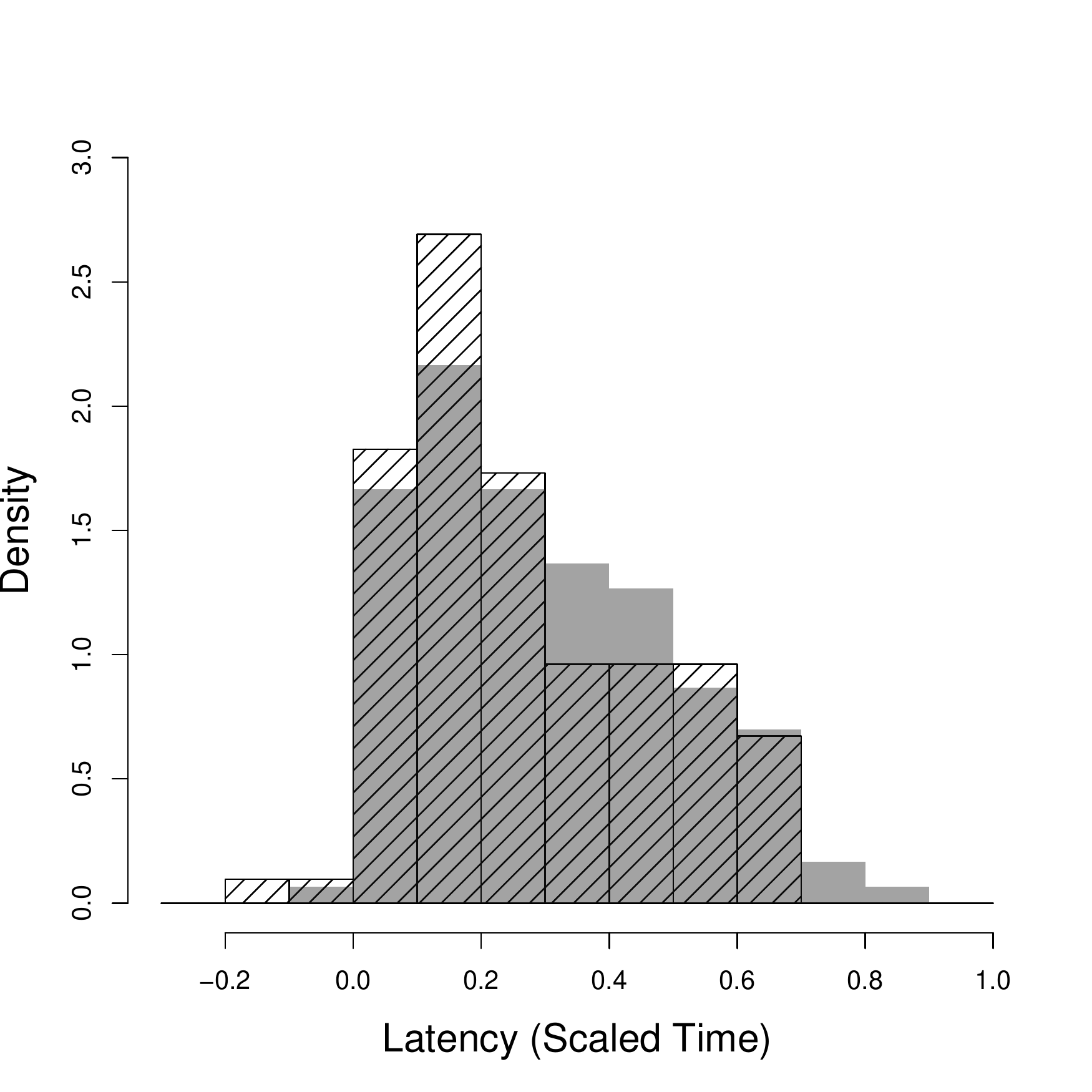}
    \end{center}
\caption{Latency ($\Delta\zeta$) distribution for $200$ Monte Carlo
    experiments.  The translucent grey histogram is based on $L=30$, and the
    cross-hatch shaded histogram is based on $L=70$.
The latency is defined as the difference between
the time $\widehat{\zeta}$ at which the moving average of the predictive ARI maintained a level of
$1-\varepsilon$ for all $t
\ge \widehat{\zeta}$ and the time $\zeta$ at
which true locations' moving average of the ARI maintained a level of $1-\varepsilon$ for all $t
\ge \zeta$. The latency can be negative, but is generally small and positive.
} \label{fig:Latency}
\end{figure}

\begin{figure}[t]
     \begin{center}
         \includegraphics[width=0.7\textwidth]{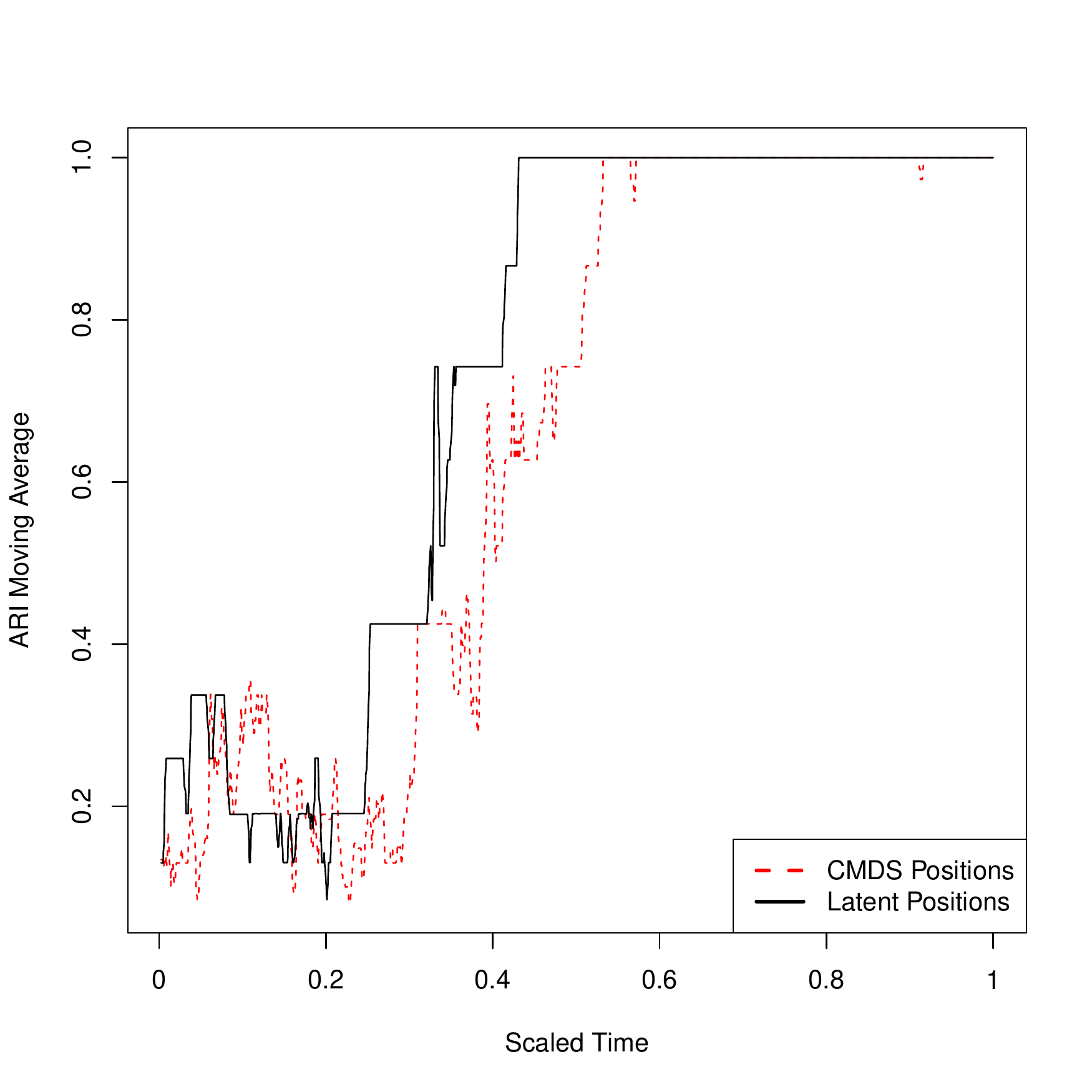}
    \end{center}
\caption{
Moving average of the $k$-means clusterings of the embedded $\widehat{\bm
X}_t$ and $\bm X(t)$ against the $k$-means clustering of $\bm X_T$.  Note that
$\tau_2$ (cf.\ Figure ~\ref{fig:MDSvsExact}) and $\eta^*$ are nearly
identical.  } \label{fig:ARI_MA}
\end{figure}

\begin{figure}[t]
    \begin{center}
        \includegraphics[width=0.7\textwidth]{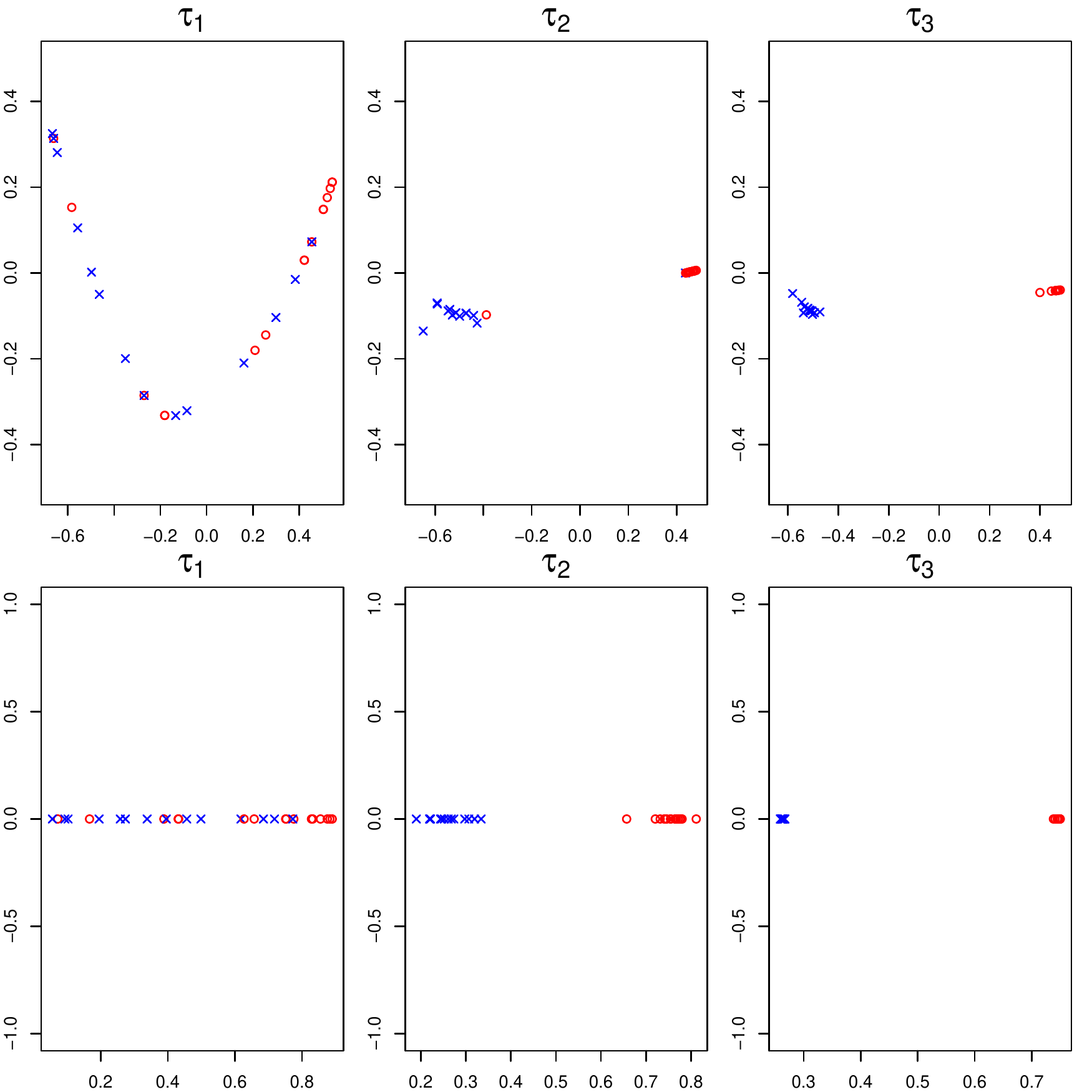}
    \end{center}
    \caption{$X_{t,i}$ versus $\widehat{X}_{t,i}$ at times $\tau_1<\tau_2<\tau_3$.
The size of the population used to estimate the prior was $L = 70$.  The first
row shows the CMDS embedded positions ($k=2$), and the second row shows the
latent positions. Due to our CMDS embedding procedure (with rotation), the 1-dimensional
embedding is the first coordinate of a 2-dimensional embedding.  We show a
2-dimensional embedding for illustration purposes.  
}
\label{fig:MDSvsExact} 
\end{figure}

\section{Conclusion and Future Work}
We have described a strategy for clustering actors based on messaging
activities. Our analysis is completed by clustering a CMDS embedding of
posteriors.  We have presented ways to simplify posterior analysis on two
levels.  The first level allows us to obtain an estimate of the
posteriors in an online manner.  The second level allows us to reduce our
analysis to studying diffusion processes, which is often a starting point for
addressing the optimal stopping problem.

We have illustrated in our numerical experiments that the assumptions used to
derive our two simplified approaches are mild enough to be useful for our
inference task at hand, i.e., clustering. 

We believe that our framework has potential for tackling the problems faced by
the social network practitioner regarding emergence of structure.  We intend to
develop a measure of confidence for our inferred latent positions.  This will
be crucial to many applications, as it will provide the decision-maker with
information about whether to act or to wait for more data to increase the
confidence in the inferred positions.  A measure of confidence would therefore
be a way to establish a stopping rule.  Noting that we took the
parameters of our model to be exogenous, we will need to explore robustness of
our inference to incorrect parameter choices and then make explicit an
algorithm for parameter estimation.  Making our algorithm more scalable is
also an area of our interest.  These areas of future work will be key to
applying our framework on substantial problems.

\section*{Acknowledgements}
This work is partially supported by a National Security Science and
Engineering Faculty Fellowship (NSSEFF), by the Acheson J. Duncan Fund for the
Advancement of Research in Statistics, and by the Johns Hopkins University
Human Language Technology Center of Excellence (JHU HLT COE).  We also thank
Dr.~Youngser Park for his technical assistance.

\begin{appendix}
\section{Motivation for the form of the differential operator $\mathcal A_t$}
\label{appendix:motivationforA}
\subsection{Bounded confidence model: an adaptation}
Our work in this paper is in part influenced by a so-called bounded confidence
model in \cite{GomezGrahamLeBoudec} which focuses on establishing a
propagation of chaos property of the interacting particles model studied
there.  When denoting the actors' latent positions $X_1(t), X_2(t),\ldots \in
[0,1]$, in the bounded confidence model, the \emph{opportunities} for (latent)
position changes that each actor experiences is modeled as a simple Poisson
process.  When there is a change at time $t$, the change is assumed to involve
precisely two actors, say, actor $i$ and actor $j$, such that their position
$X_i(t-)$ and $X_j(t-)$ differs by at most $\Delta$.  This yields an
inhomogeneity in the rate at which actors change their locations. Then, the
exact amount of change is specified by the following formula: 
\begin{align*}
    X_i(t) &=\omega X_i(t-) + (1-\omega) X_j(t-),\\
    X_j(t) &=\omega X_j(t-) + (1-\omega) X_i(t-),
\end{align*}
where $w\in (0,1)$ is a fixed constant.  Roughly speaking, upon interaction,
actor $i$ keeps $w\times 100$ percent of its original position, and is allowed
to be influenced by $(1-\omega)\times 100$ percent of the original position of
actor $j$, and vice versa.

Fix constants $\Delta \in (0,1)$ and $w \in (0,1)$.
Then, define $\mathcal{A}$ by
letting for each $\mu$ and $f$,
\begin{align*}
    \mathcal{A}(\mu)f(x) 
    = 
    2 
    \int_{|x-y|\le \Delta} 
        f(\omega x+(1-\omega)y) - f(x)) 
    \mu(dy).
\end{align*}

Studied in \cite{GomezGrahamLeBoudec} particularly is the interaction between
$\mu_t$ and $\bm X = (X_{1},\ldots, X_{n})$ where $\mu_t$ is the empirical
distribution of $\bm X$.  As shown in \cite{GomezGrahamLeBoudec}, the bounded
confidence model has an appealing feature that the parameter space for the
underlying parameters $w$ and $\Delta$ can be partitioned according to the
type of consensus that the population eventually reaches, namely, a total
consensus and a partial consensus.  In a total consensus regime, for
sufficiently large $t$, everyone is expected to gather tightly around some
fixed common point $x_0 \in [0,1]$.  On the other hand, in a partial consensus
regime, (depending on $w$ and $\Delta$), there is a finite collection of
distinct values in $[0,1]$ separated by at least $\Delta$, to exactly one of
which each actor's position is attracted.  In particular, the (asymptotic)
position of actors yields a partition of the actor set when the exact
locations of $X_1, X_2, \ldots$ are known.  Generally, $(\mu_t:t \in [0,T])$
is contracting toward for some closed convex non-empty disjoint subsets $B_1$
and $B_2$ of $[0,1]$ in the sense that for some $t \in [0,T]$, $\mu_s(B_1\cup
B_2) \ge \mu_t(B_1\cup B_2)$ for each $s \ge t$ and $\mu_T([0,1]) = 1$.

In our adaptation, for analytic tractability, we replace the indicator
function ${\bm 1}_{|x-y|\le \Delta}$ with $\psi(z) = \exp(-\frac{1}{2}z^\top
z)$, take $\mu_t$ to be an exogenous modeling element, and take $w_t$ to be
potentially time dependent, yielding the operator \begin{align}\label{dynbcm}
    \mathcal A(\mu)f(x) = 2 \int \psi(y-x) \left(f(\omega x+ (1-\omega) y) -
    f(x)\right) \mu(y) dy.  \end{align}

The second numerical experiment in Section \ref{sec:NumericalExperiments}
focuses on the case where the community
starts with no apparent clustering but as time passes, each actor becomes a
member of exactly one of clusters, where each cluster is uniquely
identified by a closed convex subset of the latent space $\mathbb R^d$.

\subsection{A quadratic Taylor series approximation}
In this work, we use a model that that captures the action in \eqref{dynbcm}
up to the second order. To begin, note that 
\begin{align*}
    f(z) = f(x) + Df(x) \cdot (z - x) +  \frac{1}{2} (z-x)^\top D^2f(x) (z-x)
    + \textsc{H.O.T.},
\end{align*}
where $Df(x)\in \mathbb R^d$ and $Df(x) \in \mathbb M_{d\times d}$ denote
respectively the gradient and the Hessian of $f$ at $x$, and \textsc{H.O.T.}
denotes the higher order terms. Suppose that $\mu_t$ is given. Now, we have 
\begin{align*}
    \mathcal A_t f(x) &:= 
    \mathcal A(\mu_t)f(x) \\
    &= 
    2 \int \psi(y-x) 
    Df(x)\cdot (1-\omega) (y-x)
    \mu_t(y) 
    dy \\
    &\qquad +  
    2 \int \psi(y-x) 
    \left(
        \frac{1}{2} (1-\omega)^2 (y-x)^\top D^2f(x) (y-x)
    \right)
    \mu_t(y) 
    dy
    +
    \textsc{H.O.T.} \\
    &= 
    \left(\sum_{k=1} b_t^k(x) \partial_k f(x) 
    + 
    \sum_{k_1} \sum_{k_2} a_t^{k_1,k_2}(x) \partial_{k_1,k_2}^2 f(x)
    \right)
    +
    \textsc{H.O.T.},
\end{align*}
where $b_t(x) \in \mathbb R^d$ and $a_t(x) \in \mathbb R^{d\times
d}$ are given by the following:
\begin{align*}
    &b_t^k(x) = 2(1-\omega_t) \int \psi(y-x) (y-x)_k \mu_t(y) dy, \\
    &a_t^{k_1,k_2}(x) = (1-\omega_t)^2 \int \psi(y-x) (y-x)_{k_1}
    (y-x)_{k_2} \mu_t(y) dy.
\end{align*}
Dropping the term associated with \textsc{H.O.T.}, we obtain the following:
\begin{align*}
    \mathcal A_t f(x) 
    = 
    \left(\sum_{k} b_t^k(x) \partial_k f(x) 
    + 
    \sum_{k_1} \sum_{k_2} a_t^{k_1,k_2}(x) \partial_{k_1,k_2}^2 f(x)
    \right).
\end{align*}

\section{The mixture projection filter formula}
\label{appendix:mixtureprojectionformula}
\subsection{Proof of Theorem \ref{thm:ProjFilterZakai}}
For each $\phi_{r}$, we see that 
\begin{align*}
    \langle \phi_{r}, d{p}_{t,i} \rangle 
    = 
    \sum_{c} \langle \phi_{r}, \phi_{c} \rangle d{W}_{t,i,c}
    = e_{r}^\top P d{W}_{t,i}.
\end{align*}
We first consider the second term of the right side of
\eqref{eqn:ProjFilterZakai}. \begin{align*}
    dH_{t,i}(x) 
    &:=
  \sum_{j\neq i} 
    \left(
    \frac{\phi_r(x) 
    (
    p_{t,i}(x) \lambda_{t,i\rightarrow j}(x)
    +
    p_{t,j}(x) \lambda_{t,j\rightarrow i}(x))
    }{\lambda_{t,ij}} - p_{t,i}(x)
    \right)
    dM_{t,ij} \\
    &:=
  \sum_{j\neq i} 
    \left(
    \frac{\phi_r(x) 
    (
    p_{t,i}(x) \lambda_{t,i\rightarrow j}(x)
    +
    p_{t,j}(x) \lambda_{t,j\rightarrow i}(x))
    }{\lambda_{t,ij}} - p_{t,i}(x)
    \right)
    \left( dN_{t,ij} - \lambda_{t,ij} dt \right).
\end{align*}
Now, we have that
\begin{align*}
    &\quad\int \phi_r(x) 
    (p_{t,i}(x) \lambda_{t,i\rightarrow j}(x) + p_{t,j}(x) \lambda_{t,j\rightarrow i}(x))
    dx\\
    &= (\lambda_{t,i} \lambda_{t,j}/2)
    \int \phi_r(x) 
    (p_{t,i}(x) p_{t,j}(x) + p_{t,j}(x) p_{t,i}(x)) dx \\
    &= (\lambda_{t,i}\lambda_{t,j}/2) 2 \langle \phi_r,p_{t,i}p_{t,j}\rangle \\
    &= \lambda_{t,i}\lambda_{t,j} \langle \phi_r,p_{t,i}p_{t,j}\rangle,
\end{align*}
and that
\begin{align*}
    \lambda_{t,ij} 
    = 
    \lambda_{t,i}\lambda_{t,j} \langle p_{t,i}, p_{t,j} \rangle 
    = 
    \lambda_{t,i}\lambda_{t,j} 
    \sum_{k_1=1}^K \sum_{k_2=1}^K 
        W_{t,i}^{k_1} W_{t,j}^{k_2} \langle \phi_{k_1}, \phi_{k_2} \rangle
    =
    \lambda_{t,i}\lambda_{t,j} W_{t,i}^\top P W_{t,j}.
\end{align*}
Hence,
\begin{align*}
    \langle \phi_r, dH_{t,i} \rangle 
    &= 
    \sum_{j\neq i } 
    \left(
    \dfrac{\langle \phi_r, p_{t,i}p_{t,j} \rangle }{W_{t,i}^\top P W_{t,j}}
    - 
    \langle \phi_r, p_{t,i} \rangle
    \right)
    \left(
    dN_{t,ij} 
    - 
    \lambda_{t,i}\lambda_{t,j} W_{t,i}^\top P W_{t,j} dt 
    \right)\\
    &= 
    \sum_{j\neq i } 
    \left(
    \dfrac{W_{t,i}^\top S_r W_{t,j}}{W_{t,i}^\top P W_{t,j}}
    - 
    e_r^\top P W_{t,i}
    \right)
    \left(
    dN_{t,ij} 
    - 
    \lambda_{t,i}\lambda_{t,j} W_{t,i}^\top P W_{t,j} dt 
    \right).
\end{align*}

Next, for the first term of the right side of
\eqref{eqn:ProjFilterZakai}, we have 
\begin{align*}
    \langle \phi_{r}, \mathcal A_{t,i}^* p_{t,i} \rangle 
    &= \sum_{\ell} q_{t,\ell} \langle \mathcal A_{t,i,\ell} \phi_{r}, p_{t,i} \rangle \\
    &= \sum_{\ell} q_{t,\ell} \sum_{c}  \langle \mathcal A_{t,i,\ell} \phi_{r}, \phi_c \rangle W_{t,i,c}\\
    &= \sum_{\ell} q_{t,\ell} \sum_{c}  \langle \mathcal A_{t,i,\ell} \phi_{r}, \phi_c \rangle W_{t,i,c}\\
    &= \sum_{\ell} q_{t,\ell} e_r^\top R_{t,i,\ell} W_{t,i}.
\end{align*}
In summary, for each $r$, we have 
\begin{align*}
    e_r^\top P d{W}_{t,i}
    &= 
    e_r^\top R_t W_{t,i} dt 
    +
    \sum_{j\neq i } 
    \left(
    \dfrac{W_{t,i}^\top S_r W_{t,j}}{W_{t,i}^\top P W_{t,j}}
    - 
    e_r^\top P W_{t,i}
    \right)
    \left(
    dN_{t,ij} 
    - 
    \lambda_{t,i}\lambda_{t,j} W_{t,i}^\top P W_{t,j} dt 
    \right),
\end{align*}
and our claim follows from this.

\subsection{Preliminary lemmas}
This section contains two formulas to be used in the next section.
Our result and proof in Lemma \ref{lem:product-density-formula}
is stated in the same notation as in 
Lemma \ref{lem:algebra-formula}. Recall that
$\phi(z;\vartheta, \gamma ) \propto \phi(\gamma^{-1}(z-\vartheta))$.

\begin{lemma}\label{lem:algebra-formula}
    Let $\{\vartheta_\ell\} \subset \mathbb R^d$ and $\{\gamma_\ell\} \subset
    \mathbb R_+$.  Then, 
\begin{align*}
&\quad \sum_{\ell} \left\|\gamma_\ell^{-1} (x-\vartheta_\ell)\right\|^2\\
&=
\left(\sum_{\ell} \gamma_\ell^{-2}\right)
\left\| x - 
\sum_\ell \left(\frac{\gamma_\ell^{-2}}{\sum_{m} \gamma_m^{-2}} \right)\vartheta_\ell 
\right\|^2
-\frac{\bm 1^\top \left(\Gamma * (\Theta - \diag(\Theta) \bm 1^{\top})\right)\bm 1 }{\sum_n \gamma_n^{-2}}
,
\end{align*}
where $\Theta$ is the Gram matrix for $(\theta_\ell)$ and $\Gamma$ is the
matrix whose $(r,c)$-entry is $\gamma_r^{-2}\gamma_c^{-2}$.  
\end{lemma}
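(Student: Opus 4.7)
The plan is to treat this as a weighted completion-of-squares identity in which the weights are $w_\ell := \gamma_\ell^{-2}$ and $W := \sum_\ell w_\ell$. I would first expand the left-hand side coordinate-free as
\begin{align*}
\sum_\ell w_\ell \|x - \vartheta_\ell\|^2 = W\|x\|^2 - 2\Bigl\langle x,\, \sum_\ell w_\ell \vartheta_\ell\Bigr\rangle + \sum_\ell w_\ell \|\vartheta_\ell\|^2,
\end{align*}
and define the weighted barycenter $\bar\vartheta := W^{-1}\sum_\ell w_\ell \vartheta_\ell$, which is precisely the convex combination appearing in the statement. Completing the square gives
\begin{align*}
\sum_\ell w_\ell \|x - \vartheta_\ell\|^2 = W\,\|x - \bar\vartheta\|^2 + \Bigl(\sum_\ell w_\ell \|\vartheta_\ell\|^2 - W\|\bar\vartheta\|^2\Bigr),
\end{align*}
which already reproduces the first term on the right-hand side. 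So everything reduces to showing that the residual equals the claimed bilinear expression.

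Next I would rewrite both pieces of the residual as double sums indexed by $(r,c)$. Using $\sum_r w_r \|\vartheta_r\|^2 = W^{-1}\sum_{r,c} w_r w_c \Theta_{rr}$ (multiplying and dividing by $W = \sum_c w_c$) and $W\|\bar\vartheta\|^2 = W^{-1}\sum_{r,c} w_r w_c \Theta_{rc}$, the residual collapses to
\begin{align*}
\sum_\ell w_\ell \|\vartheta_\ell\|^2 - W\|\bar\vartheta\|^2 = \frac{1}{W}\sum_{r,c} w_r w_c (\Theta_{rr} - \Theta_{rc}).
\end{align*}

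Finally I would identify this sum with the quadratic form in the statement. Since $\Gamma_{rc} = w_r w_c$ and, by the paper's convention that $\diag(\Theta)$ is a column vector, $(\diag(\Theta)\bm 1^\top)_{rc} = \Theta_{rr}$, the Hadamard product satisfies $\bigl(\Gamma * (\Theta - \diag(\Theta)\bm 1^\top)\bigr)_{rc} = w_r w_c(\Theta_{rc} - \Theta_{rr})$, and sandwiching by $\bm 1^\top(\cdot)\bm 1$ yields exactly $-\sum_{r,c} w_r w_c(\Theta_{rr} - \Theta_{rc})$. Dividing by $W$ and substituting back gives the asserted formula. There is no real obstacle here: the whole lemma is a bookkeeping exercise, and the only place to be careful is to match the sign convention of $\Theta - \diag(\Theta)\bm 1^\top$ (which introduces the minus sign in front of the last term) and to make sure that $\diag(\Theta)$ is interpreted as the column vector rather than the diagonal matrix, consistent with the paper's notational convention.
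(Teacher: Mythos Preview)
Your proposal is correct and follows essentially the same route as the paper's own proof: expand the weighted sum of squared norms, complete the square to isolate the $W\|x-\bar\vartheta\|^2$ term, and then rewrite the residual $\sum_\ell w_\ell\|\vartheta_\ell\|^2 - W\|\bar\vartheta\|^2$ as the double sum $W^{-1}\sum_{r,c} w_r w_c(\Theta_{rr}-\Theta_{rc})$ before matching it to the Hadamard-product expression. Your handling of the sign and of the $\diag(\Theta)$ convention is exactly what the paper does, just with slightly cleaner notation.
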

\begin{proof}
Let $C = \sum_{\ell} \gamma_\ell^{-2}$ and for each $\ell$, let 
$\rho_\ell = \gamma_\ell^{-2}/C$. First, note that 
\begin{align*}
&\quad\sum_{\ell} \left\|\gamma_\ell^{-1} (x-\vartheta_\ell)\right\|^2
= 
\sum_{\ell} 
\gamma_\ell^{-2}
\left(x^\top x - 2 x^\top \vartheta_\ell + \vartheta_\ell^\top \vartheta_\ell\right)\\
&= 
\left(\sum_{\ell} \gamma_\ell^{-2}\right)
\|x\|^2 
- 
2 x^\top \left( \sum_\ell \gamma_\ell^{-2}\vartheta_\ell\right) 
+ \sum_\ell \vartheta_\ell^\top \gamma_\ell^{-2}\vartheta_\ell\\
&= 
\left(\sum_{\ell} \gamma_\ell^{-2}\right)
\left(
\|x\|^2 
- 
2 x^\top \left( \sum_\ell \rho_\ell\vartheta_\ell\right) 
+ 
\|\sum_\ell \rho_\ell \vartheta_\ell \|^2
-
\|\sum_\ell \rho_\ell \vartheta_\ell \|^2
\right)
+ \sum_\ell \gamma_\ell^{-2}\vartheta_\ell^\top \vartheta_\ell\\
&= 
C\left\| x- \sum \rho_\ell \vartheta_\ell \right\|^2 
- 
C \| \sum_\ell \rho_\ell \vartheta_\ell \|^2
+ \sum_\ell \gamma_\ell^{-2}\vartheta_\ell^\top \vartheta_\ell.
\end{align*}
Now, 
\begin{align*}
&\quad C \| \sum_\ell \rho_\ell \vartheta_\ell \|^2
- 
\sum_\ell \gamma_\ell^{-2}\vartheta_\ell^\top \vartheta_\ell \\
&=
C 
\sum_r \sum_c
\frac{\gamma_r^{-2}}{C} 
\frac{\gamma_c^{-2}}{C} 
\vartheta_r^\top \vartheta_c
-
\sum_r
\gamma_r^{-2}
\vartheta_r^\top \vartheta_r\\
&=
\frac{1}{C}
\sum_r \sum_{c\neq r}
\gamma_r^{-2} 
\gamma_c^{-2} 
\vartheta_r^\top \vartheta_c
+
\sum_r
\gamma_r^{-2}
(
\gamma_r^{-2}/C
-
1
)
\vartheta_r^\top \vartheta_r\\
&=
\frac{1}{C}
\left(
\sum_r \sum_{c\neq r}
\gamma_r^{-2} 
\gamma_c^{-2} 
\vartheta_r^\top \vartheta_c
+
\sum_r
\gamma_r^{-2}
(
\gamma_r^{-2}
-
C
)
\vartheta_r^\top \vartheta_r
\right)\\
&=
\frac{1}{C}
\left(
\sum_r \sum_{c\neq r}
\gamma_r^{-2} 
\gamma_c^{-2} 
\vartheta_r^\top \vartheta_c
-
\sum_r
\gamma_r^{-2}
\sum_{c\neq r} 
\gamma_c^{-2}
\vartheta_r^\top \vartheta_r
\right)\\
&=
\frac{1}{\sum_{\ell} \gamma_\ell^{-2}}
\left(
\sum_r \sum_{c}
\gamma_r^{-2} 
\gamma_c^{-2} 
\left(
\vartheta_r^\top \vartheta_c
-
\vartheta_r^\top \vartheta_r
\right)
\right).
\end{align*}
Our claim follows from this. 
\end{proof}
\begin{lemma}\label{lem:product-density-formula}
Let $\phi$ be the standard multivariate normal density defined 
on $\mathbb R^d$. 
Also, fix a sequence $\{\gamma_m\}_{m=1}^M \subset \mathbb R_+$,
and a sequence $\{\vartheta_\ell\}_{m=1}^M \subset \mathbb R^d$. 
    \begin{align*}
        \prod_{m} \phi(x;\vartheta_m,\gamma_m)
        &= 
        \left(
        \frac{2\pi/\prod_{m}(2\pi)}{\sum_{m}\gamma_m^{-2} 
        \prod_{m} \gamma_m^{2}}
        \right)^{d/2} \\
        &\qquad\cdot
        \exp\left(
        \frac{1}{2} 
        \frac{\bm 1^\top \left(\Gamma * (\Theta - \diag(\Theta) \bm 1^{\top})\right)\bm 1 }{\sum_n \gamma_n^{-2}}
        \right)\\
        &\qquad \cdot 
        \phi\left(x ;
        \sum_\ell \left(\frac{\gamma_\ell^{-2}}{\sum_{m} \gamma_m^{-2}} \right)\vartheta_\ell 
        ,\left(\sum_m \gamma_m^{-2}\right)^{-1/2}\right).
    \end{align*}
\end{lemma}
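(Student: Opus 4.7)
The plan is to complete the square in the exponent of the product using the algebraic identity from Lemma \ref{lem:algebra-formula}, and then match normalization constants to rewrite the result as a single Gaussian density.

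First, I would unpack the notation. Since $\phi$ is the standard Gaussian density on $\mathbb R^d$, the convention $\phi(z;\vartheta,\gamma) \propto \phi(\gamma^{-1}(z-\vartheta))$ combined with the requirement that it integrate to one yields
\begin{align*}
\phi(z;\vartheta,\gamma) = (2\pi\gamma^{2})^{-d/2}\exp\!\left(-\tfrac{1}{2}\|\gamma^{-1}(z-\vartheta)\|^{2}\right).
\end{align*}
Taking the product over $m$ therefore produces a single exponential whose argument is $-\tfrac{1}{2}\sum_{m}\|\gamma_{m}^{-1}(x-\vartheta_{m})\|^{2}$, multiplied by the normalization constant $\prod_{m}(2\pi\gamma_{m}^{2})^{-d/2}$.

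Next I would invoke Lemma \ref{lem:algebra-formula} directly on that sum. Setting $C=\sum_{m}\gamma_{m}^{-2}$ and $\bar\vartheta=\sum_{\ell}(\gamma_{\ell}^{-2}/C)\vartheta_{\ell}$, the lemma rewrites the sum as a single quadratic in $(x-\bar\vartheta)$ plus an $x$-independent correction term. Exponentiating and applying $-\tfrac{1}{2}$ peels off the cross term
\begin{align*}
\exp\!\left(\tfrac{1}{2}\,\tfrac{\bm 1^\top(\Gamma*(\Theta-\diag(\Theta)\bm 1^\top))\bm 1}{\sum_{n}\gamma_{n}^{-2}}\right),
\end{align*}
which already matches the second factor in the claimed identity, and leaves behind $\exp(-\tfrac{C}{2}\|x-\bar\vartheta\|^{2})$. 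This residual is, up to normalization, a Gaussian centered at $\bar\vartheta$ with scale $\bar\gamma = C^{-1/2}$; completing it into the density $\phi(x;\bar\vartheta,\bar\gamma)$ costs a factor of $(2\pi/C)^{d/2}$.

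All that remains is bookkeeping on the normalizing constants. Combining $\prod_{m}(2\pi\gamma_{m}^{2})^{-d/2}$ with $(2\pi/C)^{d/2}$ gives
\begin{align*}
\left(\frac{2\pi}{C\,\prod_{m}(2\pi\gamma_{m}^{2})}\right)^{d/2} = \left(\frac{2\pi/\prod_{m}(2\pi)}{\sum_{m}\gamma_{m}^{-2}\prod_{m}\gamma_{m}^{2}}\right)^{d/2},
\end{align*}
which is precisely the first factor in the statement. I expect no real obstacle beyond careful bookkeeping: the only place where a sign or an exponent could go wrong is in the $\tfrac{1}{2}$-vs-$-\tfrac{1}{2}$ accounting when exponentiating the identity from Lemma \ref{lem:algebra-formula}, and in verifying that the quadratic coefficient $C$ exactly matches the variance scale $\bar\gamma^{2}=C^{-1}$ needed to form $\phi(x;\bar\vartheta,\bar\gamma)$.
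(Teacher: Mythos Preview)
Your proposal is correct and follows essentially the same approach as the paper: expand the product of Gaussians, apply Lemma~\ref{lem:algebra-formula} to the resulting sum of quadratics in the exponent, recognize the Gaussian density $\phi(x;\bar\vartheta,C^{-1/2})$, and then track the normalization constants. The paper's proof carries out exactly this chain of equalities line by line.
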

\begin{proof}
 Using Lemma \ref{lem:algebra-formula}, we see that
\begin{align*}
\prod_{m} \phi(x;\vartheta_m,\gamma_m)
&= 
\prod_{m} \frac{1}{\left(\sqrt{2\pi\gamma_m^2}\right)^d} 
\exp
\left(
-\frac{1}{2}
\left\|
\frac{x - \vartheta_m}{\gamma_m}
\right\|^2
\right)
\\
&=
\frac{1}{\left(\sqrt{\prod_m2\pi\gamma_m^2}\right)^d} 
\exp
\left(
-\frac{1}{2}
\sum_m
\left\|
\frac{x - \vartheta_m}{\gamma_m}
\right\|^2
\right)\\
&=
\frac{1}{\left(\sqrt{\prod_m 2\pi\gamma_m^2}\right)^d} 
\exp
\left(
-\frac{1}{2}
\left\| x - 
\sum_\ell \left(\frac{\gamma_\ell^{-2}}{\sum_{m} \gamma_m^{-2}} \right)\vartheta_\ell 
\right\|^2 \Big /
\left(\sum_{\ell} \gamma_\ell^{-2}\right)
\right)\\
&\qquad 
\cdot \exp
\left(
\frac{1}{2}
\frac{\bm 1^\top \left(\Gamma * (\Theta - \diag(\Theta) \bm 1^{\top})\right)\bm 1 }{\sum_n \gamma_n^{-2}}
\right)\\
&=
\frac{
\left(\sqrt{2\pi\left(\sum_{\ell} \gamma_\ell^{-2}\right)^{-1}
}\right)^d}{\left(\sqrt{\prod_m2\pi\gamma_m^2}\right)^d} 
\phi\left(x;
\sum_\ell \left(\frac{\gamma_\ell^{-2}}{\sum_{m} \gamma_m^{-2}}
\right)\vartheta_\ell,
\left(\sum_{\ell} \gamma_\ell^{-2}\right)^{-1/2}
\right)\\
&\qquad 
\cdot \exp
\left(
\frac{1}{2}
\frac{\bm 1^\top \left(\Gamma * (\Theta - \diag(\Theta) \bm 1^{\top})\right)\bm 1 }{\sum_n \gamma_n^{-2}}
\right)\\
&=
\left(
\frac{2\pi/\prod_m2\pi}{\sum_{\ell} \gamma_\ell^{-2} \prod_{\ell}\gamma_\ell^2} 
\right)^{d/2}
\phi\left(x;
\sum_\ell \left(\frac{\gamma_\ell^{-2}}{\sum_{m} \gamma_m^{-2}}
\right)\vartheta_\ell,
\left(\sum_{\ell} \gamma_\ell^{-2}\right)^{-1/2}
\right)\\
&\qquad 
\cdot \exp
\left(
\frac{1}{2}
\frac{\bm 1^\top \left(\Gamma * (\Theta - \diag(\Theta) \bm
1^{\top})\right)\bm 1 }{\sum_m \gamma_m^{-2}}
\right).
\end{align*}
\end{proof}

\subsection{Formula for $R_{t,i,\ell}$ in a multivariate normal density case}
Here, we assume, as done in Theorem \ref{thm:ProjFilterZakai}, that $\mu_t(y)
= \sum_{\ell} q_{t,\ell} \phi_\ell(y)$ and $p_{t,i}(x) = \sum_{\ell}
W_{t,i,\ell} \phi_\ell(y)$, where for simplicity, we have written
$\phi_\ell(z) := \phi(z;\theta_\ell, s) \propto \phi(s^{-1}(z-\theta_\ell))$.  In
this section, we fix $\phi$ to be the standard multivariate normal density
defined on $\mathbb R^d$ and recall that $\psi(z) = \phi(z)/\phi(0)$.  Also,
we fix $s \in \mathbb R_+$, and a sequence $\{\theta_\ell\}\subset \mathbb R^d$. 

\begin{lemma}\label{lem:micro-b-a}
Fix $\theta_\ell$, $x \in \mathbb R^d$ and $s \in \mathbb R_+$. 
For each $k$,
\begin{align}
    &b_{t,i,\ell}^k(x) 
    =
    -2 (1-\omega_{t,i}) 
    (x - \theta_\ell)_k
    \frac{\sigma_{t,i}^2}{\sigma_{t,i}^2+\alpha_{t,\ell}^2}
    \sqrt{(2\pi\sigma_{t,i}^2)^{d}}
    \phi(x;\theta_\ell,
    (\sigma_{t,i}^2+\alpha_{t,\ell}^2)^{1/2})
    ,\label{eqn:micro-b}  \\
    &a_{t,i,\ell}^{k_1,k_2}(x) 
    =
    (1-\omega_{t,i})^2 \sqrt{(2\pi \sigma_{t,i}^2)^d} 
    \phi(x;\theta_\ell,(\sigma_{t,i}^2 + \alpha_{t,\ell}^2)^{1/2}) 
    \nonumber \\
    &\qquad\qquad\qquad \cdot
    \left( 
    \left(\frac{\sigma_{t,i}^2}{\sigma_{t,i}^2+\alpha_{t,\ell}^2}\right)^2
    (x - \theta_\ell)_{k_1} (x - \theta_\ell)_{k_2}
    +
    \bm 1\{k_1=k_2\} 
    \frac{\sigma_{t,i}^2\alpha_{t,\ell}^2}{\sigma_{t,i}^2+\alpha_{t,\ell}^2} \right). \label{eqn:micro-a}
\end{align}
\end{lemma}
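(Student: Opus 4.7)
The plan is to reduce both integrals to Gaussian moment calculations by applying Lemma \ref{lem:product-density-formula} with $M=2$. First I would rewrite the kernel as a scaled normal density in $y$: since $\psi(z)=\exp(-\tfrac12\|z\|^2)$, we have
\begin{align*}
\psi\bigl(\sigma_{t,i}^{-1}(y-x)\bigr) = (2\pi\sigma_{t,i}^2)^{d/2}\,\phi(y;x,\sigma_{t,i}).
\end{align*}
This turns the integrands into products of two Gaussian densities in $y$, namely $\phi(y;x,\sigma_{t,i})\,\phi(y;\theta_\ell,\alpha_{t,\ell})$, multiplied by a polynomial factor of degree one or two in $y-x$.

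Next I would apply Lemma \ref{lem:product-density-formula} to this product (with $\vartheta_1=x$, $\gamma_1=\sigma_{t,i}$ and $\vartheta_2=\theta_\ell$, $\gamma_2=\alpha_{t,\ell}$). The scalar prefactor from that lemma simplifies: the polynomial prefactor is $(2\pi(\sigma_{t,i}^2+\alpha_{t,\ell}^2))^{-d/2}$, and the exponential Gram-matrix contribution collapses — via $2x^\top\theta_\ell-\|x\|^2-\|\theta_\ell\|^2=-\|x-\theta_\ell\|^2$ — to $\exp\bigl(-\tfrac12\|x-\theta_\ell\|^2/(\sigma_{t,i}^2+\alpha_{t,\ell}^2)\bigr)$. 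Together these give exactly $\phi\bigl(x;\theta_\ell,\sqrt{\sigma_{t,i}^2+\alpha_{t,\ell}^2}\bigr)$, so that
\begin{align*}
\phi(y;x,\sigma_{t,i})\,\phi(y;\theta_\ell,\alpha_{t,\ell})
= \phi\!\left(x;\theta_\ell,\sqrt{\sigma_{t,i}^2+\alpha_{t,\ell}^2}\right)\phi(y;m,\tau),
\end{align*}
with $m=(\alpha_{t,\ell}^2 x+\sigma_{t,i}^2\theta_\ell)/(\sigma_{t,i}^2+\alpha_{t,\ell}^2)$ and $\tau^2=\sigma_{t,i}^2\alpha_{t,\ell}^2/(\sigma_{t,i}^2+\alpha_{t,\ell}^2)$.

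With this identity in hand, both integrals become moment computations against $\phi(y;m,\tau)$. For $b_{t,i,\ell}^k(x)$ the first-moment integral gives $(m-x)_k = -\sigma_{t,i}^2(x-\theta_\ell)_k/(\sigma_{t,i}^2+\alpha_{t,\ell}^2)$, and after multiplying by the prefactors $2(1-\omega_{t,i})(2\pi\sigma_{t,i}^2)^{d/2}$ we obtain \eqref{eqn:micro-b}. For $a_{t,i,\ell}^{k_1,k_2}(x)$, I would expand $(y-x)=(y-m)+(m-x)$; cross terms vanish by centering under $\phi(y;m,\tau)$, leaving the variance part $\tau^2\mathbf{1}\{k_1=k_2\}$ plus the outer product $(m-x)_{k_1}(m-x)_{k_2}$, which yields \eqref{eqn:micro-a} after collecting the constant $(1-\omega_{t,i})^2(2\pi\sigma_{t,i}^2)^{d/2}$.

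The only real obstacle is bookkeeping: verifying that the scalar normalization and the Gram-matrix exponential in Lemma \ref{lem:product-density-formula} indeed combine into the single Gaussian $\phi(x;\theta_\ell,\sqrt{\sigma_{t,i}^2+\alpha_{t,\ell}^2})$ without leftover constants. Once that identity is cleanly recorded, the remainder is straightforward first- and second-moment evaluation of a $d$-variate normal, so I would organize the proof as (i) kernel-to-density rewriting, (ii) the product-of-Gaussians reduction (the main algebraic step), and (iii) moment computation.
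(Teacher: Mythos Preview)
Your proposal is correct and follows essentially the same route as the paper: rewrite $\psi$ as a scaled Gaussian, apply Lemma~\ref{lem:product-density-formula} with $(\vartheta_1,\gamma_1)=(x,\sigma_{t,i})$ and $(\vartheta_2,\gamma_2)=(\theta_\ell,\alpha_{t,\ell})$ to factor out $\phi(x;\theta_\ell,(\sigma_{t,i}^2+\alpha_{t,\ell}^2)^{1/2})$, and then compute first and second moments against the resulting Gaussian in $y$. Your unified treatment of $a_{t,i,\ell}^{k_1,k_2}$ via $(y-x)=(y-m)+(m-x)$ is in fact a bit tidier than the paper's separate handling of the diagonal and off-diagonal cases, but the substance is identical.
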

\begin{proof}
Let
\begin{align*}
    &v_{t,i,\ell} = 1/(\sigma_{t,i}^{-2}+\alpha_{t,\ell}^{-2}) = \frac{\sigma_{t,i}^2 \alpha_{t,\ell}^2}{\sigma_{t,i}^2 + \alpha_{t,\ell}^2},\\
    &c_{t,i,\ell} = \frac{\alpha_{t,\ell}^2}{\sigma_{t,i}^2 +
    \alpha_{t,\ell}^2} x
    +
    \frac{\sigma_{t,i}^2}{\sigma_{t,i}^2 + \alpha_{t,\ell}^2} \theta_\ell,
\end{align*}
where to simplify the expression of $c_{t,i,\ell}$,  we have used the fact that 
\begin{align*}
    \sigma_{t,i}^2 \alpha_{t,\ell}^2 \times (\sigma_{t,i}^{-2} + \alpha_{t,\ell}^{-2}) = (\sigma_{t,i}^{2} + \alpha_{t,\ell}^{2}).
\end{align*}
Also, note that
\begin{align*}
   &\quad \left( \sigma_{t,i}^{-2} + \alpha_{t,\ell}^{-2} \right)^{-1} 
    \bm 1^\top \left(\Gamma * (\Theta - \diag(\Theta) \bm 1^\top)\right) \bm 1\\
   &= \left( \sigma_{t,i}^{-2} + \alpha_{t,\ell}^{-2} \right)^{-1} 
   \bm 1^\top 
   \left(
   \begin{bmatrix}
       \sigma_{t,i}^{-4} & \sigma_{t,i}^{-2} \alpha_{t,\ell}^{-2} \\
       \sigma_{t,i}^{-2} \alpha_{t,\ell}^{-2} & \alpha_{t,\ell}^{-4} 
   \end{bmatrix}
   *
   \begin{bmatrix}
       0 & x^\top \theta_\ell -x^\top x\\
       x^\top \theta_\ell -\theta_\ell^\top \theta_\ell & 0
   \end{bmatrix}
   \right)
   \bm 1\\
   &= \left( \sigma_{t,i}^{-2} + \alpha_{t,\ell}^{-2} \right)^{-1} 
   \bm 1^\top 
   \left(
   \begin{bmatrix}
       \sigma_{t,i}^{-2}\alpha_{t,\ell}^{-2} & \sigma_{t,i}^{-2} \alpha_{t,\ell}^{-2} \\
       \sigma_{t,i}^{-2} \alpha_{t,\ell}^{-2} & \sigma_{t,i}^{-2} \alpha_{t,\ell}^{-2} 
   \end{bmatrix}
   *
   \begin{bmatrix}
       -x^\top x & x^\top \theta_\ell \\
       x^\top \theta_\ell & -\theta_\ell^\top \theta_\ell
   \end{bmatrix}
   \right)
   \bm 1\\
   &=  \left( \sigma_{t,i}^{-2} + \alpha_{t,\ell}^{-2} \right)^{-1} 
   \left( \sigma_{t,i}^{-2} \alpha_{t,\ell}^{-2}\right) 
   (-\left\| x- \theta_\ell\right\|^2)\\
   &=
   - \left( \sigma_{t,i}^{2} + \alpha_{t,\ell}^{2} \right)^{-1} 
   \left\| x-\theta_\ell \right\|^2.
\end{align*}
Using Lemma \ref{lem:product-density-formula} with $x=y$, $\vartheta_1=x$ and
$\vartheta_2=\theta_\ell$, we see that
\begin{align*}
    &\exp\left(
    -\frac{1}{2}
    \frac{1}{\sigma_{t,i}^2} 
    \|y-x\|^2
    \right)
    \exp\left(
    -\frac{1}{2}
    \frac{1}{\alpha_{t,\ell}^2} 
    \|y-\theta_\ell\|^2
    \right)\Big/\sqrt{(2\pi \alpha_{t,\ell}^2)^d} 
    \\
    = &
    \left(
    2\pi \sigma_{t,i}^2
    \right)^{d/2}
    \left(
    \frac{2\pi/(2\pi)^2}{ \sigma_{t,i}^2 \alpha_{t,\ell}^2
    (\sigma_{t,i}^{-2}+\alpha_{t,\ell}^{-2})}
    \right)^{d/2}
    \exp
    \left(
    -\frac{1}{2} 
    \left\| 
    \frac{x-\theta_\ell}{\sqrt{\sigma_{t,i}^2 + \alpha_{t,\ell}^2}} 
    \right\|^2
    \right)
    \frac{
    \exp
    \left(
    -\frac{ \left\| y- c_{t,i,\ell} \right\|^2 }{2v_{t,i,\ell}} 
    \right)}{(\sqrt{2\pi v_{t,i,\ell}})^d}\\
    = &
    \left(2\pi
    \sigma_{t,i}^2
    \right)^{d/2}
    \frac{
    \exp
    \left(
    -\frac{\left\|x-\theta_\ell\right\|^2}{2(\sigma_{t,i}^2 + \alpha_{t,\ell}^2)} 
    \right)
    }{\left(\sqrt{2\pi (\sigma_{t,i}^{2}+\alpha_{t,\ell}^{2})}\right)^{d}}
    \frac{
    \exp
    \left(
    -\frac{ \left\| y- c_{t,i,\ell} \right\|^2 }{2v_{t,i,\ell}} 
    \right)}{(\sqrt{2\pi v_{t,i,\ell}})^d}.
\end{align*}
Then, for our claim in \eqref{eqn:micro-b}, it is enough to see that
\begin{align*}
&\quad \int 
    \frac{1}{(\sqrt{2\pi v_{t,i,\ell}})^d}
    \exp
    \left(
    -\frac{ \left\| y- c_{t,i,\ell} \right\|^2 }{2v_{t,i,\ell}}
    \right)
    (y - x )
    dy\\
&= c - x \\
&= \left(
    \frac{\alpha_{t,\ell}^2}{\sigma_{t,i}^2 + \alpha_{t,\ell}^2} x 
    +
   \frac{\sigma_{t,i}^2}{\sigma_{t,i}^2 + \alpha_{t,\ell}^2} \theta_\ell 
   \right)
   - x \\
&= \frac{\sigma_{t,i}^2}{\sigma_{t,i}^2 + \alpha_{t,\ell}^2} 
   \left( \theta_\ell - x\right).
\end{align*}
Next,
we show our claim in \eqref{eqn:micro-a}. Hereafter, to ease our notation, we write $c$
for $c_{t,i,\ell}$. 
First, for $k_1\neq k_2$, we have
\begin{align*}
&\quad \int 
    \frac{1}{(\sqrt{2\pi v_{t,i,\ell}})^d}
    \exp
    \left(
    -\frac{ \left\| y- c \right\|^2 }{2v_{t,i,\ell}} 
    \right)
    y_{k_1} y_{k_2}
    dy\\
 &= 
 \int 
    \frac{1}{\sqrt{2\pi v_{t,i,\ell}}}
    \exp
    \left(
    -\frac{ (y_{k_1}- c_{k_1})^2 }{2v_{t,i,\ell}} 
    \right) y_{k_1} dy_{k_1}
    \cdot
 \int 
    \frac{1}{\sqrt{2\pi v_{t,i,\ell}}}
    \exp
    \left(
    -\frac{ (y_{k_2}- c_{k_2})^2 }{2v_{t,i,\ell}} 
    \right) y_{k_2} dy_{k_2}\\
&= 
c_{k_1} c_{k_2},
\end{align*}
and hence,
\begin{align*}
&\quad \int 
    \frac{1}{(\sqrt{2\pi v_{t,i,\ell}})^d}
    \exp
    \left(
    -\frac{ \left\| y- c \right\|^2 }{2v_{t,i,\ell}} 
    \right)
    (y-x)_{k_1} (y-x)_{k_2}
    dy \\
&= c_{k_1} c_{k_2} - x_{k_1} c_{k_2} - c_{k_1} x_{k_2} + x_{k_1} x_{k_2} \\
&= (c - x)_{k_1}(c - x)_{k_2} \\ 
&=\frac{\sigma_{t,i}^2}{\sigma_{t,i}^2 + \alpha_{t,\ell}^2} (\theta_\ell - x)_{k_1}
\frac{\sigma_{t,i}^2}{\sigma_{t,i}^2 + \alpha_{t,\ell}^2} (\theta_\ell - x)_{k_2}\\
&=\left(\frac{\sigma_{t,i}^2}{\sigma_{t,i}^2 + \alpha_{t,\ell}^2}\right)^2
(x - \theta_\ell)_{k_1} (x - \theta_\ell)_{k_2}.
\end{align*}
On the other hand, for $k_1=k_2$, we have
\begin{align*}
&\quad \int 
    \frac{1}{(\sqrt{2\pi v_{t,i,\ell}})^d}
    \exp
    \left(
    -\frac{ \left\| y- c \right\|^2 }{2v_{t,i,\ell}} 
    \right)
    y_{k_1} y_{k_2}
    dy\\
&= 
 \int 
    \frac{1}{(\sqrt{2\pi v_{t,i,\ell}})^d}
    \exp
    \left(
    -\frac{ (y- c_{k_1})^2 }{2v_{t,i,\ell}} 
    \right) y^2 dy\\
&= v_{t,i,\ell} + c_{k_1}^2 \\
&= \frac{\sigma_{t,i}^2 \alpha_{t,\ell}^2}{\sigma_{t,i}^2 + \alpha_{t,\ell}^2}
+ 
\left(\frac{\alpha_{t,\ell}^2}{\sigma_{t,i}^2 + \alpha_{t,\ell}^2} x_k 
+
\frac{\sigma_{t,i}^2}{\sigma_{t,i}^2 + \alpha_{t,\ell}^2} \theta_{\ell,k}
\right)^2
\end{align*}
and so, we have
\begin{align*}
&\quad \int 
    \frac{1}{(\sqrt{2\pi v_{t,i,\ell}})^d}
    \exp
    \left(
    -\frac{ \left\| y- c \right\|^2 }{2v_{t,i,\ell}} 
    \right)
    (y-x)_{k_1} (y-x)_{k_2}
    dy \\
    &= v_{t,i,\ell} + c_{k_1} c_{k_2} - x_{k_1} c_{k_2} - c_{k_1} x_{k_2} + x_{k_1} x_{k_2} \\
    &= v_{t,i,\ell} + (c - x)_{k_1}(c - x)_{k_2} \\ 
&= 
\frac{\sigma_{t,i}^2 \alpha_{t,\ell}^2}{\sigma_{t,i}^2 + \alpha_{t,\ell}^2} +
\frac{\sigma_{t,i}^2}{\sigma_{t,i}^2 + \alpha_{t,\ell}^2} (\theta_\ell - x)_{k_1}
\frac{\sigma_{t,i}^2}{\sigma_{t,i}^2 + \alpha_{t,\ell}^2} (\theta_\ell - x)_{k_2}\\
&=
\frac{\sigma_{t,i}^2 \alpha_{t,\ell}^2}{\sigma_{t,i}^2 + \alpha_{t,\ell}^2} +
\left(\frac{\sigma_{t,i}^2}{\sigma_{t,i}^2 + s^2}\right)^2
(x - \theta_\ell)_{k_1} (x - \theta_\ell)_{k_2}.
\end{align*}
Our claim in \eqref{eqn:micro-a} follows. 
\end{proof}

For Lemma \ref{lem:b} and Lemma \ref{lem:a}, 
by $\Theta_{\ell,r,c}$, 
we denote the Gram matrix for $(\theta_\ell,\theta_r,\theta_c)$, and 
define $\Gamma_{t,i}$ to be as in Lemma \ref{lem:algebra-formula} for
$\gamma_1^2 = \sigma_{t,i}^2 + \alpha_{t,\ell}^2$, $\gamma_2^2 = s^2$,
$\gamma_3^2= s^2$.
Let
\begin{align*}
    &C_0 = (\theta_\ell(\sigma_{t,i}^2+\alpha_{t,\ell}^2)^{-1}+\theta_rs^{-2}+
    \theta_cs^{-2})\Large/
    \left(
    (\sigma_{t,i}^2+\alpha_{t,\ell}^2)^{-1}+s^{-2}+s^{-2}
    \right),\\
    &C_1 = \frac{1}{\sigma_{t,i}^2+\alpha_{t,\ell}^2}+\frac{1}{s^2}+\frac{1}{s^2} 
    = \frac{s^2+2\sigma_{t,i}^2+2\alpha_{t,\ell}^2}{(\sigma_{t,i}^2+ \alpha_{t,\ell}^2) s^2},\\
    &C_2 = \left(\frac{1/(2\pi)^2}{s^4+2
    s^{2}(\sigma_{t,i}^2+\alpha_{t,\ell}^2)}\right)^{d/2}
\exp\left( \frac{1}{2} \frac{1}{C_1}
{\bm 1^\top \left(\Gamma_{t,i} * (\Theta_{\ell,r,c} - \diag(\Theta_{\ell,r,c}) \bm 1^{\top})\right)\bm 1 }
\right).
\end{align*}
To simplify our notation, we let 
\begin{align*}
\xi_k &= 
\bm 1^\top
\left(
\begin{bmatrix}
    s^2 \\
    -(\sigma_{t,i}^2+\alpha_{t,\ell}^2+s^2) \\
    \sigma_{t,i}^2+\alpha_{t,\ell}^2
\end{bmatrix}
\begin{bmatrix}
    -2 & 1 & 1
\end{bmatrix}
* 
\begin{bmatrix}
    \theta_{\ell,k} \\
    \theta_{r,k} \\
    \theta_{c,k} 
\end{bmatrix}
\begin{bmatrix}
    \theta_{\ell,k}, \theta_{r,k}, \theta_{c,k} 
\end{bmatrix}
\right) \bm 1,\\
\Xi &= {\bm 1^\top \left(\Gamma_{t,i} * (\Theta_{\ell,r,c} -
\diag(\Theta_{\ell,r,c}) \bm 1^{\top})\right)\bm 1 }.
\end{align*}
Define and note 
\begin{align*}
    \xi := \sum_{k=1}^K \xi_k 
    = 
\bm 1^\top
\left(
\begin{bmatrix}
    s^2 \\
    -(\sigma_{t,i}^2+\alpha_{t,\ell}^2+s^2) \\
    \sigma_{t,i}^2+\alpha_{t,\ell}^2
\end{bmatrix}
\begin{bmatrix}
    -2 & 1 & 1
\end{bmatrix}
* 
\Theta_{\ell,r,c}
\right) \bm 1.
\end{align*}
Also, denote by $h_{\ell,r,c}(x)$ the multivariate 
normal density defined on $\mathbb R^d$ such that its
mean vector is $C_0$ and its covariance matrix is $C_1^{-1}\bm I$. 
For $f \in C(\mathbb R)$ and $k=1,\ldots, d$, we write
\begin{align*}
    \langle f(x_k) \rangle_{\ell,r,c}
    = 
    \int_{\mathbb R^d} f(x_k) h_{\ell,r,c}(x)dx,
\end{align*}
and note that in particular, 
\begin{align*}
    &\langle x_k \rangle_{\ell,r,c} = C_{0,k},\\
    &\langle x_k^2 \rangle_{\ell,r,c} = C_{0,k}^2 + C_1^{-1},\\
    &\langle x_k^3 \rangle_{\ell,r,c} = C_{0,k}^3 + 3 C_{0,k} C_1^{-1},\\
    &\langle x_k^3 \rangle_{\ell,r,c} = C_{0,k}^4 + 6 C_{0,k}^2 C_1^{-1} + 3 C_1^{-2}. 
\end{align*}

Starting from \eqref{eqn:AtiellVSRtiellrc}, it is easy to see that 
\begin{align}
    \langle \mathcal A_{t,i,\ell}\phi_r,\phi_c \rangle 
    = 
    \sum_{k=1}^d \langle b_{t,i,\ell}^k \partial_k \phi_r, \phi_c \rangle
    + 
    \sum_{k_1=1}^{d} \sum_{k_2=1}^d  
    \langle a_{t,i,\ell}^{k_1,k_2} \partial_{k_1} \partial_{k_2} \phi_r,
    \phi_c \rangle,
    \label{eqn:Atiellrc}
\end{align}
and as a matter of definition, we have 
\begin{align*}
    R_{t,i,\ell} = (R_{t,i,\ell,rc})_{r,c=1}^K = ( \langle \mathcal
    A_{t,i,\ell}\phi_r,\phi_c \rangle )_{r,c=1}^K \in \mathbb M_{K,K}.
\end{align*}

Lemma \ref{lem:b} and Lemma \ref{lem:a} are associated, respectively, with the
first and the second terms appearing in the right side of \eqref{eqn:Atiellrc}. 
\begin{lemma}\label{lem:b}
For each $\ell, r, c$ and $i,t$, we have
\begin{align*}
    &\quad \sum_{k=1}^d\langle b_{t,i,\ell}^k\partial_k\phi_r,\phi_c\rangle\\
    &=(2\pi \sigma_{t,i}^2)^{d/2}
    \frac{(1-\omega_{t,i}) \sigma_{t,i}^2}{ (\sigma_{t,i}^2+\alpha_{t,\ell}^2)}
    \left(
\frac{2\sigma_{t,i}^2+2\alpha_{t,\ell}^2 }{s^2+2\sigma_{t,i}^2+2\alpha_{t,\ell}^2}
+
\frac{1}{s^2}
\frac{ 2\sigma_{t,i}^2+2\alpha_{t,\ell}^2 }{(s^2+2\sigma_{t,i}^2+2\alpha_{t,\ell}^2)^2}
\xi \right)\\
&\qquad 
\left(
\frac{1/(2\pi)^2}{s^4+s^2 (2\sigma_{t,i}^2+ 2\alpha_{t,\ell}^2)}
\right)^{d/2}
\exp\left( \frac{1}{2} 
    \frac{(\sigma_{t,i}^2+ \alpha_{t,\ell}^2) s^2}{s^2+2\sigma_{t,i}^2+2\alpha_{t,\ell}^2}
    \Xi
\right).
\end{align*}
\end{lemma}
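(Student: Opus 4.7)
\medskip
\noindent\emph{Proof plan.}
The plan is to reduce the sum of inner products to a single Gaussian expectation by combining three Gaussian densities via Lemma \ref{lem:product-density-formula}, and then to evaluate that expectation using the recorded moments $\langle x_k\rangle_{\ell,r,c}$ and $\langle x_k^2\rangle_{\ell,r,c}$.

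First I would substitute the explicit formula for $b_{t,i,\ell}^{k}$ from Lemma \ref{lem:micro-b-a} and use the identity $\partial_k \phi_r(x) = -s^{-2}(x-\theta_r)_k\, \phi_r(x)$.  This rewrites each summand as
\begin{align*}
\langle b_{t,i,\ell}^{k} \partial_k \phi_r,\phi_c\rangle
= \frac{2(1-\omega_{t,i})\sigma_{t,i}^2}{s^2(\sigma_{t,i}^2+\alpha_{t,\ell}^2)}
\sqrt{(2\pi\sigma_{t,i}^2)^{d}}\!
\int (x-\theta_\ell)_k (x-\theta_r)_k\, \Phi(x)\, dx,
\end{align*}
where $\Phi(x)$ denotes the product $\phi(x;\theta_\ell,(\sigma_{t,i}^2+\alpha_{t,\ell}^2)^{1/2})\,\phi_r(x)\,\phi_c(x)$ of three Gaussian densities with parameters $(\vartheta_1,\gamma_1^2)=(\theta_\ell,\sigma_{t,i}^2+\alpha_{t,\ell}^2)$ and $(\vartheta_2,\gamma_2^2)=(\theta_r,s^2)$ and $(\vartheta_3,\gamma_3^2)=(\theta_c,s^2)$.

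Next I would invoke Lemma \ref{lem:product-density-formula} with these parameters to rewrite $\Phi(x) = C_2\, h_{\ell,r,c}(x)$, where $h_{\ell,r,c}$ is the Gaussian density with mean $C_0$ and covariance $C_1^{-1}\bm I$.  Verifying that the multiplicative constant in the lemma matches the displayed $C_2$ is a direct check, using $C_1(\sigma_{t,i}^2+\alpha_{t,\ell}^2)s^2 = s^2+2\sigma_{t,i}^2+2\alpha_{t,\ell}^2$ and the fact that here $M=3$ so that the prefactor reads $(2\pi/(2\pi)^3)^{d/2}$ divided by the appropriate variance product.  After this substitution the full sum becomes
\begin{align*}
\sum_{k=1}^d \langle b_{t,i,\ell}^{k} \partial_k \phi_r,\phi_c\rangle
= \frac{2(1-\omega_{t,i})\sigma_{t,i}^2}{s^2(\sigma_{t,i}^2+\alpha_{t,\ell}^2)}
\sqrt{(2\pi\sigma_{t,i}^2)^{d}}\, C_2\, \sum_{k=1}^d \bigl\langle (x-\theta_\ell)_k (x-\theta_r)_k\bigr\rangle_{\ell,r,c}.
\end{align*}

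Then I would expand $(x-\theta_\ell)_k(x-\theta_r)_k = x_k^2 - (\theta_{\ell,k}+\theta_{r,k})x_k + \theta_{\ell,k}\theta_{r,k}$ and apply the moment formulas $\langle x_k\rangle_{\ell,r,c} = C_{0,k}$ and $\langle x_k^2\rangle_{\ell,r,c} = C_{0,k}^2 + C_1^{-1}$, obtaining $(C_0-\theta_\ell)_k(C_0-\theta_r)_k + C_1^{-1}$ for each $k$.  Using the explicit form $C_0 - \theta_\ell = s^{-2}C_1^{-1}(\theta_r + \theta_c - 2\theta_\ell)$ and $C_0 - \theta_r = C_1^{-1}[(\theta_\ell-\theta_r)/(\sigma_{t,i}^2+\alpha_{t,\ell}^2) + (\theta_c-\theta_r)/s^2]$, each product $(C_0-\theta_\ell)_k(C_0-\theta_r)_k$ becomes a quadratic form in the three scalars $\theta_{\ell,k},\theta_{r,k},\theta_{c,k}$ whose coefficients, after summation over $k$, coincide with the rank-one outer-product pattern defining $\xi_k$ (hence $\xi$).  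Finally, collecting the prefactor $2/s^2$ with $C_1^{-2}$ produces the coefficient $\tfrac{1}{s^2} \tfrac{2(\sigma_{t,i}^2+\alpha_{t,\ell}^2)}{(s^2+2\sigma_{t,i}^2+2\alpha_{t,\ell}^2)^2}$ of $\xi$, while the $C_1^{-1}$ contribution supplies the additive term $\tfrac{2(\sigma_{t,i}^2+\alpha_{t,\ell}^2)}{s^2+2\sigma_{t,i}^2+2\alpha_{t,\ell}^2}$.

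The main obstacle is the final algebraic step: matching $\sum_k (C_0-\theta_\ell)_k(C_0-\theta_r)_k$ with the specific $3\times 3$ Hadamard pattern in the definition of $\xi_k$.  This is a routine but bookkeeping-heavy identity, handled cleanly by clearing the common denominator $C_1^2$ and comparing coefficients of $\theta_{\ell,k}^2,\theta_{r,k}^2,\theta_{c,k}^2$ and the three cross terms on both sides, then summing over $k$.  Once this identity is in hand, combining with the explicit forms of $C_2$ and $C_1^{-1}$ and simplifying yields the displayed expression.
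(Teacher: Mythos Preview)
Your proposal is correct and follows essentially the same route as the paper: substitute the closed form of $b_{t,i,\ell}^k$ from Lemma~\ref{lem:micro-b-a} together with $\partial_k\phi_r=-s^{-2}(x-\theta_r)_k\phi_r$, collapse the triple Gaussian product via Lemma~\ref{lem:product-density-formula} into $C_2\,h_{\ell,r,c}$, evaluate the resulting second-moment expression as $C_1^{-1}+(C_0-\theta_\ell)_k(C_0-\theta_r)_k$, and then identify the quadratic form with $\xi_k$. The only cosmetic difference is that the paper introduces an auxiliary $\overline b_{t,i,\ell}^k$ to factor out the scalar constants before computing the integral, whereas you carry those constants through directly.
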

\begin{proof}
To ease our notation, we first let 
\begin{align*}
    \overline{b}_{t,i,\ell}^k(x) = -(2s^2/C_2) \cdot 
    \phi(x;\theta_\ell,(\sigma_{t,i}^2+\alpha_{t,\ell}^2)^{1/2}) (x -
    \theta_\ell)_k.
\end{align*}
It follows that
\begin{align}
    &\quad \langle \overline{b}_{t,i,\ell}^k\partial_k\phi_r,\phi_c\rangle \\
    &= 
    -(2s^2/C_2) \frac{1}{-2(1-\omega_{t,i})\sqrt{(2\pi\sigma_{t,i}^2)^d}
    \sigma_{t,i}^2/(\sigma_{t,i}^2+\alpha_{t,\ell}^2)}
    \langle b_{t,i,\ell}^k\partial_k\phi_r,\phi_c\rangle \nonumber\\
    &= 
    \frac{(s^2/C_2)}{(1-\omega_{t,i})\sqrt{(2\pi\sigma_{t,i}^2)^d}
    \sigma_{t,i}^2/(\sigma_{t,i}^2+\alpha_{t,\ell}^2)}
    \langle b_{t,i,\ell}^k\partial_k\phi_r,\phi_c\rangle.
    \label{eqn:lemmab.4.part0}
\end{align}
We compute $\langle \overline{b}_{t,i,\ell}^k\partial_k\phi_r,\phi_c\rangle$
instead of directly working with \eqref{eqn:btiellk2}.  
First, we observe that 
\begin{align*}
    {\langle x_k^2\rangle_{\ell,r,c} - \langle x_k\rangle_{\ell,r,c}^2} =
    \frac{1}{C_1} = 
    \frac{(\sigma_{t,i}^2+ \alpha_{t,\ell}^2)
    s^2}{s^2+2\sigma_{t,i}^2+2\alpha_{t,\ell}^2},
\end{align*}
and that
\begin{align*}
    &\quad\frac{1}{C_1^2 (\sigma_{t,i}^2+\alpha_{t,\ell}^2) s^4}\\
    &=\left(
    \frac{(\sigma_{t,i}^2+ \alpha_{t,\ell}^2) s^2}{s^2+2\sigma_{t,i}^2+2\alpha_{t,\ell}^2}
    \right)^2
    \frac{1}{(\sigma_{t,i}^2+\alpha_{t,\ell}^2) s^4}\\
    &=
    \frac{\sigma_{t,i}^2+\alpha_{t,\ell}^2}{(s^2+2\sigma_{t,i}^2+2\alpha_{t,\ell}^2)^2}.
\end{align*}
Using Lemma
\ref{lem:algebra-formula} on the third equality, we see that
\begin{align}
&\quad \int \overline{b}_{t,i,\ell}^k(x) \partial_k \phi_r(x) \phi_c(x) dx
\nonumber \\
&=\int -2s^2/C_2 \phi(x;\theta_\ell,(\sigma_{t,i}^2+\alpha_{t,\ell}^2)^{1/2}) (x - \theta_\ell)_k
\left(-\frac{1}{s^2}(x-\theta_r)_k \phi_r(x)\right) \phi_c(x) dx \nonumber \\
&=(2/C_2)\int \phi_r(x) \phi_c(x) \phi(x;\theta_\ell,(\sigma_{t,i}^2+\alpha_{t,\ell}^2)^{1/2}) 
(x - \theta_\ell)_k (x-\theta_r)_k dx \nonumber \\
&= 2\int h_{\ell,r,c}(x) (x_k^2 - x_k (\theta_\ell + \theta_r)_k +
\theta_{\ell,k} \theta_{r,k}) dx \nonumber \\
&= 2\left( 
 {\langle x_k^2\rangle_{\ell,r,c} - \langle x_k\rangle_{\ell,r,c}^2}
+ (\langle x_k \rangle_{\ell,r,c} - \theta_{\ell,k})
(\langle x_k\rangle_{\ell,r,c} - \theta_{r,k}) \right). \nonumber
\end{align}
Continuing with the calculation, 
\begin{align}
&\quad    \langle \overline{b}_{t,i,\ell}^k \partial_k \phi_r, \phi_c \rangle
\nonumber \\
&= 2\left(
\frac{1}{C_1} 
+ 
\frac{1}{C_1^2} 
\left(
\frac{ \theta_r - \theta_\ell }{s^2}
+
\frac{ \theta_c-\theta_\ell}{s^2}
\right)_k
\left(
\frac{\theta_\ell - \theta_r}{\sigma_{t,i}^2+\alpha_{t,\ell}^2}
+
\frac{\theta_c-\theta_r }{s^2}
\right)_k
\right) \nonumber \\
&= 
2
\left(
\frac{1}{C_1}
+ 
\frac{1}{C_1^2}
\frac{
\left(-2 \theta_\ell + \theta_r + \theta_c \right)_k
\left(
s^2 \theta_\ell
-\left(
\sigma_{t,i}^2+\alpha_{t,\ell}^2 + s^2
\right)
\theta_r
+
(\sigma_{t,i}^2+\alpha_{t,\ell}^2) {\theta_c}
\right)_k
}{(\sigma_{t,i}^2+\alpha_{t,\ell}^2) s^4 }\right) \nonumber \\
&=
\frac{(2\sigma_{t,i}^2+2\alpha_{t,\ell}^2)s^2}{s^2+2\sigma_{t,i}^2+2\alpha_{t,\ell}^2}
\label{eqn:lemmab.4.part1}\\
&\quad +
\frac{2\sigma_{t,i}^2+2\alpha_{t,\ell}^2}{(s^2+2\sigma_{t,i}^2+2\alpha_{t,\ell}^2)^2}
\label{eqn:lemmab.4.part2}\\
&\quad \cdot \bm 1^\top
\left(
\left(
\begin{bmatrix}
    s^2 \\
    -(\sigma_{t,i}^2+\alpha_{t,\ell}^2+s^2) \\
    \sigma_{t,i}^2+\alpha_{t,\ell}^2
\end{bmatrix}
\begin{bmatrix}
    -2 & 1 & 1
\end{bmatrix}
\right) * 
\left(
\begin{bmatrix}
    \theta_{\ell,k} \\
    \theta_{r,k} \\
    \theta_{c,k} 
\end{bmatrix}
\begin{bmatrix}
    \theta_{\ell,k}, \theta_{r,k}, \theta_{c,k} 
\end{bmatrix}
\right)
\right) \bm 1. \label{eqn:lemmab.4.part3}
\end{align}
Putting together 
\eqref{eqn:lemmab.4.part1},
\eqref{eqn:lemmab.4.part2},
\eqref{eqn:lemmab.4.part3} and 
\eqref{eqn:lemmab.4.part0}, 
and plugging in the full expression 
for $C_1$, we see that
\begin{align*}
&\quad \langle b_{t,i,\ell}^k\partial_k\phi_r,\phi_c\rangle\\
&= (2\pi\sigma_{t,i}^2)^{d/2}
    \frac{(1-\omega_{t,i}) \sigma_{t,i}^2 C_2}{s^2(\sigma_{t,i}^2+\alpha_{t,\ell}^2) }
    \left(
    \frac{s^2(2\sigma_{t,i}^2+2\alpha_{t,\ell}^2)}{s^2+2\sigma_{t,i}^2+2\alpha_{t,\ell}^2}
+
\frac{2\sigma_{t,i}^2+2\alpha_{t,\ell}^2}{(s^2+2\sigma_{t,i}^2+2\alpha_{t,\ell}^2)^2}
\xi \right).
\end{align*}
Our claim follows after summing over $k$ and replacing $C_2$ with its full
expression. 
\end{proof}

\begin{lemma}\label{lem:a}
    For each $\ell,r,c$, $t$ and $i$, 
\begin{align*}
    &\sum_{k_1=1}^{d} \sum_{k_2=1}^d  
    \langle a_{t,i,\ell}^{k_1,k_2} \partial_{k_1} \partial_{k_2} \phi_r,
    \phi_c \rangle\\
    &= \frac{1}{s^4} (1-\omega_{t,i})^2 (2\pi\sigma_{t,i}^2)^{d/2} 
    \left(
    \left(\frac{\sigma_{t,i}^2\alpha_{t,\ell}^2}{\sigma_{t,i}^2 + \alpha_{t,\ell}^2}\right)
\sum_k
\begin{bmatrix}
    \langle x_k^2\rangle_{\ell,r,c} &
    \langle x_k\rangle_{\ell,r,c} &
    1
\end{bmatrix}
\begin{bmatrix}
1 \\ -2\theta_{r,k} \\ \theta_{r,k}^2 - s^2
\end{bmatrix}
\right. \\
&+ 
\left(\frac{\sigma_{t,i}^2}{\sigma_{t,i}^2 + s^2}\right)^2
\bm 1^\top 
\left(
\sum_{k}
\begin{bmatrix}
\langle x_{k}^4 \rangle_{\ell,r,c} &\langle x_{k}^3 \rangle_{\ell,r,c} &\langle x_{k}^2 \rangle_{\ell,r,c} \\
\langle x_{k}^3 \rangle_{\ell,r,c} &\langle x_{k}^2 \rangle_{\ell,r,c} &\langle x_{k} \rangle_{\ell,r,c}  \\
\langle x_k^2 \rangle_{\ell,r,c} &\langle x_{k} \rangle_{\ell,r,c} & 1
\end{bmatrix}
*
\begin{bmatrix}
    1 & -2 \theta_{r,k}  & \theta_{r,k}^2 -s^2 \\
    -2 \theta_{\ell,k} & 4 \theta_{\ell,k} \theta_{r,k} & -2\theta_{\ell,k} (\theta_{r,k}^2 -s^2) \\
    \theta_{\ell,k}^2 & -2 \theta_{r,k} \theta_{\ell,k}^2 &  (\theta_{r,k}^2 -s^2) \theta_{\ell,k}^2 
\end{bmatrix}\right.\\
&+\left.\left.
    \sum_{k_1}\sum_{k_2\neq k_1}
\begin{bmatrix}
    \langle x_{k_1}^2 \rangle_{\ell,r,c} \\
    \langle x_{k_1} \rangle_{\ell,r,c} \\
    1 
\end{bmatrix}
\begin{bmatrix}
    \langle x_{k_2}^2 \rangle_{\ell,r,c}&\langle  x_{k_2} \rangle_{\ell,r,c}& 1
\end{bmatrix}
*
\begin{bmatrix}
    1 \\
    -(\theta_\ell + \theta_r)_{k_1} \\
    \theta_{\ell,k_1} \theta_{r,k_1}
\end{bmatrix}
\begin{bmatrix}
    1 & -(\theta_\ell +\theta_r)_{k_2} & \theta_{\ell,k_2} \theta_{r,k_2}
\end{bmatrix}
\right) \bm 1 \right)\\
&\qquad \cdot
\left(
\frac{1/(2\pi)^2}{s^4+s^2 (2\sigma_{t,i}^2+ 2\alpha_{t,\ell}^2)}
\right)^{d/2}
\exp\left( \frac{1}{2} 
    \frac{(\sigma_{t,i}^2+ \alpha_{t,\ell}^2) s^2}{s^2+2\sigma_{t,i}^2+2\alpha_{t,\ell}^2}
    \Xi
\right).
\end{align*}
\end{lemma}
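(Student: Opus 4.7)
The plan is to substitute the explicit forms for $a_{t,i,\ell}^{k_1,k_2}$ from Lemma~\ref{lem:micro-b-a} and for the second partial derivatives of $\phi_r$, then use Lemma~\ref{lem:product-density-formula} to consolidate the three Gaussian factors into the density $h_{\ell,r,c}$, and finally organize the resulting polynomial moments against $h_{\ell,r,c}$ into the required Hadamard/matrix form.

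First, I would record that since $\phi_r(x) \propto \exp(-\|x-\theta_r\|^2/(2s^2))$, direct differentiation gives
\begin{align*}
\partial_{k_1}\partial_{k_2}\phi_r(x) = \frac{1}{s^4}\bigl((x-\theta_r)_{k_1}(x-\theta_r)_{k_2} - s^2\,\bm 1\{k_1=k_2\}\bigr)\phi_r(x).
\end{align*}
Combining this with the expression for $a_{t,i,\ell}^{k_1,k_2}$ in Lemma~\ref{lem:micro-b-a}, the integrand $a_{t,i,\ell}^{k_1,k_2}\,\partial_{k_1}\partial_{k_2}\phi_r\,\phi_c$ factors as
\begin{align*}
(1-\omega_{t,i})^2\sqrt{(2\pi\sigma_{t,i}^2)^d}\,\frac{1}{s^4}\,\phi\bigl(x;\theta_\ell,(\sigma_{t,i}^2+\alpha_{t,\ell}^2)^{1/2}\bigr)\,\phi_r(x)\,\phi_c(x)\,\mathcal{P}_{k_1,k_2}(x),
\end{align*}
where $\mathcal{P}_{k_1,k_2}(x)$ is the polynomial obtained by multiplying the bracketed factor in $a_{t,i,\ell}^{k_1,k_2}$ by $(x-\theta_r)_{k_1}(x-\theta_r)_{k_2}-s^2\bm 1\{k_1=k_2\}$. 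Applying Lemma~\ref{lem:product-density-formula} to the triple Gaussian product produces exactly $h_{\ell,r,c}(x)$ times the constants $C_2\cdot s^2 / \sqrt{(2\pi\sigma_{t,i}^2)^d}\cdot\text{(something)}$; these constants collapse into the factor $(1/(2\pi)^2/(s^4+s^2(2\sigma_{t,i}^2+2\alpha_{t,\ell}^2)))^{d/2}\exp\bigl(\tfrac12 C_1^{-1}\Xi\bigr)$ appearing in the conclusion.

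Next, I would split the sum $\sum_{k_1,k_2}\langle a_{t,i,\ell}^{k_1,k_2}\partial_{k_1}\partial_{k_2}\phi_r,\phi_c\rangle$ into three pieces matching the three summands in the target: (i) the contribution of the diagonal "extra" coefficient $\tfrac{\sigma_{t,i}^2\alpha_{t,\ell}^2}{\sigma_{t,i}^2+\alpha_{t,\ell}^2}\bm 1\{k_1=k_2\}$ paired with $\partial_k\partial_k\phi_r$, (ii) the "quadratic--diagonal" contribution where $(x-\theta_\ell)_k^2$ meets the $(x-\theta_r)_k^2-s^2$ factor, and (iii) the "quadratic--off-diagonal" contribution with $k_1\neq k_2$. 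For piece (i), I would expand $(x-\theta_r)_k^2 - s^2 = x_k^2 - 2\theta_{r,k}x_k + (\theta_{r,k}^2-s^2)$ and integrate against $h_{\ell,r,c}$ to produce the first row-vector times column-vector expression in the claim. For piece (ii), I would expand $(x-\theta_\ell)_k^2((x-\theta_r)_k^2 - s^2)$ as the outer product of the polynomial-basis vectors $(x_k^2,x_k,1)$, so that integration term-by-term gives a $3\times 3$ matrix of moments Hadamard-multiplied by the constant outer product of the $\theta$-polynomials shown. For piece (iii), the variables $x_{k_1}$ and $x_{k_2}$ are independent under $h_{\ell,r,c}$ (whose covariance is $C_1^{-1}\bm I$), so moments factorize; this factorization yields the outer product of $(x_{k_1}^2,x_{k_1},1)$ moments with $(x_{k_2}^2,x_{k_2},1)$ moments, Hadamard-multiplied by the matching $\theta$ outer product.

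The main obstacle is purely combinatorial bookkeeping: expanding the two polynomials $(x-\theta_\ell)_{k_1}(x-\theta_\ell)_{k_2}(x-\theta_r)_{k_1}(x-\theta_r)_{k_2}$ and $(x-\theta_\ell)_{k_1}(x-\theta_\ell)_{k_2}s^2\bm 1\{k_1=k_2\}$ into nine- and three-term sums indexed by the basis $\{x_k^a : a=0,1,2\}$, and then recognizing each of the three matrix-Hadamard-matrix expressions in the statement as the correct arrangement of moments $\langle x_k^a\rangle_{\ell,r,c}$ against $\theta$-coefficients. The distribution into the $k_1=k_2$ versus $k_1\neq k_2$ cases must be kept consistent with Lemma~\ref{lem:micro-b-a}'s diagonal indicator, and I would use the independence of coordinates under $h_{\ell,r,c}$ to justify factoring the cross-coordinate moments. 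Once this matching is made entry-by-entry, summing the three pieces and inserting the common prefactor computed above yields the stated formula.
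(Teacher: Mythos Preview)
Your proposal is correct and follows essentially the same route as the paper's proof: both decompose $a_{t,i,\ell}^{k_1,k_2}$ via Lemma~\ref{lem:micro-b-a} into the quadratic piece and the diagonal-indicator piece, insert the explicit second derivative of $\phi_r$, use Lemma~\ref{lem:product-density-formula} to replace the triple Gaussian product by $C_2\,h_{\ell,r,c}$, and then evaluate the resulting polynomial moments coordinate-by-coordinate, separating the $k_1=k_2$ and $k_1\neq k_2$ cases. Your explicit mention of coordinate independence under $h_{\ell,r,c}$ for the off-diagonal factorization and your description of the Hadamard/outer-product bookkeeping are exactly the organizing steps the paper leaves implicit in its final ``combining them together'' sentence.
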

\begin{proof}
    Note that 
\begin{align*}
    a_{t,\ell}^{k_1,k_2}(x) 
        =
        (1-\omega_{t,i})^2 (2\pi\sigma_{t,i}^2)^{d/2}
        \left(
    \left(\frac{\sigma_{t,i}^2}{\sigma_{t,i}^2+\alpha_{t,\ell}^2}\right)^2 
        A_{t,\ell}^{k_1,k_2}(x) + 
        \left(\frac{\sigma_{t,i}^2\alpha_{t,\ell}^2}{\sigma_{t,i}^2+\alpha_{t,\ell}^2}\right)
        B_{t,\ell}^{k_1,k_2}(x)\right),
    \end{align*}
where
\begin{align*}
    &A_{t,\ell}^{k_1,k_2}(x) 
    =
    \phi(x;\theta_\ell,(\sigma_{t,i}^2+\alpha_{t,\ell}^2)^{1/2})
    (x-\theta_\ell)_{k_1} (x-\theta_\ell)_{k_2},\\
    &B_{t,\ell}^{k_1,k_2}(x) 
    =
    \phi(x;\theta_\ell,(\sigma_{t,i}^2+\alpha_{t,\ell}^2)^{1/2})
    \bm 1\{k_1=k_2\}.
\end{align*}
We first compute the diagonal terms, i.e., the $k_1=k_2$ cases.  Note that
\begin{align*}
&\qquad
\sum_{k_1} \sum_{k_2} \int B_{t,\ell}^{k_1,k_2}(x) \partial_{k_1,k_2}^2 \phi_r(x) \phi_c(x) dx \\
&=
\sum_{k_1} \sum_{k_2} \int 
    \phi(x;\theta_\ell,(\sigma_{t,i}^2+\alpha_{t,\ell}^2)^{1/2})
    \bm 1\{k_1=k_2\}
\partial_{k_1,k_2}^2 \phi_r(x) \phi_c(x) dx \\
&=
\sum_{k} \int 
    \phi(x;\theta_\ell,(\sigma_{t,i}^2+\alpha_{t,\ell}^2)^{1/2})
    \partial_{k,k}^2 \phi_r(x) \phi_c(x) dx \\
&=
\frac{1}{s^4}
\sum_{k} \int 
    \phi(x;\theta_\ell,(\sigma_{t,i}^2+\alpha_{t,\ell}^2)^{1/2})
    (x_k^2 - 2\theta_{r,k}x_k + \theta_{r,k}^2-s^2)
    \phi_r(x) \phi_c(x) dx \\
&=
\frac{C_2}{s^4} 
\sum_{k} \int 
    h_{\ell,r,c}(x)
    (x_k^2 - 2\theta_{r,k}x_k + \theta_{r,k}^2-s^2)dx.
\end{align*}
and also that
\begin{align*}
&\qquad
\sum_{k_1} \sum_{k_2} \int \bm 1\{k_1=k_2\} A_{t,\ell}^{k_1,k_2}(x) \partial_{k_1,k_2}^2 \phi_r(x) \phi_c(x) dx \\
&=
\sum_{k_1} \sum_{k_2} \int 
    \phi(x;\theta_\ell,(\sigma_{t,i}^2+\alpha_{t,\ell}^2)^{1/2})
    \bm 1\{k_1=k_2\}
    (x-\theta_\ell)_{k_1}(x-\theta_\ell)_{k_2}
\partial_{k_1,k_2}^2 \phi_r(x) \phi_c(x) dx \\
&=
\sum_{k} \int 
    \phi(x;\theta_\ell,(\sigma_{t,i}^2+\alpha_{t,\ell}^2)^{1/2})
    (x-\theta_\ell)_{k}^2
    \partial_{k,k}^2 \phi_r(x) \phi_c(x) dx \\
&=
\frac{1}{s^4}
\sum_{k} \int 
    \phi(x;\theta_\ell,(\sigma_{t,i}^2+\alpha_{t,\ell}^2)^{1/2})
    (x-\theta_\ell)_{k}^2
    (x_k^2 - 2\theta_{r,k}x_k + \theta_{r,k}^2-s^2)
    \phi_r(x) \phi_c(x) dx \\
&=
\frac{C_2}{s^4} 
\sum_{k} \int 
    h_{\ell,r,c}(x)
    (x_k^2-2\theta_{\ell,k} x_k + \theta_{\ell,k}^2)
    (x_k^2 - 2\theta_{r,k}x_k + \theta_{r,k}^2-s^2)dx.
\end{align*}

Next, we compute the off-diagonal terms, i.e., the $k_1\neq k_2$ cases. First,
using our calculation just above, we see that
we note that
\begin{align*}
&\quad \sum_{k_1} \sum_{k_2\neq k_1} \int A_{t,\ell}^{k_1,k_2}(x)
    \partial_{k_1,k_2}^2 \phi_r(x) \phi_c(x) dx \\
    &= \frac{C_2}{s^4}
    \sum_{k_1} \sum_{k_2\neq k_1} \int h_{\ell,r,c}(x)
    (x-\theta_\ell)_{k_1}
    (x-\theta_\ell)_{k_2}
    (x-\theta_r)_{k_1}
    (x-\theta_r)_{k_2}
    dx.
\end{align*}
Our claim follows from this after combining them together, and simplifying
the combined term into a matrix notation.
\end{proof}

\section{Proof for Theorem \ref{thm:exactposterior}}
Here, we will take the convention that $\bm X_t = (X_{t,ik})_{i=1,k=1}^{n,d}$ 
is organized as a matrix.  By the $i$-th row of $\bm X_t$, we mean $X_{t,i}= 
(X_{t,1},\ldots, X_{t,d})$.
Let 
\begin{align*}
\varphi_t(\bm v)  = \mathbb P\left[e^{\imath \langle \bm v, \bm X_t \rangle}
\left| \mathcal F_t\right.\right]
= \int {\rho}_t(d\bm x) e^{\imath \langle \bm v, \bm x \rangle},
\end{align*}
where for each $\bm v$ and $\bm x$, 
\begin{align*}
    \langle \bm v, \bm x \rangle \equiv \sum_{i=1}^n \sum_{k=1}^d v_{ik} x_{ik}. 
\end{align*}
In other words, $\varphi_t$ is the (random) conditional characteristic function of $\bm X_t$.
Note
\begin{align*}
    p_t(\bm y) = \frac{1}{2\pi} \int e^{-\imath \langle \bm v, \bm y\rangle}
    \varphi_t(\bm v) d\bm v.
\end{align*}
Also, let, for each $\bm v$ and $\bm x \in \mathbb R^{n\times d}$, 
\begin{align}
    a_t(\bm v|\bm x) \equiv \lim_{\varepsilon\rightarrow0} \frac{1}{\varepsilon}
    \mathbb E[e^{\imath \langle\bm v, \bm X_{t+\varepsilon}- \bm X_t \rangle}
    - 1 \left| \bm X_t =\bm x \right.].
\end{align}
For each $f \in \mathcal{B}(\mathbb X)$,
$f_{-i}$ denotes the function obtained by 
fixing all other indices different from the $i$-th actor indices
but letting the $i$-th actor indices to be free, and 
if $f_{-i}$ is in the domain of the operator $\mathcal{A}(\mu_t)$, 
with some abuse of notation, we write:
\begin{align*}
    \mathcal{A}(\mu_t)f(\bm z) 
    =
    \sum_{i=1}^n (\mathcal{A}(\mu_t)f_{-i}(\bm z))(z_i).
\end{align*}
Similarly, for each $v \in\mathbb R^d$, let
\begin{align*}
    \varphi_{t,i}(v) 
    = \mathbb E\left[e^{\imath \langle v, X_i(t)\rangle}\left| \mathcal F_t\right.\right].
\end{align*}
In other words, $\varphi_{t,i}$ denotes the conditional characteristic
function of the $i$-th row $X_k(t)$ of $\bm X(t)$, and also, 
let, for $v\in \mathbb R^d$, and $x\in \mathbb X$, 
\begin{align}
    a_{t}(v|x) \equiv \lim_{\varepsilon\rightarrow0} \frac{1}{\varepsilon}
    \mathbb E[e^{\imath \langle v, X_{i, t+\varepsilon}- X_{t,i}\rangle } - 1 \left|
    X_{t,i} = x \right.].
\end{align}
Note that the definition of $a_t(v|x)$ is actually independent of a particular
choice of vertex $i$ as they are all identically distributed. 

One can prove the next result by directly following \citet{DonaldSnyder75},
but one needs to adapt to the fact that the underlying process can now be a
time-inhomogeneous non-linear Markov process.  The proof details are left to
the reader.    For a survey of similar techniques, see also
\cite{KunitaSFandSDE1997} and \cite{BainCrisan}. 
\begin{proposition}\label{prop:DonaldSnyder}
    For each ${\bm v} \in \mathbb R^{n\times d}$ and $t\in (0,\infty)$, 
\begin{align*}
    d\varphi_t(\bm v) 
    =
    \langle \rho_t, e^{\imath \langle\bm v, \cdot \rangle} a_t(\bm v|\cdot)\rangle dt
    + 
    \bm 1^\top 
    \langle \rho_t,  e^{\imath \langle \bm v, \cdot  \rangle} (\widetilde{\bm
    \lambda} -\bm1\bm1^\top )^\prime\rangle
    d\bm{M}_t
    \bm 1.
\end{align*}
\end{proposition}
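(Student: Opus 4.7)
The plan is to derive the SDE for $\varphi_t(\bm v)=\mathbb E[e^{\imath\langle\bm v,\bm X_t\rangle}\mid\mathcal F_t]$ by applying nonlinear filtering for a diffusion signal observed through a multivariate counting process, following \cite{DonaldSnyder75} and \cite{Bremaud1981}, but adapted to the time-inhomogeneous, $\mu_t$-driven setting of the paper. I will write $f_{\bm v}(\bm x):=e^{\imath\langle\bm v,\bm x\rangle}$ and treat its real and imaginary parts separately so that the standard real-valued filtering theorems apply.

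First I would establish the signal semimartingale decomposition. Enlarge the filtration by $\mathcal G_t = \mathcal F_t \vee \sigma(\bm X_s : s\le t)$. By the definition of the characteristic-function symbol $a_t(\bm v\mid\bm x)$ and the fact that $\bm X_t$ is the diffusion with (actor-wise) time-inhomogeneous generator $\mathcal A_t$, an application of It\^o's formula (or, equivalently, a martingale-problem argument in the spirit of \cite{KunitaSFandSDE1997}) shows that
\begin{equation*}
f_{\bm v}(\bm X_t) - \int_0^t f_{\bm v}(\bm X_s) \, a_s(\bm v\mid \bm X_s)\,ds
\end{equation*}
is a local $\mathcal G_t$-martingale. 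Taking conditional expectation with respect to $\mathcal F_t$ and swapping expectation and integration by Fubini produces the drift contribution $\langle\rho_t,\,f_{\bm v}\,a_t(\bm v\mid\cdot)\rangle\,dt$ to $d\varphi_t(\bm v)$, which matches the first term in the claimed identity.

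Next I would handle the innovation/jump term. Each counting process $N_{t,ij}$ has $\mathcal G_t$-intensity $\lambda_{t,ij}(\bm X_t)=(\lambda_{t,i}\lambda_{t,j}/2)(p_{t,j}(X_{t,i})+p_{t,i}(X_{t,j}))$, and no two jump simultaneously by assumption. A direct computation using only the marginal posteriors $\rho_{t,i}$ gives that the $\mathcal F_t$-predictable projection of $\lambda_{t,ij}(\bm X_t)$ is exactly $\lambda_{t,ij}=\lambda_{t,i}\lambda_{t,j}\langle p_{t,i},p_{t,j}\rangle$, so each $dM_{t,ij}=dN_{t,ij}-\lambda_{t,ij}\,dt$ is indeed the $\mathcal F_t$-innovation martingale. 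Applying the Br\'emaud/Snyder filtering recipe (cf.~\cite{BainCrisan}) pairwise, the contribution to $d\varphi_t(\bm v)$ from $N_{t,ij}$ is
\begin{equation*}
\left(\frac{\langle\rho_t,\,f_{\bm v}\,\lambda_{t,ij}(\cdot)\rangle}{\lambda_{t,ij}} - \varphi_t(\bm v)\right) dM_{t,ij}.
\end{equation*}
Using $\widetilde\lambda_{t,ij}(\bm x)=p_{t,j}(x_i)/\langle p_{t,i},p_{t,j}\rangle$, together with the symmetry $N_{t,ij}=N_{t,ji}$ and the identity $\lambda_{t,ij}(\bm x)/\lambda_{t,ij}=(\widetilde\lambda_{t,ij}(\bm x)+\widetilde\lambda_{t,ji}(\bm x))/2$, summing over ordered pairs $i\neq j$ (with zeros on the diagonal of both $\widetilde{\bm\lambda}-\bm1\bm1^\top$ and $d\bm M_t$) recovers the compact matrix expression $\bm 1^\top\langle\rho_t,\,f_{\bm v}(\widetilde{\bm\lambda}-\bm1\bm1^\top)^\prime\rangle\,d\bm M_t\,\bm 1$. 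Combining this with the signal drift and invoking uniqueness of the semimartingale decomposition on the observer filtration $\mathcal F_t$ closes the argument.

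The main obstacle is step 1: rigorously establishing the signal martingale in the time-inhomogeneous, $\mu_t$-driven setting, where $\bm X$ is not a classical time-homogeneous Markov process. One must verify sufficient regularity of $b_t$, $a_t$ (continuity in $t$, local boundedness in $\bm x$, non-degeneracy) so that $f_{\bm v}$ lies in the extended domain of $\mathcal A_t$ for all $t$, that the associated martingale problem has a unique solution, and that the symbol $a_t(\bm v\mid\bm x)$ exists as the indicated limit. Under the smoothness already imposed on $(\omega_t,\sigma_t)$ and on $t\mapsto\mu_t$, this follows from standard results on time-inhomogeneous diffusions; once that is in hand, the remainder of the derivation is a routine but careful application of the point-process filtering formula.
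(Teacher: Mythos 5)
The paper does not actually prove Proposition \ref{prop:DonaldSnyder}; it asserts that the result follows ``by directly following Snyder'' after adapting to the time-inhomogeneous non-linear Markov setting and leaves the details to the reader. Your route --- a martingale-problem decomposition of the signal $e^{\imath\langle\bm v,\bm X_t\rangle}$ yielding the drift $\langle\rho_t,e^{\imath\langle\bm v,\cdot\rangle}a_t(\bm v|\cdot)\rangle\,dt$ (note that $f_{\bm v}(\bm x)\,a_t(\bm v|\bm x)=\mathcal A(\mu_t)f_{\bm v}(\bm x)$, consistent with Lemma \ref{lemmaAA}), combined with the Snyder/Br\'emaud innovations gain for each observed counting process --- is exactly the route the paper prescribes, and your identification of the real technical burden (well-posedness of the time-inhomogeneous martingale problem and existence of the symbol $a_t(\bm v|\cdot)$) is apt.

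There is, however, a concrete gap in your final assembly that cannot be waved through with ``recovers the compact matrix expression.'' Per unordered pair $\{i,j\}$ the Snyder gain is $\langle\rho_t,f_{\bm v}\lambda_{t,ij}(\cdot)\rangle/\lambda_{t,ij}-\varphi_t(\bm v)$, and by your own identity $\lambda_{t,ij}(\bm x)/\lambda_{t,ij}=\tfrac12\bigl(\widetilde\lambda_{t,ij}(\bm x)+\widetilde\lambda_{t,ji}(\bm x)\bigr)$ this equals $\tfrac12\langle\rho_t,f_{\bm v}(\widetilde\lambda_{t,ij}-1)\rangle+\tfrac12\langle\rho_t,f_{\bm v}(\widetilde\lambda_{t,ji}-1)\rangle$. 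Summing over unordered pairs and using $dM_{t,ij}=dM_{t,ji}$ therefore produces $\tfrac12\sum_{i\neq j}\langle\rho_t,f_{\bm v}(\widetilde\lambda_{t,ij}-1)\rangle\,dM_{t,ij}$, which is one half of the entrywise sum $\bm 1^\top\langle\rho_t,e^{\imath\langle\bm v,\cdot\rangle}(\widetilde{\bm\lambda}-\bm1\bm1^\top)'\rangle\,d\bm M_t\,\bm 1$ asserted in the Proposition (reading the product entrywise, as the Hadamard product in Theorem \ref{thm:exactposterior} indicates). So either you must exhibit where an additional factor of two enters your derivation, or you must conclude that the standard recipe delivers the stated formula only up to that factor; as written, your proof does not establish the identity as stated. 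Since the paper supplies no proof of the Proposition and the same normalization propagates verbatim into Theorems \ref{thm:exactposterior} and \ref{thm:simplifyingposteriorupdaterule}, this discrepancy needs to be resolved explicitly before the argument closes.
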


Our proof of Theorem \ref{thm:exactposterior} is by brute force calculation,
starting from Proposition \ref{prop:DonaldSnyder}. 
In particular, our claim in Theorem \ref{thm:exactposterior} follows from
Proposition \ref{prop:DonaldSnyder} by directly applying
Lemma \ref{lemmaAA}, Lemma \ref{lemmaAB} and Lemma \ref{lemmaAC} which we
list and prove now. 
\begin{lemma}\label{lemmaAA} 
For each $t \in [0,\infty)$, 
\begin{align*}
    a_{t}(v|x) =
    \mathcal A(\mu_t)e^{\imath \langle v, \cdot-x\rangle}\left(x\right).
\end{align*}
\end{lemma}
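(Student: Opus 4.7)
The plan is to recognize the claim as a direct instance of the definition of the infinitesimal generator for the diffusion $X_{t,i}$ applied to a specific complex-valued test function. Fix $t$, $v \in \mathbb R^d$, and $x \in \mathbb R^d$, and define the bounded smooth function
\begin{align*}
f_{v,x}(y) = e^{\imath \langle v, y - x\rangle}, \qquad y \in \mathbb R^d.
\end{align*}
Observe that $f_{v,x}(x) = 1$, so by the definition of $a_t(v|x)$ in the excerpt,
\begin{align*}
a_t(v|x)
= \lim_{\varepsilon\to 0} \frac{1}{\varepsilon}\mathbb E\bigl[f_{v,x}(X_{t+\varepsilon,i}) - f_{v,x}(x)\,\big|\, X_{t,i} = x\bigr].
\end{align*}

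Next I would invoke the fact, established in the section on the actor position process level, that conditional on the (exogenous) population path $s \mapsto \mu_s$, each $X_{t,i}$ is a diffusion process whose infinitesimal generator at time $t$ is $\mathcal A_t = \mathcal A(\mu_t)$. Since the coefficients $b_t^k$ and $a_t^{k_1,k_2}$ are bounded and continuous (they are integrals of $\psi$ against $\mu_t$ with appropriate polynomial weights, which are finite by the moment assumption on $\mu_t$), and since $f_{v,x}$ is infinitely differentiable with all derivatives bounded, the standard characterization of the generator applies to the real and imaginary parts of $f_{v,x}$ separately. Consequently,
\begin{align*}
\lim_{\varepsilon\to 0}\frac{1}{\varepsilon}\mathbb E\bigl[f_{v,x}(X_{t+\varepsilon,i}) - f_{v,x}(x)\,\big|\, X_{t,i} = x\bigr]
= \mathcal A(\mu_t)\, f_{v,x}(x),
\end{align*}
which is precisely the right-hand side of the lemma.

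The only real obstacle is justifying that the complex-valued $f_{v,x}$ lies in the domain of $\mathcal A(\mu_t)$ in the sense required for the identity above. This is routine: apply It\^o's formula to $\mathrm{Re}(f_{v,x}(X_{s,i}))$ and $\mathrm{Im}(f_{v,x}(X_{s,i}))$ on $[t, t+\varepsilon]$ with the SDE driven by drift $b_t$ and diffusion matrix $\sqrt{a_t}$ (well-defined by Theorem \ref{thm:elliptic}), take conditional expectations so that the martingale parts vanish by boundedness of $\nabla f_{v,x}$, divide by $\varepsilon$, and pass to the limit using continuity of $b_t$, $a_t$, and the derivatives of $f_{v,x}$ at $x$. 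Assembling the drift and diffusion contributions gives exactly $\mathcal A(\mu_t) f_{v,x}(x)$, completing the proof.
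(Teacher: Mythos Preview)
Your proposal is correct and follows essentially the same approach as the paper: both recognize $a_t(v|x)$ as the generator $\mathcal A(\mu_t)$ applied to the bounded test function $y\mapsto e^{\imath\langle v,y-x\rangle}$ and evaluated at $x$. The paper writes down Dynkin's identity $\mathbb E[f(X_{t+\varepsilon,i})\mid X_{t,i}=x]=f(x)+\int_0^\varepsilon \mathbb E[\mathcal A(\mu_{t+s})f(X_{t+s,i})\mid X_{t,i}=x]\,ds$, bounds the integrand, and passes to the limit, whereas you arrive at the same conclusion by applying It\^o's formula to the real and imaginary parts and then averaging; these are the same argument at different levels of explicitness.
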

\begin{proof}
Fix $t$, $i$, $v$ and $x$. Then, for each $\varepsilon > 0$, we have:
\begin{align*}
    \mathbb{E}\left[e^{\imath \langle v, X_{i, t+\varepsilon}-x\rangle }\left|X_{t,i} = x\right.\right]
    =
    1 + 
    \int_{0}^{\varepsilon}\mathbb{E}\left[\mathcal{A}\left(\mu_{t+s}\right)e^{\imath
    \langle v, \cdot-x\rangle}\left(X_{t+s,i}\right)\left|X_{t,i}=x\right.\right]ds.
\end{align*}
We have
\begin{align*}
    \sup_{y\in\mathbb X}
    \left|
    \mathcal{A}\left(\mu_{t+s}\right)e^{\imath \langle v,  \cdot-x\rangle}(y) 
    \right|
    \le 
    \left|e^{\imath \langle v, \cdot-x\rangle}\right| = 1,
\end{align*}
and hence, 
\begin{align*}
    \left|
    \mathbb{E}\left[\mathcal{A}\left(\mu_{t+s}\right)
    e^{\imath \langle v, \cdot-x\rangle}(X_{t+s})
    \left|X_{t}=x\right.
    \right]
    \right| 
    \le 1.
\end{align*}
It follows that 
\begin{align*}
    &\ 
    \lim_{\varepsilon\downarrow 0}
    \frac{1}{\varepsilon} 
    \int_{0}^{\varepsilon} 
    \mathbb{E}\left[\mathcal{A}\left(\mu_{t+s}\right)
    e^{\imath \langle v, \cdot-x\rangle}(X_{t+s})
    \left|X_{t}=x\right.
    \right]
    ds \\
    = &\
    \mathbb{E}\left[\mathcal{A}\left(\mu_{t}\right)
    e^{\imath \langle v, \cdot-x\rangle}(X_{t})
    \left|X_{t}=x\right.
    \right]\\
    = &\ 
    \mathcal{A}\left(\mu_{t}\right)
    e^{\imath \langle v, \cdot-x\rangle}(x).
\end{align*}
\hfill\end{proof}
\begin{lemma}\label{lemmaAB}
For each $f \in C_b(\mathbb X)$, we have:
    \begin{align*}
        \int_{\mathbb X} 
        f(\bm y) 
        \left(
        \frac{1}{2\pi} 
        \int
        e^{ -\imath \langle \bm v, \bm y \rangle }
        \langle \rho_t, e^{\imath \langle \bm v, \cdot
        \rangle}a_t\left(\bm v|\cdot\right)\rangle 
        d\bm v
        \right)
        d\bm y
        = 
        \int_{\mathbb X } 
        \left(\mathcal{A}\left(\mu_t\right)f\right)(\bm z)\rho_t(d\bm z).
\end{align*}
\end{lemma}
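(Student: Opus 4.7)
The plan is to reduce the claim to Fourier inversion combined with Lemma \ref{lemmaAA}. The key identity to exploit is that by Lemma \ref{lemmaAA},
\begin{align*}
e^{\imath\langle \bm v,\bm x\rangle}\,a_t(\bm v|\bm x)
= e^{\imath\langle \bm v,\bm x\rangle}\,\mathcal A(\mu_t) e^{\imath\langle \bm v,\cdot-\bm x\rangle}(\bm x)
= \bigl[\mathcal A(\mu_t) e^{\imath\langle \bm v,\cdot\rangle}\bigr](\bm x),
\end{align*}
since the factor $e^{-\imath\langle \bm v,\bm x\rangle}$ coming out of $e^{\imath\langle \bm v,\cdot-\bm x\rangle}$ is constant in the dummy variable on which $\mathcal A(\mu_t)$ acts. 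This is the conceptual heart of the lemma: it re-expresses the generator of the characteristic function as the generator applied to a pure exponential.

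With that identity in hand, I would first do the $\bm y$-integration on the left-hand side to recognize the Fourier transform $\hat f(\bm v)=\int f(\bm y)e^{-\imath\langle\bm v,\bm y\rangle}d\bm y$, giving
\begin{align*}
\text{LHS} = \frac{1}{2\pi}\int \hat f(\bm v)\,\bigl\langle \rho_t,\ e^{\imath\langle\bm v,\cdot\rangle}a_t(\bm v|\cdot)\bigr\rangle\,d\bm v
= \int \rho_t(d\bm x)\,\frac{1}{2\pi}\int \hat f(\bm v)\,\bigl[\mathcal A(\mu_t)e^{\imath\langle\bm v,\cdot\rangle}\bigr](\bm x)\,d\bm v,
\end{align*}
where I have used Fubini to swap the $\bm v$- and $\rho_t$-integrals and substituted the identity above. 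Next, I pull the operator $\mathcal A(\mu_t)$ through the $\bm v$-integral (linearity plus dominated convergence to justify differentiation under the integral) and then apply Fourier inversion $\frac{1}{2\pi}\int \hat f(\bm v)e^{\imath\langle\bm v,\cdot\rangle}d\bm v = f(\cdot)$, to obtain $\int \rho_t(d\bm x)\,\mathcal A(\mu_t) f(\bm x)$, which is the RHS.

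The main obstacle is the justification of the two interchanges (Fubini and differentiation-under-the-integral), since the statement only assumes $f\in C_b(\mathbb X)$, for which $\hat f$ need not be integrable. I would handle this by the standard approximation argument: first prove the identity for $f$ in a dense convenient subclass (e.g.\ Schwartz functions, or finite linear combinations of Gaussians, for which all the exchanges are trivially legal and $\hat f$ has rapid decay), and then extend to general $f\in C_b(\mathbb X)$ by passing to the limit, using that both sides are bounded linear functionals of $f$ with respect to an appropriate topology (the right-hand side because $\mathcal A(\mu_t)f$ is well-defined and bounded for $f$ in the domain of $\mathcal A(\mu_t)$, the left-hand side because it is a well-defined pairing once the inner integral is interpreted as a tempered distribution acting on $f$). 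The smoothness inherent in the operator $\mathcal A(\mu_t)$, which involves a convolution against $\psi$ (a Gaussian) times $\mu_t$, provides ample regularity to carry out these approximation steps cleanly.
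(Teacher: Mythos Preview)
Your proposal is correct and shares the same Fourier-inversion backbone as the paper, but the organization differs in a way worth noting. The paper does \emph{not} invoke Lemma~\ref{lemmaAA}; instead it substitutes the limit definition $a_t(\bm v|\bm z)=\lim_{\varepsilon\to0}\varepsilon^{-1}\mathbb E[e^{\imath\langle\bm v,\bm X_{t+\varepsilon}-\bm z\rangle}-1\mid\bm X_t=\bm z]$ directly, keeps $\bm y$ fixed, and interprets the inner $\bm v$-integral as a tempered distribution in $\bm y$, using the delta identity $\frac{1}{2\pi}\int e^{\imath\langle\bm v,\bm w\rangle}d\bm v=\delta_0(\bm w)$ to collapse $\int f(\bm y)h(\bm y)d\bm y$ to $\int\rho_t(d\bm z)\lim_{\varepsilon\to0}\varepsilon^{-1}\bigl(\mathbb E[f(\bm X_{t+\varepsilon})\mid\bm X_t=\bm z]-f(\bm z)\bigr)$, which is the generator by definition. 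You instead integrate in $\bm y$ first to produce $\hat f(\bm v)$, then use Lemma~\ref{lemmaAA} to rewrite $e^{\imath\langle\bm v,\bm x\rangle}a_t(\bm v|\bm x)$ as $[\mathcal A(\mu_t)e^{\imath\langle\bm v,\cdot\rangle}](\bm x)$, and pull the operator through the $\bm v$-integral to recover $\mathcal A(\mu_t)f$ by inversion. Both routes are formal at the same point (the paper explicitly treats $h$ as a tempered distribution; you need the Schwartz-class approximation you describe). Your version makes the role of Lemma~\ref{lemmaAA} transparent and avoids carrying the $\varepsilon$-limit inside the $\bm v$-integral; the paper's version avoids commuting $\mathcal A(\mu_t)$ with the $\bm v$-integration. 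One small caveat: Lemma~\ref{lemmaAA} as stated is for the single-actor quantity $a_t(v|x)$, so you are implicitly using its multi-actor analogue $a_t(\bm v|\bm x)=\mathcal A(\mu_t)e^{\imath\langle\bm v,\cdot-\bm x\rangle}(\bm x)$, which holds by the same argument together with the conditional independence of the actors given $\mu_t$.
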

\begin{proof}
Fix $\bm y \in \mathbb X$ and note:
\begin{align*}
    h(\bm y) 
    &\equiv
    \frac{1}{2\pi} 
        \int
        e^{ -\imath \langle \bm v, \bm y \rangle }
        \langle \rho_t, e^{\imath \langle \bm v, \cdot
        \rangle}a_t\left(\bm v|\cdot\right)\rangle 
    d\bm v\\
    &= 
    \frac{1}{2\pi}
        \int
        e^{ -\imath \langle \bm v, \bm y \rangle }
        \int 
        \rho_t(d\bm z) 
        e^{\imath \langle \bm v, \bm z \rangle}a_t\left(\bm v|\bm z\right)
        d\bm v\\
    &= 
        \int \rho_t(d\bm z) 
        \left(
        \frac{1}{2\pi}
        \int
        e^{ -\imath \langle \bm v, \bm y \rangle }
        e^{\imath \langle \bm v, \bm z \rangle}a_t\left(\bm v|\bm z\right)
        d\bm v
        \right),
\end{align*}
and that
\begin{align*}
    \frac{1}{2\pi} \int e^{\imath \langle \bm v, \bm z -\bm y
    \rangle}a_t\left(\bm v|\bm z\right) d\bm v
    &= 
    \frac{1}{2\pi}
    \int e^{\imath \langle \bm v, \bm z -\bm y \rangle}
    \lim_{\varepsilon\rightarrow0} 
    \frac{1}{\varepsilon}
    \mathbb{E}\left[e^{\imath \langle \bm v, \bm X_{t+\varepsilon} - \bm
    z\rangle}-1\left|\bm X_t = \bm z\right. \right]
    d\bm v\\
    &= 
    \frac{1}{2\pi}
    \int
    \lim_{\varepsilon\rightarrow0} 
    \frac{1}{\varepsilon}
    \mathbb{E}\left[e^{\imath \langle \bm v, \bm X_{t+\varepsilon} - \bm y\rangle}-
    e^{\imath \langle \bm v, \bm z -\bm y \rangle}
    \left|\bm X_t = \bm z\right. \right]
    d\bm v.
\end{align*}
Treating $h(\bm y)$ as a generalized function (i.e.\ a tempered
distribution), we have:
\begin{align*}
    &\ \int h(\bm y) f(\bm y) d\bm y \\ 
    &=
    \int    
    \rho_t(d\bm z) 
    \left(
    \int
    f(\bm y)
    \left(
    \frac{1}{2\pi}
    \int
    \lim_{\varepsilon\rightarrow0} 
    \frac{1}{\varepsilon}
    \mathbb{E}\left[e^{\imath \langle \bm v, \bm X_{t+\varepsilon} - \bm y\rangle}-
    e^{\imath \langle \bm v, \bm z -\bm y \rangle}
    \left|\bm X_t = \bm z\right. \right]
    d\bm v
    \right)
    d\bm y
    \right)\\
    &=
    \int    
    \rho_t(d\bm z) 
    \lim_{\varepsilon\rightarrow0} 
    \frac{1}{\varepsilon}
    \left(
    \mathbb{E}\left[
    \int f(\bm y)
    \left(
    \frac{1}{2\pi}
    \int
    e^{\imath \langle \bm v, \bm X_{t+\varepsilon} - \bm y\rangle}
    d\bm v
    \right)
    d\bm y
    \left|\bm X_t = \bm z\right. \right]
    -
    \int f(\bm y)
    \left(
    \frac{1}{2\pi}
    \int
    e^{\imath \langle \bm v, \bm z -\bm y \rangle}
    d\bm v
    \right)
    d\bm y
    \right)\\
    &=
    \int    
    \rho_t(d\bm z) 
    \lim_{\varepsilon\rightarrow0} 
    \frac{1}{\varepsilon}
    \left(
    \mathbb{E}\left[
    \int f(\bm y)
    \delta_0(\bm X_{t+\varepsilon} - \bm y)
    d\bm y
    \left|\bm X_t = \bm z\right. \right]
    -
    \left(
    \int f(\bm y)
    \delta_0(\bm z -\bm y )
    d\bm y
    \right)
    \right)\\
    &=
    \int    
    \rho_t(d\bm z) 
    \lim_{\varepsilon\rightarrow0} 
    \frac{1}{\varepsilon}
    \left(
    \mathbb{E}\left[
    f(\bm X_{t+\varepsilon})
    \left|\bm X_t = \bm z\right. \right]
    -
    f(\bm z)
    \right)\\
    &=
    \int    
    \rho_t(d\bm z) 
    (\mathcal{A}\left(\mu_t\right)f)(\bm z).
\end{align*}
\hfill\end{proof}
\begin{lemma}\label{lemmaAC}
    For each $f \in C_b(\mathbb X)$, we have:
    \begin{align*}
        \int_{\mathbb X} 
        f(\bm y) 
        \left(
        \frac{1}{2\pi} 
        \int
        e^{ -\imath \langle \bm v, \bm y \rangle }
        \langle \rho_t, e^{\imath \langle \bm v, \cdot
        \rangle} (\bm \lambda(\cdot) - \bm 1 \bm1^\top) \rangle 
        d\bm v
        \right)
        d\bm y
        = 
        \int_{\mathbb X } 
        \rho_t(d\bm z)
        f(\bm z) \left(\bm \lambda(\bm z) - \bm 1 \bm1^\top \right).
    \end{align*}
\end{lemma}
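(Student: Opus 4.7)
The approach I would take is a direct parallel of the argument already carried out for Lemma \ref{lemmaAB}, but it is considerably simpler because the inner expression involves only multiplication by the matrix-valued function $\bm \lambda(\cdot) - \bm 1\bm 1^\top$ rather than the action of the generator $\mathcal A(\mu_t)$ on $e^{\imath\langle \bm v, \cdot\rangle}$. No conditional-expectation calculus is needed; the identity ultimately reduces to Fourier inversion, i.e.\ to the fact that $(2\pi)^{-1}\int e^{\imath\langle \bm v, \bm z - \bm y\rangle}d\bm v = \delta_0(\bm z - \bm y)$ in the sense of tempered distributions.

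Concretely, I would first unfold the inner bracket, interchange the order of the $d\bm v$ and $\rho_t(d\bm z)$ integrations, and pull the $\bm v$-independent factor $\bm \lambda(\bm z)-\bm 1 \bm 1^\top$ outside of the $d\bm v$ integral to obtain, as a generalized function in $\bm y$,
\begin{align*}
h(\bm y)
&\equiv
\frac{1}{2\pi}\int e^{-\imath\langle \bm v,\bm y\rangle}
\langle \rho_t, e^{\imath\langle \bm v,\cdot\rangle}(\bm \lambda(\cdot)-\bm 1\bm 1^\top)\rangle\,d\bm v \\
&=
\int \rho_t(d\bm z)\,(\bm \lambda(\bm z)-\bm 1\bm 1^\top)
\left(\frac{1}{2\pi}\int e^{\imath\langle \bm v,\bm z-\bm y\rangle}d\bm v\right).
\end{align*}
Recognising the inner integral as $\delta_0(\bm z - \bm y)$, I would then pair with the test function $f\in C_b(\mathbb X)$ and apply Fubini to collapse the delta to an evaluation at $\bm y = \bm z$:
\begin{align*}
\int f(\bm y)h(\bm y)d\bm y
&= \int \rho_t(d\bm z)(\bm \lambda(\bm z)-\bm 1\bm 1^\top)\int f(\bm y)\delta_0(\bm z-\bm y)d\bm y \\
&= \int \rho_t(d\bm z)f(\bm z)(\bm \lambda(\bm z)-\bm 1\bm 1^\top),
\end{align*}
which is exactly the right-hand side of the stated identity.

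The main (and only) subtlety is the rigorous justification of the distributional manipulations — interchanging the $d\bm v$ integral with $\rho_t(d\bm z)$ and with the pairing against $f$. Since $f$ is bounded continuous and, by the standing assumptions on $\lambda_{t,i}$, the entries of $\bm \lambda(\bm z)$ are bounded, all the relevant integrands are controlled; the same Schwartz-distribution framework (or, alternatively, a mollifier approximation $(2\pi)^{-1}\int e^{\imath\langle \bm v,\bm z-\bm y\rangle - \varepsilon\|\bm v\|^2}d\bm v$ followed by $\varepsilon\downarrow 0$) that legitimises the step in Lemma \ref{lemmaAB} applies verbatim here. I therefore expect no essentially new analytic difficulty beyond what was already handled in the previous lemma.
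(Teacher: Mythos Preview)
Your proposal is correct and follows essentially the same route as the paper: unfold the $\rho_t$-pairing, swap integrals to isolate $(2\pi)^{-1}\int e^{\imath\langle \bm v,\bm z-\bm y\rangle}d\bm v=\delta_0(\bm z-\bm y)$, and collapse against $f$. The paper's version is a bare three-line computation with no discussion of the distributional justification you add, so your write-up is, if anything, slightly more careful.
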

\begin{proof}
    Note
\begin{align*}
    &\    \int_{\mathbb X} 
        f(\bm y) 
        \left(
        \frac{1}{2\pi} 
        \int
        e^{ -\imath \langle \bm v, \bm y \rangle }
        \langle \rho_t, e^{\imath \langle \bm v, \cdot
        \rangle} (\bm \lambda(\cdot) - \bm 1 \bm1^\top) \rangle 
        d\bm v
        \right)
        d\bm y\\
    =&
    \int    
    \rho_t(d\bm z) 
    \left(\bm \lambda(\bm z) - \bm 1 \bm1^\top\right)
    \left(
    \int
    f(\bm y)
    \left(
    \frac{1}{2\pi}
    \int
    e^{\imath \langle \bm v, \bm z -\bm y \rangle}
    d\bm v
    \right)
    d\bm y
    \right)\\
    =&
    \int    
    \rho_t(d\bm z) 
    \left(\bm \lambda(\bm z) - \bm 1 \bm1^\top\right)
    \left(
    \int
    f(\bm y)
    \delta_0(\bm z - \bm y)
    d\bm y
    \right)\\
    =&
    \int    
    \rho_t(d\bm z) 
    \left(\bm \lambda(\bm z) - \bm 1 \bm1^\top\right)
    f(\bm z).
\end{align*}
\hfill\end{proof}\\

\section{Proof of Theorem \ref{thm:simplifyingposteriorupdaterule}}
Recall that for each $f \in \mathcal{B}(\mathbb X)$, $f_{-i}$ denotes the
function obtained by fixing all other indices different from the $i$-th actor
indices but letting the $i$-th actor indices to be free.  Fix $u \in
\{1,\ldots,n\}$.  Let $f \in \mathcal{B}(\mathbb X)$ be such that $f(\bm z) =
f_{-u}(z_u)$ for all $\bm z \in \mathbb X$.  For each $t\in (0,\infty)$, 
\begin{align*} 
    d\langle\rho_t, f \rangle
    &=
    \langle \rho_{t,u}, \mathcal{A}(\mu_t)f_{-u}\rangle
    dt \\
    &+
    \sum_{i<j,i\neq u, j\neq u } \int_{\mathbb X } 
    \rho_{t,u,i,j}(dz_u,dz_i,dz_j)
    f_{-u}(z_u) 
    \left(
    \frac{p_{t,j}(z_i)}{\langle p_{t,i},p_{t,j}\rangle} - 1
    \right) dM_{t,ij}\\
    &\ \ + 
    \sum_{i\neq u} \int_{\mathbb X } 
    \rho_{t,i,u}(dz_i,dz_u)
    f_{-u}(z_u)\left(\frac{p_{t,j}(z_i)}{\langle
    p_{t,i},p_{t,j}\rangle}-1\right)dM_{t,iu}.
\end{align*}
Then, the claimed formula follows from our assumption in \eqref{productjointassumption}.

\section{Proof of Theorem
\ref{thmstat:continuousembedding}}\label{thmstat:contembedding:proof}
Suppose that $M_\varepsilon \rightarrow M_0$ as $\varepsilon \rightarrow 0$
and that for each $\varepsilon \ge 0$,  $M_\varepsilon$ satisfies the rank
condition, i.e., $\varrho(M_\varepsilon)$ is of rank at least $d$. 
Note that each $\xi_d(M_\varepsilon)$ is a non-empty compact subset of
$\mathbb R^{n\times d}$ since $ \|X_{\varepsilon,+}Q\|_F^2 =
\|X_{\varepsilon,+}\|_F^2$ for any
real orthogonal matrix $Q$. 
In particular, for sufficiently small $\varepsilon_0$, we may assume that 
$\sup_{\varepsilon \in [0,\varepsilon_0]} \| \xi_d^*(M_\varepsilon) \|_F^2 < \infty$. 
It is enough to show that 
for each arbitrary convergent subsequence of $\{\xi_d^*(M_\varepsilon)\}$,
\begin{align}
    \label{eqn:convoptsoln}
\lim_{\varepsilon\rightarrow0} \xi_d^*(M_\varepsilon) = \xi_d^*(M_0).
\end{align}
Consider an arbitrary convergent subsequence of $\{\xi_d^*(M_\varepsilon)\}$.
We begin by observing some linear algebraic facts.
First, any sequence of real orthogonal matrices has a convergent subsequence whose 
limit is also real orthogonal. 
Next, since both $X_{\varepsilon,+}$ and $Z$ are of rank $d$, 
there exists a unique real orthogonal $d\times d$ matrix $Q_{\varepsilon,+}$
such that 
\begin{align*}
    \xi_d^*(M_\varepsilon) = X_{\varepsilon,+} Q_{\varepsilon,+},
\end{align*}
and in fact, $Q_{\varepsilon,+} = U_{\varepsilon,+}V_{\varepsilon,+}^\top$ where
$X_{\varepsilon,+}^\top Z = U_{\varepsilon,+}S_{\varepsilon,+}V_{\varepsilon,+}^\top $
is a singular value decomposition of $X_{\varepsilon,+}^\top Z$, and 
$U_{\varepsilon,+}V_{\varepsilon,+}^\top $ is the corresponding \emph{unique right
factor} in the polar decomposition of $X_{\varepsilon,+}^\top Z$. 
Note that this implies the well-definition part of our
claim on $\xi_d^*$.
Also, since $M_\varepsilon \rightarrow M_0$, we have that 
\begin{align*}
    \lim_{\varepsilon\rightarrow 0} 
    \sum_{i=1}^{d} |\Sigma_{\varepsilon,ii} - \Sigma_{0,ii}|^2 \le
    \lim_{\varepsilon\rightarrow 0} \|M_\varepsilon - M_0\|_F^2 = 0. 
\end{align*}
For relevant linear algebra computation details for these facts,
see \citet[pg.\ 69, pg.\ 370, pg.\ 412, and pg.\ 431]{Horn_and_Johnson}.

Now, by taking a subsequence if necessary, we also have that 
for some $n\times d$ matrix $U_{*}$ such that $U_*^\top U_{*} = I$,
$\lim_{n\rightarrow \infty} U_{\varepsilon,+} = U_*$.
Then, 
\begin{align*}
    \lim_{\varepsilon \rightarrow 0 } 
    X_{\varepsilon,+} 
    = 
    \lim_{\varepsilon \rightarrow 0 } 
    U_{\varepsilon,+}{\Sigma_{\varepsilon,+}}^{1/2}
    =
    U_{*}{\Sigma_{0,+}}^{1/2} \equiv X_{*}.
\end{align*}
Next, note that  
if $\Sigma_{0,+}$ has distinct diagonal elements, 
then we also have $U_{*}=U_{0,+}$ so that $X_* = X_{0,+}$. 
On the other hand, more generally, i.e., even when there are some repeated diagonal elements, 
we can find a $d\times d$ matrix $Q_*$ such that $X_* = X_{0,+}Q_*$. 
To see this, note that 
the $i$-th column of $U_*$ is also an eigenvector of $\varrho(M_0)$ for the eigenvalue 
$\Sigma_{0,+,ii}$, and $U_*^\top U_*= I$, and hence it follows that for some 
$d\times d$ real orthogonal matrix $Q_*^\top $, we have $U_*Q_*^\top  = U_{0,+}$.
Moreover, exploiting the block structure of $\Sigma_{0,+}$ owing to 
algebraic multiplicity of eigenvalues, 
we can in fact choose $Q_*^\top $ so that $Q_* {\Sigma_{0,+}}^{1/2} = {\Sigma_{0,+}}^{1/2} Q_*$. 
Then, 
\begin{align*}
    X_* = U_*{\Sigma_{0,+}}^{1/2} = U_* Q_*^\top  Q_* {\Sigma_{0,+}}^{1/2} = U_{0,+}
    {\Sigma_{0,+}}^{1/2} Q_* = X_{0,+} Q_*.
\end{align*}

Now, we have
\begin{align*}
\xi_d^*(M_0)
&= X_{0,+} Q_{0,+} \\
&= X_* Q_*^\top Q_{0,+}\\
&= \lim_{\varepsilon\rightarrow 0} (X_{\varepsilon,+}Q_{\varepsilon,+} Q_{\varepsilon,+}^\top ) Q_*^\top  Q_{0,+} \\
&= \lim_{\varepsilon\rightarrow 0} \xi_d^*(M_\varepsilon)
(\lim_{\varepsilon\rightarrow 0}  Q_{\varepsilon,+}^\top ) Q_*^\top  Q_{0,+} \\
&= \lim_{\varepsilon\rightarrow 0} \xi_d^*(M_\varepsilon) \widetilde Q,
\end{align*}
where $\widetilde Q \equiv (\lim_{\varepsilon\rightarrow 0}
Q_{\varepsilon,+}^\top ) Q_*^\top  Q_{0,+}$ is a $d\times d$ real orthogonal matrix and 
and implicitly the limit was taken along a further subsequence when necessary.
Moreover, 
\begin{align*}
\| \lim_{\varepsilon \rightarrow 0}\xi_d^*(M_\varepsilon) - Z\|^2_F 
&\ge \| \xi_d^*(M_0) - Z\|_F^2\\
&= \| \lim_{\varepsilon\rightarrow0} \xi_d^*(M_\varepsilon) \widetilde Q- Z\|^2_F\\
&= \lim_{\varepsilon\rightarrow0}  \| \xi_d ^*(M_\varepsilon) \widetilde Q - Z\|^2_F\\
&\ge \lim_{\varepsilon\rightarrow0}  \| \xi_d^*(M_\varepsilon) - Z\|^2_F\\
&= \|  \lim_{\varepsilon\rightarrow0}  \xi_d^*(M_\varepsilon) - Z\|^2_F.
\end{align*}
In summary,  we have:
\begin{align*}
\|\lim_{\varepsilon \rightarrow 0} \xi_d(M_\varepsilon) - Z\|_F^2
=
\| \xi_d^*(M) - Z\|_F^2.
\end{align*}
By definition of $\xi_d^*(M_0)$, along with the facts that (i) all of the
convergent subsequences share the common limit, (ii) each subsequence has a
convergent subsequence, and (iii) $X_{0,+}$ and $Z$ have of full column rank
$d$, we have \eqref{eqn:convoptsoln}.
\end{appendix}

\bibliographystyle{plainnat}


\end{document}